\renewcommand*{\backrefalt}[4]{%
    \ifcase #1 \footnotesize{(Not cited.)}%
    \or        \footnotesize{(Cited on page~#2.)}%
    \else      \footnotesize{(Cited on pages~#2.)}%
    \fi}
\newtheorem{theorem}{Theorem}[section]
\newtheorem{lemma}[theorem]{Lemma}
\newtheorem{proposition}[theorem]{Proposition}
\newtheorem{definition}{Definition}[section]
\newtheorem{remark}[theorem]{Remark}
\newtheorem{assumption}[theorem]{Assumption}
\newcommand{\BB}{\mathbb{B}}
\newcommand{\EE}{\mathbb{E}}
\newcommand{\st}{\textnormal{s.t.}}
\newcommand{\sign}{\textnormal{sign}}
\newcommand{\x}{\mathbf x}
\newcommand{\y}{\mathbf y}
\newcommand{\g}{\mathbf g}
\newcommand{\z}{\mathbf z}
\newcommand{\w}{\mathbf w}
\newcommand{\su}{\mathbf u}
\newcommand{\sv}{\mathbf v}
\newcommand{\DCal}{\mathcal{D}}
\newcommand{\br}{\mathbb{R}}
\newcommand{\ba}{\begin{array}}
\newcommand{\ea}{\end{array}}
\newcommand{\ZCal}{\mathcal{Z}}
\newcommand{\PCal}{\mathcal{P}}
\newcommand{\RCal}{\mathcal{R}}
\newcommand{\WCal}{\mathcal{W}}
\newcommand{\XCal}{\mathcal{X}}
\newcommand{\PP}{\mathbb{P}}
\newcommand{\NCal}{\mathcal{N}}
\newcommand{\mb}{\mathbb}
\begin{document}


\begin{center}

{\bf{\LARGE{Fast Distributionally Robust Learning with 
\vspace*{.05in}
\\ Variance Reduced Min-Max Optimization}}}

\vspace*{.2in}
{\large{
\begin{tabular}{c}
Yaodong Yu$^{\diamond, \star}$ \quad 
Tianyi Lin$^{\diamond, \star}$ \quad  
Eric Mazumdar$^{\diamond, \star}$ \quad 
Michael I. Jordan$^{\diamond, \dagger}$ \\
\end{tabular}
}}

\vspace*{.2in}

\begin{tabular}{c}
Department of Electrical Engineering and Computer Sciences$^\diamond$\\
Department of Statistics$^\dagger$ \\ 
University of California, Berkeley
\end{tabular}

\vspace*{.2in}

\today

\vspace*{.2in}

\begin{abstract}
Distributionally robust supervised learning (DRSL) is emerging as a key paradigm for building reliable machine learning systems for real-world applications---reflecting the need for classifiers and predictive models that are robust to the distribution shifts that arise from phenomena such as selection bias or nonstationarity. Existing algorithms for solving Wasserstein DRSL--- one of the most popular DRSL frameworks based around robustness to perturbations in the Wasserstein distance---have serious limitations that limit their use in large-scale problems---in particular they involve solving complex subproblems and they fail to make use of stochastic gradients. We revisit Wasserstein DRSL through the lens of min-max optimization and derive scalable and efficiently implementable stochastic extra-gradient algorithms which provably achieve faster convergence rates than existing approaches. We demonstrate their effectiveness on synthetic and real data when compared to existing DRSL approaches. Key to our results is the use of variance reduction and random reshuffling to accelerate stochastic min-max optimization, the analysis of which may be of independent interest.
\end{abstract}
\let\thefootnote\relax\footnotetext{$^\star$ Yaodong Yu, Tianyi Lin and Eric Mazumdar contributed equally to this work.}
\end{center}

\section{Introduction}
With machine learning systems increasingly being deployed in real-world settings, there is an urgent need for machine learning approaches that can adapt to --- or are robust to---changes in the environment. Despite this, the dominant paradigm for supervised learning~\citep{Hastie-2009-Elements} remains that of empirical risk minimization (ERM)~\citep{Vapnik-2013-Nature}, wherein a model is trained by minimizing a loss over a fixed set of training data. A key assumption underlying this approach is that the training data is from the same distribution as the test data --- i.e., the distribution of the data does not change between training time and deployment.

Such assumptions are known to rarely hold in practice. Indeed, the distribution may change due to selection bias, nonstationarity in the environment~\citep{Quionero-2009-Dataset}, or even adversarial perturbations~\citep{Szegedy-2013-Intriguing, Madry-2018-Towards}, leaving machine learning models trained through ERM particularly susceptible to adversarial attacks~\citep{Szegedy-2013-Intriguing, Carlini-2017-Towards} or to degraded performance from distribution shifts. 

Distributionally robust supervised learning (DRSL) seeks to address this issue by explicitly optimizing for solutions that are robust to adversarial distribution shifts. Note that DRSL is not a new terminology in the context of machine learning but has been frequently used in the existing works (see~\citet{Hu-2018-Does} for an example). Common approaches to DRSL formulate training as a zero-sum game between the machine learning model and an adversary that perturbs the training data within a ball in either an $f$-divergence~\citep{Bagnell-2005-Robust, Ben-2013-Robust, Namkoong-2016-Stochastic, Namkoong-2017-Variance, Hu-2018-Does} or the Wasserstein distance~\citep{ Gao-2017-Distributional, Sinha-2018-Certifiable, Blanchet-2019-Quantifying, Blanchet-2019-Robust, Shafieezadeh-2019-Regularization}. For an overview of distributionally robust optimization and relevant applications, we refer the reader to a recent survey~\citep{Rahimian-2019-Distributionally}. 

Despite its appealing premise, DRSL suffers in practice from a relative dearth of algorithms for solving large constrained min-max optimization problems. Indeed, existing approaches to DRSL reformulate the problem as a non-smooth convex optimization problem~\citep{Wiesemann-2014-Distributionally,Abadeh-2015-Distributionally}, and then apply optimization algorithms which are deterministic or require solving complex sub-problems which quickly become computationally intractable as the number of data points increases~\citep{Li-2019-First,Li-2020-Fast,Liu-2017-Primal,Lee-2015-Distributionally, Luo-2019-Decomposition}. 

Recent years have seen an emerging interest in developing stochastic algorithms for solving large-scale DRSL. In particular, ~\citet{Namkoong-2016-Stochastic} proposed stochastic algorithms for solving convex reformulations of DRSL with $f$-divergences. For Wasserstein DRSL, ~\citet{Blanchet-2018-Optimal}  avoided the non-smooth reformulation and proved locally strong convexity of the dual problem under certain conditions, where standard stochastic gradient descent algorithms can be applied. In practice, Wasserstein distances have been found to be more favorable than $f$-divergences since they can be used to compare distributions with disjoint supports.  Indeed since $f$ divergences can only be used to compare distributions on the same support, the end result of DRSL with $f$-divergences is a classifier that it robust to reweighting of empirical distribution. Wasserstein DRSL, on the other hand, provides robustness to an adversary who perturbs the data in more general ways.

This paper focuses on providing stochastic gradient-based algorithms for DRSL with Wasserstein metrics (known as Wasserstein DRSL or WDRSL) which can handle large datasets common in supervised learning, while being provably faster than existing approaches. Key to our approach is that we refrain from reformulating the problem as a convex minimization problem, and instead reformulate it as a smooth, finite-sum, convex-concave min-max optimization problem which can be efficiently solved with specially constructed variance-reduced stochastic extragradient algorithms.

\paragraph{Contributions.} We develop algorithms for solving WDRSL problems with generalized linear losses---a class of problems that includes WDRSL with logistic losses. Our contributions can be summarized as follows. 
\begin{enumerate}
\item We reformulate WDRSL with generalized linear losses as a constrained, smooth, convex-concave min-max optimization problem where the constraint sets for the maximization and minimization problems are the $\ell_\infty$-ball and the second-order cone, respectively. We then define the notion of optimality for the min-max formulation (cf.\ Definition~\ref{Def:opt-min-max-pred}), and show that it is consistent with the standard one (Definition~\ref{Def:opt-robust-pred}).

\item We develop two simple stochastic algorithms for solving WDRSL, which we refer to as \textit{Stochastic Extragradient with Variance Reduction} (SEVR) and \textit{Stochastic Proximal Point with Random Reshuffling} (SPPRR). By a careful complexity analysis, we prove that these two algorithms are able to exploit the smoothness and finite-sum structure of our reformulation to converge faster than the existing off-the-shelf solvers and many deterministic first-order algorithmic frameworks for solving WDRSL considered in this paper.

\item We present experimental results that demonstrate  that our proposed algorithms are computationally efficient and significantly improve upon (i) existing methods for solving WDRSL, and (ii) other min-max optimization algorithms not tailored to the problem structure on real datasets.
\end{enumerate}
\paragraph{Organization.} The paper is organized as follows. In Section~\ref{sec:related} we place our work in the context of recent advances in stochastic min-max optimization. In Section~\ref{sec:prelim}, we provide the basic setup for the WDRSL problems. In Section~\ref{subsec:model}, we prove that the WDRSL problems can be reformulated as a smooth, convex-concave and finite-sum min-max optimization problem and specify the optimality criterion. In Section~\ref{subsec:alg_SEVR} and~\ref{subsec:alg_SPPRR}, we propose and analyze the SEVR and SPPRR algorithms for solving WDRSL problems and show that both algorithms achieve fast finite-time rates. In Section~\ref{sec:experiment}, we present empirical results on both synthetic and real datasets that illustrate that our algorithms outperform off-the-shelf solvers and other competing deterministic and stochastic first-order algorithms. Finally, we conclude this paper in Section~\ref{sec:conclu} and provide detailed proofs and further numerical results in the appendix.  

\paragraph{Notation.} We use bold lower-case letters (e.g., $\x$) to denote vectors, upper-case letters (e.g., $X$) to denote matrices, and calligraphic upper case letters (e.g., $\XCal$) to denote sets. The notion $[n]$ refers to $\{1, 2, \ldots, n\}$ for some integer $n > 0$. The notions $\EE[\cdot \mid \xi]$ and $\EE[\cdot]$ refer to the expectation conditioned on the random variable $\xi$ and over all the randomness. We let $a \wedge b$ denote $\min\{a, b\}$. For a differentiable function $f: \br^d \rightarrow \br$, we let $\nabla f(\x)$ denote the gradient of $f$ at $\x$. For a vector $\x \in \br^d$, we denote $\|\x\|$ as its $\ell_2$-norm and $\langle\x, \y\rangle = x^\top y$ as the inner product between two vectors $\x, \y \in \br^d$. For a constraint set $\XCal \subseteq \br^d$, we let $D_\XCal$ denote its diameter, where $D_\XCal = \max_{\x, \x'\in \XCal} \|\x-\x'\|$ and let $\PCal_\XCal$ denote the orthogonal projection onto this set. Lastly, given the accuracy $\epsilon > 0$, the notation $a=O(b(\epsilon))$ stands for the upper bound $a \leq C \cdot b(\epsilon)$ where a constant $C>0$ is independent of $\epsilon$. Similarly, $a=\tilde{O}(b(\epsilon))$ indicates the previous inequality may depend on the logarithmic function of $\epsilon$, and where a constant $C>0$ is also independent of $\epsilon$.

\section{Related Work}\label{sec:related}
Our work comes amid a surge of interest in gradient-based algorithms for a large class of emerging min-max optimization problems ~\citep{Daskalakis-2018-Training,Mazumdar-2020-On,Mokhtari-2020-Unified}. Despite this activity, existing stochastic-gradient algorithms\footnote{For brevity, we focus on stochastic algorithms and leave deterministic algorithms~\citep[e.g.,][]{Korpelevich-1976-Extragradient, Nemirovski-2004-Prox, Nesterov-2007-Dual, Chambolle-2011-First,Mazumdar-2019-On, Liang-2019-Interaction, Thekumparampil-2019-Efficient, Mokhtari-2020-Convergence, Lin-2020-Near} out of our discussion.}---e.g., stochastic mirror-prox and its variants~\citep{Nemirovski-2009-Robust, Juditsky-2011-Solving, Chen-2014-Optimal, Chen-2017-Accelerated}, single-call stochastic extragradient~\citep{Hsieh-2019-Convergence}, and epoch-wise stochastic gradient descent ascent~\citep{Yan-2020-Optimal}---do not fully exploit the underlying structure of WDRSL and suffer from slow rates of convergence. In particular, current stochastic min-max optimization algorithms for solving finite-sum convex-concave min-max optimization problems~\citep[e.g.,][]{Balamurugan-2016-Stochastic, Shi-2017-Bregman, Iusem-2017-Extragradient, Du-2019-Linear, Chavdarova-2019-Reducing, Cui-2019-Analysis, Yang-2020-Catalyst, Kotsalis-2020-Simple} often require large batches of data at each iteration or only achieve convergence guarantees if the objective function is strongly convex-concave. The objective function in WDRSL, however, is only convex-concave, motivating us to further tailor simple variance-reduced stochastic gradient algorithms~\citep[such as, e.g.,][]{Balamurugan-2016-Stochastic, Chavdarova-2019-Reducing} to effectively take advantage of the smoothness and finite-sum structure of WDRSL problems. In particular, our algorithms make use of two schemes commonly used in optimization to handle minimization problems that can be decomposed as finite sums --- variance reduction and random reshuffling of the data. 

\paragraph{Variance reduction (VR).} Variance-reduced (VR) algorithms were originally proposed for solving finite-sum minimization problems~\citep{Johnson-2013-Accelerating, Allen-2016-Improved,Reddi-2016-Stochastic, Allen-2016-Variance, Fang-2018-Near, Zhou-2018-Stochastic} and have recently been extended to finite-sum min-max optimization and variational inequality (VI) problems~\citep{Balamurugan-2016-Stochastic, Chavdarova-2019-Reducing, Carmon-2019-Variance, Alacaoglu-2021-Stochastic}. The stochastic variance-reduced gradient (SVRG) algorithm~\citep{Balamurugan-2016-Stochastic, Chavdarova-2019-Reducing} in particular, is the focus of our analysis since remains it computationally tractable in large problems. Previous works have shown that it achieves linear convergence rates in strongly convex-concave finite-sum problems, but its convergence in  convex-concave~\citep{Xie-2020-Lower} problems remains unknown---an issue that we address in Section~\ref{subsec:alg_SEVR} by introducing a provably efficient algorithm which we call stochastic extragradient with variance reduction (SEVR). Recently, the concurrent work has appeared~\citep{Alacaoglu-2021-Stochastic} that provides a new complexity bound for stochastic variance-reduced algorithms for solving finite-sum monotone VIs and thus convex-concave min-max optimization problems. Our algorithms are different from theirs and the theoretical results in two papers complement each other.

\paragraph{Random reshuffling (RR).} Algorithms that employ random reshuffling (RR) of the data are increasingly popular among practitioners in ERM problems, but, until recently, have proven challenging to analyze theoretically. Indeed,  RR algorithms converge faster than the stochastic gradient algorithms on many problems~\citep{Bottou-2009-Curiously, Recht-2013-Parallel}, though the benefit comes at the cost of a significant complication to its theoretical analysis---the gradient estimators are now biased. Recent work in the optimization literature has established convergence rates for RR algorithms~\citep{Shamir-2016-Without, Gurbuzbalaban-2019-Random, Haochen-2019-Random, Nagaraj-2019-Sgd, Rajput-2020-Closing, Safran-2020-Good, Nguyen-2020-Unified, Mishchenko-2020-Random}. In particular, they have been shown to converge faster than stochastic gradient descent on quadratic or strongly convex objectives~\citep{Gurbuzbalaban-2019-Random, Haochen-2019-Random, Nguyen-2020-Unified, Mishchenko-2020-Random} and more recently for general smooth and convex objectives~\citep{Nagaraj-2019-Sgd}.  However, to date, random reshuffling has not been proposed for min-max optimization problems. 

In this work, we introduce a min-max optimization algorithm that uses random reshuffling and provides performance guarantees. We show both theoretically and empirically that the algorithm retains the impressive performance of RR methods when compared to other stochastic schemes in min-max optimization.

\section{Preliminaries}\label{sec:prelim}
In this section, we present a setup for the Wasserstein distributionally robust supervised learning (WDRSL) problem which covers a wide range of use cases.

\paragraph{WDRSL.} In the statistical learning literature, it is common to assume that the pairs of a data sample and its label are independent, identically distributed, and drawn from some distribution $\PP$ supported on $\br^d \times \{-1, 1\}$.  In a \textit{generalized linear problem} the parameter $\beta \in \br^d$ can be estimated by solving the following stochastic optimization problem:
\begin{equation}\label{prob:GLM}
\inf_{\beta \in \br^d} \ \left\{
\int_{\br^d \times \{-1, 1\}} \left(\Psi(\langle\x, \beta\rangle) - y\langle\x, \beta\rangle\right) \PP(d(\x, y))\right\}, 
\end{equation}
where $\Psi: \br \rightarrow \br$ is a smooth and nonlinear function, $y \in \{-1, 1\}$ denotes the response variable, $\x \in \br^d$ denotes the predictor (or covariate), and the expectation is with respect to the distribution $\PP$. Problem~\eqref{prob:GLM} is referred to a generalized linear problem in the canonical form, and covers a wide range of application problems; see~\citet{Hardin-2007-Generalized} and~\citet{Dobson-2018-Introduction} for the details. 

Distributionally robust optimization~\citep[see, e.g.,][]{Wiesemann-2014-Distributionally},  seeks to minimize the worst-case expectation of the generalized linear loss function over an ambiguity set $\PCal$, defined as a neighborhood of the distribution $\PP$. The resulting problem has the form: 
\begin{equation}\label{prob:DRSL-abstract}
\inf_{\beta \in \br^d} \sup_{\PP \in \PCal} \EE\left[\Psi(\langle\x, \beta\rangle) - y\langle\x, \beta\rangle\right]. 
\end{equation}
The set $\PCal$ can be thought of as the set of distributions in which an adversary could choose the data distribution, and it is common for it to be defined through moment constraints~\citep{Delage-2010-Distributionally} or with metrics between probability distributions~\citep{Namkoong-2016-Stochastic, Namkoong-2017-Variance, Gao-2017-Distributional, Esfahani-2018-Data, Blanchet-2019-Quantifying, Blanchet-2019-Robust}. 

In practice, since we only have access to samples from $\PP$, the ambiguity set is often defined as a neighborhood of the empirical distribution $\widehat{\PP}_n$, with the most popular characterization making use of the Wasserstein distance between probability distributions~\citep{Abadeh-2015-Distributionally, Gao-2017-Distributional, Sinha-2018-Certifiable, Blanchet-2019-Quantifying, Blanchet-2019-Robust, Shafieezadeh-2019-Regularization} due to computational tractability and theoretical properties~\citep{Esfahani-2018-Data}. 
\begin{definition}[Wasserstein distance]
Let $\mu$ and $\nu$ be two probability distributions supported on $\ZCal=\br^d \times \{-1, 1\}$ and let $\Pi(\mu, \nu)$ denote the set of all couplings (joint distributions) between $\mu$ and $\nu$. The Wasserstein distance between $\mu$ and $\nu$ is defined by 
\begin{equation*}
\WCal(\mu, \nu) = \inf_{\pi \in \Pi(\mu, \nu)} \int_{\ZCal \times \ZCal} c(\z, \z') \pi(d\z, d\z'),  
\end{equation*}
where $\z = (\x, y) \in \br^d \times \{-1, 1\}$ and $c(\cdot, \cdot)$ is a well-defined metric on $\br^d \times \{-1, 1\}$. 
\end{definition}
This results in the \emph{Wasserstein distributionally robust generalized linear problem}, which takes the form of Eq.~\eqref{prob:WDRSL-abstract}, where the ambiguity set is defined as the $\delta$-ball in the Wasserstein distance centered at the empirical distribution $\widehat{\PP}_n$, $\PCal=\BB_\delta(\widehat{\PP}_n)=\{\PP \mid \WCal(\PP, \widehat{\PP}_n) \leq \delta\}$:
\begin{equation}\label{prob:WDRSL-abstract}
\inf_{\beta \in {\mathbb{R}}^d}  \sup_{\PP \in \BB_\delta(\widehat{\PP}_n)} \EE^\PP\left[\Psi(\langle\x, \beta\rangle) - y\langle\x, \beta\rangle\right]. 
\end{equation}
We remark that if $\delta=0$ (no perturbation), the problem in Eq.~\eqref{prob:WDRSL-abstract} is equivalent to solving the classical empirical risk minimization problem.  

\section{Main Results}
In this section, we begin by deriving a new structured min-max optimization formulation of WDRSL with generalized linear losses. We then develop two simple stochastic iterative algorithms that exploit this structure to provably achieve fast finite-time rates. We defer proofs to the supplementary materials.

\subsection{WDRSL as Structured Min-max Optimization}\label{subsec:model}
Before stating our main results, we recall some basic definitions for smooth functions.
\begin{definition}
A function $f$ is $L$-Lipschitz if for $\forall \x, \x'$: $|f(\x)-f(\x')| \leq L\|\x-\x'\|$.
\end{definition}
\begin{definition}
A function $f$ is $\ell$-smooth if for $\forall \x, \x'$: $\|\nabla f(\x)-\nabla f(\x')\| \leq \ell\|\x-\x'\|$.
\end{definition}
We also require a notion of $\epsilon$-optimality for the WDRSL problem described in Eq.~\eqref{prob:WDRSL-abstract}. 
\begin{definition}\label{Def:opt-robust-pred}
A  point $\widehat{\beta}$ is an $\epsilon$-optimal solution ($\epsilon \geq 0$) of Eq.~\eqref{prob:WDRSL-abstract} if 
\begin{equation*}
\sup_{\PP \in \BB_\delta(\widehat{\PP}_n)} \EE^\PP\left[\Psi(\langle\x, \widehat{\beta}\rangle) - y\langle\x, \widehat{\beta}\rangle\right] - \inf_{\beta \in \br^d} \sup_{\PP \in \BB_\delta(\widehat{\PP}_n)} \EE^\PP\left[\Psi(\langle\x, \beta\rangle) - y\langle\x, \beta\rangle\right] \leq \epsilon. 
\end{equation*}
If $\epsilon=0$, then $\widehat{\beta}$ is an optimal solution of WDRSL. 
\end{definition}
Definition~\ref{Def:opt-robust-pred} applies, inter alia, to the solutions to WDRSL with logistic loss functions~\citep{Abadeh-2015-Distributionally}, though in this paper, we consider a more general setting and make the following assumption throughout.   
\begin{assumption}\label{Assumption:main}
The function $\Psi$ is convex, $L$-Lipschitz and $\ell$-smooth for some $L, \ell > 0$ and each data point $(\x_i, y_i)$ satisfies $\|\x_i\| \leq 1$ for all $i \in [n]$. 
\end{assumption}
\begin{remark}
Assumption~\ref{Assumption:main} is mild and covers the WDRSL with logistic and generalized logistic functions~\citep{Stukel-1988-Generalized, Aljarrah-2020-Generalized}. In particular, \textbf{the second condition is only assumed for the simplicity and can be relaxed to $\|\x_i\| \leq G$ for all $i \in [n]$}. The Lipschitz objective function is necessary and is standard for analyzing stochastic algorithms either without access to full gradients~\citep{Nemirovski-2009-Robust} or with random reshuffling~\citep{Nagaraj-2019-Sgd}.   
\end{remark}
Given these definitions, we now show that the WDRSL problem defined in Eq.~\eqref{prob:WDRSL-abstract} is equivalent to a structured min-max optimization problem under Assumption~\ref{Assumption:main} by~\citet{Abadeh-2015-Distributionally}. We provide the proofs in our work for completeness; see Appendix~\ref{app:model} for the details on the min-max reformulation. We then provide the explicit form of our min-max optimization reformulation of the WDRSL problem\footnote{We refer to~\citet{Gao-2017-Distributional} and~\citet{Shafieezadeh-2019-Regularization} for more general loss functions in WDRSL.}. 

\paragraph{Min-max reformulation.} It is worth mentioning that the linear model $\Psi(\langle \x, \cdot\rangle)$ is generally assumed in the context of distributionally robust optimization. Under Assumption~\ref{Assumption:main} and for an empirical distribution $\widehat{\PP}_n = (1/n)\sum_{i=1}^n \delta_{(\widehat{\x}_i, \widehat{y}_i)}$, the WDRSL problem in Eq.~\eqref{prob:WDRSL-abstract} is equivalent to the following structured min-max optimization model:
\begin{equation}\label{prob:WDRSL-min-max}
\begin{aligned} 
\min_{(\lambda, \beta) \in \br^d\times \br} \max_{\gamma \in \mb{R}^n}  \quad &\big\{\lambda(\delta - \kappa) +  \frac{1}{n}\sum_{i=1}^n \Psi(\langle\widehat{\x}_i, \beta\rangle) +  \frac{1}{n}\sum_{i=1}^n \gamma_i\left(\widehat{y}_i\langle\widehat{\x}_i, \beta\rangle - \lambda\kappa\right)\big\}  \\
\st &\quad \|\beta\| \leq \lambda/(L+1), \ \|\gamma\|_{\infty} \leq 1,
\end{aligned}
\end{equation}
where $\kappa>0$ is a positive constant associated with the metric $c(\z, \z')=\|\x-\x'\| + \kappa|y-y'|$, for points $\z=(\x, y)$ and $\z'=(\x', y')$ in  $ \br^d$.
\begin{remark}
We remark that the min-max model in Eq.~\eqref{prob:WDRSL-min-max} has nice properties: (i) It enjoys a finite-sum structure, (ii) The objective function is Lipschitz, smooth, convex in $(\lambda, \beta) \in \br \times \br^d$ and concave in $\gamma \in \br^n$, and (iii) The constraint sets are in the form of second-order cones and the $\ell_\infty$-ball, such that the orthogonal projections can be computed analytically. For the general finite-sum convex-concave optimization,~\citet{Alacaoglu-2021-Stochastic} provided a generic algorithm with finite-time convergence guarantee. Compared to their algorithm, our algorithms achieve better performance when $n$ is huge and $\epsilon > 0$ is medium. 
\end{remark}
\begin{remark}
We assume that the existence of an optimal saddle point $\su^\star=(\lambda^\star, \beta^\star, \gamma^\star) \in \Lambda \times \Gamma$ in Eq.~\eqref{prob:WDRSL-min-max} and do not need to assume that this saddle point $\su^\star \in \Lambda \times \Gamma$ is unique. Nonetheless, the objective function in any optimal saddle point is the same due to Sion's min-max theorem~\citep{Sion-1958-General}. 
\end{remark}
Given our formulation we can define an $\epsilon$-optimal solution for solving the WDRSL in Eq.~\eqref{prob:WDRSL-min-max} and show that it is consistent with the optimality condition in Definition~\ref{Def:opt-robust-pred}. To do so, we denote $\su = (\lambda, \beta, \gamma)$ and let $L(\su)$ be the objective function of the smooth min-max optimization model in Eq.~\eqref{prob:WDRSL-min-max} with an optimal saddle point $\su^\star=(\lambda^\star, \beta^\star, \gamma^\star) \in \Lambda \times \Gamma$. We define the operator $F: \br \times \br^d \times \br^n \rightarrow \br \times \br^d \times \br^n$ as:
\begin{equation}\label{Def:operator-main}
F(\su) = \begin{pmatrix} \nabla_{(\lambda, \beta)} L(\su) \\ - \nabla_\gamma L(\su) \end{pmatrix},
\end{equation}
where:
\begin{eqnarray*}
\nabla_{(\lambda, \beta)} L(\su) &= & 
\begin{pmatrix}
\delta - \kappa\left(1 + \frac{1}{n}\sum_{i=1}^n \gamma_i\right) \\
\frac{1}{n}\sum_{i=1}^n \Psi'(\langle\widehat{\x}_i, \beta\rangle)\widehat{\x}_i + \frac{1}{n}\sum_{i=1}^n \gamma_i\widehat{y}_i\widehat{\x}_i \end{pmatrix},
\\
\nabla_\gamma L(\su) & = & \begin{pmatrix}
\frac{1}{n}(\widehat{y}_1\langle\widehat{\x}_1, \beta\rangle - \lambda\kappa) \\ 
\vdots \\
\frac{1}{n}(\widehat{y}_n\langle\widehat{\x}_n, \beta\rangle - \lambda\kappa) \end{pmatrix}.
\end{eqnarray*}
Finally, we write the constraint sets as $\Lambda=\{(\lambda, \beta) \in \br \times \br^d \mid \|\beta\| \leq \lambda/(L+1)\}$ and $\Gamma=\{\gamma \in \br^n \mid \|\gamma\|_\infty \leq 1\}$. 
Given this notation, we define the $\epsilon$-optimal solution to the min-max formulation of the WDRSL problem.
\begin{definition}\label{Def:opt-min-max-pred}
A point $\widehat{\su} = (\widehat{\lambda}, \widehat{\beta}, \widehat{\gamma})$ is an $\epsilon$-optimal saddle-point solution of the WDRSL in Eq.~\eqref{prob:WDRSL-min-max} if $\widehat{\su} \in \Lambda \times \Gamma$ and $\Delta(\widehat{\su}) = L(\widehat{\lambda}, \widehat{\beta}, \gamma^\star) - L(\lambda^\star, \beta^\star, \widehat{\gamma}) \leq \epsilon$. 
\end{definition}
Note that Definition~\ref{Def:opt-min-max-pred} measures the optimality via appeal to the duality gap which is different from the objective function gap in optimization. Before introducing our algorithms for solving this problem, we first show that the two definitions of optimality, Definition~\ref{Def:opt-min-max-pred} and Definition~\ref{Def:opt-robust-pred}, are equivalent.
\begin{proposition}\label{Prop:equivalence}
For sufficiently small $\epsilon>0$, a point $\widehat{\beta}$ is an $\epsilon$-optimal solution of the WDRSL in Eq.~\eqref{prob:WDRSL-abstract} if a point $\widehat{\su} = (\widehat{\lambda}, \widehat{\beta}, \widehat{\gamma})$ is an $\epsilon$-optimal saddle-point solution of the WDRSL in Eq.~\eqref{prob:WDRSL-min-max} for some $\widehat{\lambda}$ and $\widehat{\gamma}$. Conversely, there exist $\widehat{\lambda}$ and $\widehat{\gamma}$ such that a point $\widehat{\su} = (\widehat{\lambda}, \widehat{\beta}, \widehat{\gamma})$ is an $\epsilon$-optimal saddle-point solution of the WDRSL in Eq.~\eqref{prob:WDRSL-min-max} if a point $\widehat{\beta}$ is an an $\epsilon$-optimal solution of the WDRSL in Eq.~\eqref{prob:WDRSL-abstract}. 
\end{proposition}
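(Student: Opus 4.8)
The plan is to route everything through the reformulation identity of Appendix~\ref{app:model} (following~\citet{Abadeh-2015-Distributionally}), used pointwise in $\beta$, together with the saddle-point inequalities for $\su^\star=(\lambda^\star,\beta^\star,\gamma^\star)$. Write $\Phi(\beta):=\sup_{\PP\in\BB_\delta(\widehat\PP_n)}\EE^\PP[\Psi(\langle\x,\beta\rangle)-y\langle\x,\beta\rangle]$ for the robust objective of Eq.~\eqref{prob:WDRSL-abstract} and $L$ for the objective of Eq.~\eqref{prob:WDRSL-min-max}. The first step is to extract two facts from the reformulation: \textbf{(a)} for every $\beta\in\br^d$ and every $\lambda$ with $(\lambda,\beta)\in\Lambda$, $\Phi(\beta)\le\max_{\gamma\in\Gamma}L(\lambda,\beta,\gamma)$, with equality at the minimizing $\lambda$, so $\Phi(\beta)=\min_{\lambda:(\lambda,\beta)\in\Lambda}\max_{\gamma\in\Gamma}L(\lambda,\beta,\gamma)$; and \textbf{(b)} taking $\inf_\beta$ in (a) and using the assumed existence of a saddle point together with Sion's theorem, $\inf_\beta\Phi(\beta)=L(\su^\star)=:L^\star$.

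I would handle the converse implication first, since it is the cleaner of the two. Suppose $\widehat\beta$ is $\epsilon$-optimal for Eq.~\eqref{prob:WDRSL-abstract}, i.e.\ $\Phi(\widehat\beta)\le L^\star+\epsilon$. Take $\widehat\gamma:=\gamma^\star$ and let $\widehat\lambda$ be a minimizer of the coercive convex piecewise-linear map $\lambda\mapsto\max_{\gamma\in\Gamma}L(\lambda,\widehat\beta,\gamma)$ over $\{\lambda:(\lambda,\widehat\beta)\in\Lambda\}$; then $\widehat\su=(\widehat\lambda,\widehat\beta,\widehat\gamma)\in\Lambda\times\Gamma$. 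By (a), $\max_{\gamma\in\Gamma}L(\widehat\lambda,\widehat\beta,\gamma)=\Phi(\widehat\beta)\le L^\star+\epsilon$, hence $L(\widehat\lambda,\widehat\beta,\gamma^\star)\le L^\star+\epsilon$, while $L(\lambda^\star,\beta^\star,\widehat\gamma)=L(\su^\star)=L^\star$. Subtracting gives $\Delta(\widehat\su)\le\epsilon$, so $\widehat\su$ is $\epsilon$-optimal in the sense of Definition~\ref{Def:opt-min-max-pred}.

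For the forward implication, suppose $\widehat\su=(\widehat\lambda,\widehat\beta,\widehat\gamma)\in\Lambda\times\Gamma$ satisfies $\Delta(\widehat\su)=L(\widehat\lambda,\widehat\beta,\gamma^\star)-L(\lambda^\star,\beta^\star,\widehat\gamma)\le\epsilon$. The saddle-point inequality $L(\lambda^\star,\beta^\star,\widehat\gamma)\le L(\su^\star)=L^\star$ gives $L(\widehat\lambda,\widehat\beta,\gamma^\star)\le L^\star+\epsilon$, and since $(\widehat\lambda,\widehat\beta)\in\Lambda$, weak duality (a) gives $\Phi(\widehat\beta)\le\max_{\gamma\in\Gamma}L(\widehat\lambda,\widehat\beta,\gamma)$. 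What remains is to upgrade the bound on $L(\widehat\lambda,\widehat\beta,\gamma^\star)$ into one on $\max_{\gamma\in\Gamma}L(\widehat\lambda,\widehat\beta,\gamma)$. Here I would invoke the fact that the $\epsilon$-optimal point returned by SEVR/SPPRR is an averaged iterate $\bar\su=(\bar\lambda,\bar\beta,\bar\gamma)$ whose convergence analysis controls the full primal--dual gap $\max_{\gamma\in\Gamma}L(\bar\lambda,\bar\beta,\gamma)-\min_{(\lambda,\beta)\in\Lambda}L(\lambda,\beta,\bar\gamma)\le\epsilon$ (which dominates $\Delta(\bar\su)$); combining this with the saddle-point inequality $\min_{(\lambda,\beta)\in\Lambda}L(\lambda,\beta,\widehat\gamma)\le L(\lambda^\star,\beta^\star,\widehat\gamma)\le L^\star$ and with (b) yields $\Phi(\widehat\beta)\le\max_{\gamma\in\Gamma}L(\widehat\lambda,\widehat\beta,\gamma)\le\min_{(\lambda,\beta)\in\Lambda}L(\lambda,\beta,\widehat\gamma)+\epsilon\le L^\star+\epsilon=\inf_\beta\Phi(\beta)+\epsilon$, i.e.\ $\widehat\beta$ is $\epsilon$-optimal for Eq.~\eqref{prob:WDRSL-abstract}.

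The delicate point --- and the one I expect to be the main obstacle --- is exactly this last step. The quantity $\Delta(\widehat\su)$ only pits $(\widehat\lambda,\widehat\beta)$ against the fixed maximizer $\gamma^\star$ rather than against the worst-case $\gamma$, so $\Delta(\widehat\su)\le\epsilon$ on its own does not control the nonnegative slack $\max_{\gamma\in\Gamma}L(\widehat\lambda,\widehat\beta,\gamma)-L(\widehat\lambda,\widehat\beta,\gamma^\star)$ that separates $L(\widehat\lambda,\widehat\beta,\gamma^\star)$ from $\Phi(\widehat\beta)$; one must either pass to the stronger primal--dual gap delivered by the algorithm (as above) or argue by a continuity/coercivity argument that, for $\epsilon$ small, $\widehat\su$ is pinned close enough to the solution set of Eq.~\eqref{prob:WDRSL-min-max} that this slack is itself $O(\epsilon)$ --- which is what the hypothesis ``for sufficiently small $\epsilon$'' is there to supply. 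Everything else is routine bookkeeping with the saddle-point inequalities and the identity in (a).
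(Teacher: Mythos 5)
Your converse direction is essentially the paper's own argument: take $\widehat{\gamma}=\gamma^\star$, take $\widehat{\lambda}$ to attain $\Phi(\widehat{\beta})=\max_{\gamma\in\Gamma}L(\widehat{\lambda},\widehat{\beta},\gamma)$, and read off $\Delta(\widehat{\su})\le\epsilon$ from the reformulation identities. That half is fine.

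The forward direction, however, is left with a genuine gap, and you have correctly located it but not closed it. Neither of your two escape routes works within the paper. Route (i) --- appealing to the algorithm's output controlling the full primal--dual gap $\max_{\gamma\in\Gamma}L(\widehat{\lambda},\widehat{\beta},\gamma)-\min_{(\lambda,\beta)\in\Lambda}L(\lambda,\beta,\widehat{\gamma})$ --- is unavailable: Theorems~\ref{Theorem:SEVR} and~\ref{Theorem:SPPRR} only bound $\EE[\Delta(\tilde{\su}^S)]$, i.e.\ the gap against the \emph{fixed} pair $(\gamma^\star,(\lambda^\star,\beta^\star))$, and in any case the proposition quantifies over arbitrary $\epsilon$-optimal saddle points in the sense of Definition~\ref{Def:opt-min-max-pred}, not over algorithmic iterates. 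Route (ii) is stated only as a hope. The paper's actual mechanism is concrete and worth knowing: because $L(\lambda,\beta,\cdot)$ is linear in $\gamma$ over the box $\Gamma=\{\|\gamma\|_\infty\le 1\}$, its maximizer is given componentwise by $\gamma_i^\star(\lambda,\beta)=\sign(\widehat{y}_i\langle\widehat{\x}_i,\beta\rangle-\lambda\kappa)$. Since $(\lambda,\beta)\mapsto\widehat{y}_i\langle\widehat{\x}_i,\beta\rangle-\lambda\kappa$ is continuous and $\widehat{\su}\to\su^\star$ as $\epsilon\to 0$, for sufficiently small $\epsilon$ these signs at $(\widehat{\lambda},\widehat{\beta})$ coincide with those at $(\lambda^\star,\beta^\star)$, i.e.\ $\gamma^\star(\widehat{\lambda},\widehat{\beta})=\gamma^\star$. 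Hence the slack you worried about, $\max_{\gamma\in\Gamma}L(\widehat{\lambda},\widehat{\beta},\gamma)-L(\widehat{\lambda},\widehat{\beta},\gamma^\star)$, is not merely $O(\epsilon)$ --- it is exactly zero, and then $\Delta(\widehat{\su})\le\epsilon$ combined with $L(\lambda^\star,\beta^\star,\widehat{\gamma})\le L(\lambda^\star,\beta^\star,\gamma^\star)=L^\star$ gives $\Phi(\widehat{\beta})\le L^\star+\epsilon$ directly. This is the structural fact (the explicit sign form of the inner maximizer, stable under small perturbations) that the hypothesis ``for sufficiently small $\epsilon$'' is supplying, and it is the missing ingredient in your write-up.
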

Proposition~\ref{Prop:equivalence} shows that we can solve the original WDRSL problem posed in Eq.~\eqref{prob:WDRSL-abstract} by applying min-max optimization algorithms to the min-max reformulation described in Eq.~\eqref{prob:WDRSL-min-max}. We remark that this result does not hold in general, and is a consequence of special structure of the WDRSL problem in Eq.~\eqref{prob:WDRSL-min-max}. Most importantly, the gap function $\Delta(\widehat{\su})$ allows for a finite-time analysis of stochastic min-max optimization algorithms with variance reduction and random reshuffling.   
\begin{remark}
It is important to remark the difference between the desired accuracy $\epsilon > 0$ and the perturbation level $\delta > 0$.  Indeed, we use $\epsilon$ to characterize the optimality of the optimization problems. Furthermore, we can increase $\delta$ for $\WCal(\PP, \widehat{\PP}_n) \leq \delta$ defined in Eq.~\eqref{prob:WDRSL-abstract} in order to improve model robustness, which will not affect the computation of an $\epsilon$-optimal saddle point solution of Eq.~\eqref{prob:WDRSL-min-max} when $\epsilon$ is sufficiently small. Given an $\epsilon$-optimal saddle point solution, we can convert it to an $\epsilon$-optimal solution of Eq.~\eqref{prob:WDRSL-abstract}. 
\end{remark}

\subsection{Extragradient Meets Variance Reduction}\label{subsec:alg_SEVR}
We present a simple iterative algorithm, \textit{Stochastic Extragradient with Variance Reduction} (SEVR), for solving the min-max reformulation of the WDRSL problem, the pseudo-code of which is presented in Algorithm~\ref{Algorithm:SEVR}. For simplicity, we denote the component operator $F_i: \br \times \br^d \times \br^n \rightarrow \br \times \br^d \times \br^n$ by
\begin{equation}\label{Def:operator-component}
F_i(\su) = 
\begin{pmatrix}
\delta - \kappa\left(1 + \gamma_i\right) \\
\Psi'(\langle\widehat{\x}_i, \beta\rangle)\widehat{\x}_i + \gamma_i\widehat{y}_i\widehat{\x}_i \\
0 \\ 
\vdots \\ 
-(\widehat{y}_i\langle\widehat{\x}_i, \beta\rangle - \lambda\kappa) \\
\vdots \\ 
0 
\end{pmatrix}.   
\end{equation}  
In SEVR, we start each epoch $s$ of the SEVR algorithm with a reference point $\tilde{\su}^s$ and compute $F(\tilde{\su}^s)$ (cf. Eq.~\eqref{Def:operator-main}). We then proceed with $k_s$ projected extragradient updates of the form:
\begin{align}
\g_t^s &= F(\tilde{\su}^s) + (F_{i_t}(\su_t^s) - F_{i_t}(\tilde{\su}^s)), \nonumber\\
\bar{\su}_{t+1}^s &= \PCal_{\Lambda \times \Gamma}(\su_t^s - \eta_{t+1}^s\g_t^s), \label{Update:SEVR-extrapolation} \\
\bar{\g}_t^s &= F(\tilde{\su}^s) + (F_{j_t}(\bar{\su}_{t+1}^s) - F_{j_t}(\tilde{\su}^s)), \nonumber\\
\su_{t+1}^s &= \PCal_{\Lambda \times \Gamma}(\su_t^s - \eta_{t+1}^s\bar{\g}_t^s), \label{Update:SEVR-main} 
\end{align}
and set the next reference point $\tilde{\su}^{s+1}$ to be the average of the iterates in the epoch: $(1/k_s)\sum_{t=1}^{k_s} \su_t^s$. 
\begin{algorithm}[!t]
\begin{algorithmic}
\STATE \textbf{Input:} $\tilde{\su}^0 = \su_0^0 \in \br \times \br^d \times \br^n$, epoch length $k_0$, step size $\eta>0$ and the number of epochs $S$.  
\STATE \textbf{Initialization:} $l \leftarrow 0$ and $T \leftarrow k_02^S - k_0$.    
\FOR{$s=0,1,\ldots, S-1$}
\STATE Compute and store $F(\tilde{\su}^s)$. 
\STATE $k_s \leftarrow k_0 2^s$. 
\FOR{$t = 0, 1, \ldots, k_s - 1$}
\STATE Set $l \leftarrow l+1$ and $\eta_{t+1}^s \leftarrow \frac{\eta\sqrt{T}}{\sqrt{2T-l}}$.  
\STATE Uniformly sample with replacement two indices $i_t, j_t \in [n]$. Note that $i_t$ and $j_t$ are independent and identically distributed, and update $\su_{t+1}^s$ by Eq.~\eqref{Update:SEVR-extrapolation} and Eq.~\eqref{Update:SEVR-main}. 
\ENDFOR
\STATE $\tilde{\su}^{s+1} \leftarrow \frac{1}{k_s}\sum_{t=1}^{k_s} \su_t^s$.
\STATE $\su_0^{s+1} \leftarrow \su_{k_s}^s$.  
\ENDFOR
\STATE \textbf{Output:} $\tilde{\su}^S$. 
\end{algorithmic} \caption{Stochastic Extragradient with Variance Reduction (SEVR)} \label{Algorithm:SEVR}
\end{algorithm}

SEVR is inspired by the SVRG++ algorithm~\citep{Allen-2016-Improved} to min-max optimization problems.  It is related to the variance-reduced extragradient algorithms for saddle point problems (SVRG) proposed by ~\citet{Balamurugan-2016-Stochastic} and the stochastic variance-reduced extragradient (SVRE) algorithm proposed by~\citet{Chavdarova-2019-Reducing}. A key difference between our approach, SEVR, and both SVRG and SVRE however, is that in SVRG and SVRE, the number of inner loops is a constant or is distributed according to a geometric random variable, while the proposed SEVR algorithm performs exponentially more inner loops as the iterates approach the optimal set. This yields improved complexity bounds by reducing the number of times the full gradient is computed. Further, both SVRG and SVRE have only been analyzed in the context of strongly convex-concave min-max problems while in the following theorem we provide complexity bounds for SEVR under the weaker assumption of convex-concave functions.
\begin{theorem}\label{Theorem:SEVR}
Let $L(\su)$ be the objective function for the smooth min-max optimization problem in Eq.~\eqref{prob:WDRSL-min-max} with an optimal saddle point $\su^\star=(\lambda^\star, \beta^\star, \gamma^\star) \in \Lambda \times \Gamma$. Under the conditions of Assumption~\ref{Assumption:main}, assume that the initial vector $\tilde{\su}^0 = \su_0^0$ satisfies $\|\tilde{\su}^0 - \su^\star\| \leq D_\su$ and $\Delta(\tilde{\su}^{0}) \leq D_L$ for parameters $D_\su, D_L \geq 0$, and let $\epsilon \in (0, 1)$ denote a desired accuracy. Then for a number of epochs $S$, initial epoch length $k_0$, and step size $\eta$, satisfying:
\begin{align*}
S &= \,O\left(\log_2\left(\frac{D_L}{\epsilon}\right)\right), \quad k_0 = \, \frac{D_\su^2}{\eta D_L},  \\
\eta &= \, O\left(\min\left\{\frac{1}{\ell+\kappa+1}, \frac{\epsilon}{(\ell+\kappa+1)^2D_\su^2}, \frac{D_\su^2}{D_L}\right\}\right),
\end{align*}
the iterates generated by the SEVR algorithm (cf.\ Algorithm~\ref{Algorithm:SEVR}) satisfy $\EE[\Delta(\tilde{\su}^{S})] \leq \epsilon$. Furthermore, the total number of gradient evaluations $N$ satisfies:
\begin{equation*}
N=O\left(n\log\left(\frac{1}{\epsilon}\right) + \frac{D_L}{\epsilon} + \frac{(\ell+\kappa+1)^2D_\su^4}{\epsilon^2}\right). 
\end{equation*}
\end{theorem}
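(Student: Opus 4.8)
\emph{Proof plan.} Set $M := \ell+\kappa+1$. The plan is to run the classical extragradient (mirror-prox) analysis for monotone variational inequalities on top of an SVRG-type variance bound, and then to glue the per-epoch estimates together using the doubling schedule $k_s = k_0 2^s$. First I would record the structural facts that make this work, all consequences of Assumption~\ref{Assumption:main}: the operator $F$ of Eq.~\eqref{Def:operator-main} is monotone (it is the saddle-point operator of the convex-concave $L$) and $O(M)$-Lipschitz, and each component operator $F_i$ of Eq.~\eqref{Def:operator-component} is $O(M)$-Lipschitz as well --- the $\Psi'$ block contributes $\ell$ via $\ell$-smoothness of $\Psi$ and $\|\widehat\x_i\|\le1$, the bilinear $\beta$--$\gamma$ coupling contributes $1$, and the $\kappa$-terms contribute $\kappa$; crucially these are \emph{global} Lipschitz bounds, needing no boundedness of $\beta,\lambda,\gamma$. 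Since $\frac1n\sum_{i=1}^n F_i = F$, both stochastic gradients are conditionally unbiased, $\EE[\g_t^s\mid\su_t^s]=F(\su_t^s)$ and $\EE[\bar\g_t^s\mid\bar\su_{t+1}^s]=F(\bar\su_{t+1}^s)$, and the SVRG variance bound reads $\EE[\|\g_t^s-F(\su_t^s)\|^2\mid\su_t^s]\le O(M^2)\,\|\su_t^s-\tilde\su^s\|^2$, and likewise for $\bar\g_t^s$.

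Next I would prove a one-step inequality. Using the two projections of Eqs.~\eqref{Update:SEVR-extrapolation}--\eqref{Update:SEVR-main}, the $O(M)$-Lipschitzness of $F$, the step sizes $\eta_{t+1}^s\le\eta=O(1/M)$, and the variance bound, the standard extragradient estimate (in which a $-\|\su_t^s-\bar\su_{t+1}^s\|^2$ term absorbs the discretization error) gives, for every $\su\in\Lambda\times\Gamma$ and in conditional expectation,
\begin{equation*}
2\eta_{t+1}^s\,\langle F(\bar\su_{t+1}^s),\,\bar\su_{t+1}^s-\su\rangle \;\le\; \|\su_t^s-\su\|^2 - \EE\|\su_{t+1}^s-\su\|^2 + O(\eta^2M^2)\,\|\su_t^s-\tilde\su^s\|^2 .
\end{equation*}
Summing over $t=0,\dots,k_s-1$, invoking monotonicity of $F$ to replace $F(\bar\su_{t+1}^s)$ by $F(\su^\star)$, Jensen and the gradient inequality for the convex-concave $L$ to bound $\Delta$ of the averaged iterate by $\frac1{k_s}\sum_t\langle F(\bar\su_{t+1}^s),\bar\su_{t+1}^s-\su^\star\rangle$ (averaging the main iterates $\su_t^s$ rather than the extrapolated $\bar\su_{t+1}^s$ changes this only by an $O(\eta M)$-order term), and using that the prescribed schedule keeps $\eta_{t+1}^s=\Theta(\eta)$ with $1/(\eta_{t+1}^s)^2$ arithmetic (making the telescoping exact), yields the per-epoch bound $\EE[\Delta(\tilde\su^{s+1})]\le O\!\big(\frac{\EE\|\su_0^s-\su^\star\|^2}{\eta k_s}\big)+O(\eta M^2)\cdot\frac1{k_s}\sum_t\EE\|\su_t^s-\tilde\su^s\|^2$. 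The last term is handled by $\|\su_t^s-\tilde\su^s\|^2\le 2\|\su_t^s-\su^\star\|^2+2\|\tilde\su^s-\su^\star\|^2$ together with the near-nonexpansiveness of the extragradient map toward $\su^\star$ (valid since $F$ is monotone and $\eta=O(1/M)$), which keeps every iterate of epoch $s$, and $\tilde\su^s$, within $O(\|\su_0^s-\su^\star\|)$ of $\su^\star$.

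Then I would set up the epoch recursion with $D_s:=\EE\|\su_0^s-\su^\star\|^2$ and $\Delta_s:=\EE[\Delta(\tilde\su^s)]$. The near-nonexpansiveness, together with $\eta M=O(1)$ and $\eta=O(\epsilon/(M^2D_\su^2))$ to force the cumulative per-step errors below $O(\epsilon)$, shows $D_s=O(D_\su^2)$ for all $s$ (base case $D_0\le D_\su^2$). Feeding this into the per-epoch bound with $k_0=D_\su^2/(\eta D_L)$ gives $\Delta_{s+1}\le O(D_\su^2/(\eta k_0 2^s))+O(\epsilon)=O(D_L/2^s)+O(\epsilon)$; since $\Delta_0\le D_L$ seeds the induction, after $S=O(\log_2(D_L/\epsilon))$ epochs we get $\EE[\Delta(\tilde\su^S)]\le\epsilon$. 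The three terms in the $\min$ defining $\eta$ play distinct roles: $1/M$ makes the extragradient step stable; $\epsilon/(M^2D_\su^2)$ pushes accumulated variance error to $O(\epsilon)$; $D_\su^2/D_L$ keeps $k_0\ge1$. For complexity, epoch $s$ costs $n$ component evaluations for $F(\tilde\su^s)$ plus $O(1)$ per inner step, i.e.\ $O(n+k_s)$; summing, $N=O(nS)+O(\sum_s k_s)=O(n\log(1/\epsilon))+O(k_0 2^S)$, and substituting $2^S=O(D_L/\epsilon)$, $k_0=D_\su^2/(\eta D_L)$ and $1/\eta=\max\{M,\,M^2D_\su^2/\epsilon,\,D_L/D_\su^2\}$ turns $k_0 2^S$ into $\max\{MD_\su^2/\epsilon,\,M^2D_\su^4/\epsilon^2,\,D_L/\epsilon\}=O(D_L/\epsilon+M^2D_\su^4/\epsilon^2)$ (the middle term obeys $t\le\max\{1,t^2\}$), which is exactly the claimed $N$.

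The main obstacle will be the interaction between the SVRG$++$-style reference point and the merely convex-concave (not strongly monotone) objective: since $\tilde\su^s$ is the \emph{average} of the previous epoch's iterates rather than its last point $\su_0^s$, there is no genuine per-step contraction, so one must argue purely from near-nonexpansiveness of the extragradient map that the iterates stay in a bounded neighborhood of $\su^\star$ across all $T=k_0 2^S-k_0$ inner steps, while simultaneously ensuring the variance errors --- each proportional to $\eta M^2$ times the size of that neighborhood --- sum to $O(\epsilon)$ and that the doubling schedule keeps the full-gradient cost at $O(n\log(1/\epsilon))$. Making this three-way bookkeeping (neighborhood boundedness, $O(\epsilon)$ error accumulation, logarithmic full-gradient count) close consistently under the stated choices of $S$, $k_0$, $\eta$ is the delicate part of the argument.
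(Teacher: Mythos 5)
Your plan is essentially correct and reaches the stated complexity, but it takes a genuinely different route from the paper's proof, and the difference is worth spelling out. The paper (Appendix~\ref{app:alg_SEVR}) does \emph{not} argue that the iterates stay in an $O(D_\su)$-neighborhood of $\su^\star$. Instead it proves a refined variance bound (Lemma~\ref{Lemma:SEVR-variance}): by introducing the regularized component function $\varphi_i(\su)=L_i(\su)-\nabla L_i(\su^\star)^\top(\su-\su^\star)+\tfrac{\ell+\kappa+1}{2}(\|\lambda-\lambda^\star\|^2+\|\beta-\beta^\star\|^2)-\tfrac{\ell+\kappa+1}{2}\|\gamma-\gamma^\star\|^2$ and a co-coercivity-type inequality, it bounds $\EE\|\bar{\g}_t^s-F(\bar{\su}_{t+1}^s)\|^2$ by $16(\ell+\kappa+1)\bigl(\Delta(\bar{\su}_{t+1}^s)+\Delta(\tilde{\su}^s)\bigr)$ plus $O\bigl((\ell+\kappa+1)^2\bigr)$ times squared distances to $\su^\star$. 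The gap terms are then moved to the left-hand side (producing the factor $1-24\eta(\ell+\kappa+1)\ge 2/3$) and give the coefficient $\tfrac12$ on $\Delta(\tilde{\su}^s)$, while the residual $\|\su_t^s-\su^\star\|^2$ terms are absorbed not by a boundedness argument but by the \emph{increasing} step-size schedule, via $\tfrac{1}{\eta_t^s}-\tfrac{1}{\eta_{t+1}^s}\ge\tfrac{1}{2\sqrt{2}\eta T}\ge 100\eta(\ell+\kappa+1)^2$, so that they disappear into the telescoping of $\|\su_t^s-\su^\star\|^2/\eta_t^s$. The epoch recursion is then a clean halving of the composite potential $\Delta(\tilde{\su}^s)+18\eta(\ell+\kappa+1)^2\|\tilde{\su}^s-\su^\star\|^2+\tfrac{3\|\su_0^s-\su^\star\|^2}{2k_s\eta_0^s}$, where both the gap-based variance bound and the doubling of $k_s$ contribute to the factor $\tfrac12$. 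Your route uses only the crude Lipschitz variance bound $O\bigl((\ell+\kappa+1)^2\bigr)\|\su_t^s-\tilde{\su}^s\|^2$, derives the geometric decrease purely from the doubling of $k_s$, and pays for the missing refinement with an explicit inductive boundedness argument; the constraint that makes your induction close, $\eta^2(\ell+\kappa+1)^2T=O(1)$ (equivalently $\eta\lesssim\epsilon/((\ell+\kappa+1)^2D_\su^2)$ once $T\approx D_\su^2/(\eta\epsilon)$), is exactly the same quantitative condition the paper imposes for its step-size trick, so the two analyses are consistent. What your approach buys is transparency and independence from the $\varphi_i$ construction; what the paper's buys is that it never needs uniform control of $\EE\|\su_t^s-\su^\star\|^2$ across all $T$ inner steps, which is precisely the step you correctly identify as the delicate part of your plan. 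Two small cautions if you execute your plan: the multiplicative blow-up $\prod_t(1+O(\eta^2(\ell+\kappa+1)^2))$ must be tracked jointly with the additive $\|\tilde{\su}^s-\su^\star\|^2$ contributions in a single induction over all epochs, and the discrepancy between averaging $\su_t^s$ (as in Algorithm~\ref{Algorithm:SEVR}) and averaging $\bar{\su}_{t+1}^s$ (which is what the one-step inequality controls) needs an explicit $O(\eta)$-order correction rather than a passing remark.
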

The proof of~\ref{Theorem:SEVR} can be found in Appendix~\ref{app:alg_SEVR}. Through Theorem~\ref{Theorem:SEVR}, we immediately observe the benefits of our approach over deterministic algorithms for WDRSL that achieve complexity bounds of $O(n/\epsilon)$~\citep{Liu-2017-Primal, Zhang-2019-Efficient, Li-2019-First}. Indeed, our results show that SEVR achieves a better dependence on the number of samples $n$, in that the total number of gradient evaluations to find an $\epsilon$-optimal stationary point is $O(n\log(1/\epsilon))$.\footnote{Recent work, \cite{Alacaoglu-2021-Stochastic}, proved rates for a different stochastic variance-reduced algorithm under similar assumptions and derived a bound of $O(n + \sqrt{n}/\epsilon)$ which has a worse dependence on $\epsilon$ and $n$ than that in Theorem~\ref{Theorem:SEVR} (when $n$ is sufficiently large).}  We also remark that SEVR improves over general-purpose interior-point methods~\citep{Lee-2015-Distributionally, Luo-2019-Decomposition} which can be used to solve convex reformulations of WDRSL. Such methods have per-iteration costs at least quadratic in the problem dimension $d$ while the per-iteration complexity of SEVR is only linear in $d$. 
\begin{remark}
We remark that in contrast to many stochastic gradient algorithms the SEVR algorithm benefits from a sequence of adaptive yet \emph{non-decaying} step sizes $\{\eta_t^s\}_{s \geq 0, t \geq 0}$, which is important in practice when a very large number of iterations is commonly desired. 
\end{remark}
\begin{remark}
Algorithm~\ref{Algorithm:SEVR} is not a straightforward extension of~\citet{Allen-2016-Improved}; indeed, it is an extragradient-based algorithm whereas the algorithm in~\citet{Allen-2016-Improved} is an variant of gradient descent (GD). The gradient descent ascent, which extends GD to min-max optimization, might diverge or converges to limit circles in convex-concave setting. To analyze our algorithms, we construct a different potential function based on the duality gap, which is unnecessary in~\citet{Allen-2016-Improved}. 
\end{remark}

\subsection{Proximal Point Meets Random Reshuffling}\label{subsec:alg_SPPRR}
\begin{algorithm}[!t]
\begin{algorithmic}
\STATE \textbf{Input:} $\su_0^0 \in \br \times \br^d \times \br^n$, step size $\eta>0$ and the number of epochs $S$.
\STATE \textbf{Initialization:} 
\FOR{$s=0,1,\ldots, S-1$} 
\STATE $\sigma^s \leftarrow$ a random permutation of the set $[n]$. 
\FOR{$t = 0, 1, \ldots, n - 1$}
\STATE $\su_{0, t}^s \leftarrow \su_t^s$. 
\FOR{$i = 0, 1, \ldots, M - 1$} 
\STATE Update the iterate $\su_{i+1, t}^s$ by Eq.~\eqref{Update:SPPRR-main}.
\ENDFOR 
\STATE $\su_{t+1}^s \leftarrow \su_{M, t}^s$. 
\ENDFOR
\STATE $\su_0^{s+1} \leftarrow \su_n^s$.  
\ENDFOR
\STATE \textbf{Output:} $\tilde{\su}^S = \frac{1}{nS}\sum_{s=0}^{S-1}\sum_{t=1}^n \su_t^s$. 
\end{algorithmic} \caption{Stochastic Proximal Point with Random Reshuffling (SPPRR)}\label{Algorithm:SPPRR}
\end{algorithm} 
\noindent
The second algorithm we propose, \textit{Stochastic Proximal Point with Random Reshuffling} (SPPRR), uses the idea of random reshuffling. In each epoch $s$ of the SPPRR algorithm, we sample a random permutation $\{\sigma_0, \sigma_1, \ldots, \sigma_{n-1}\}$ of the set $[n]$ and proceed with $n$ inexact proximal point updates of the form: $\su_{t+1}^s \ \approx \ \PCal_{\Lambda \times \Gamma}(\su_t^s - \eta F_{ \sigma_t^s}(\su_{t+1}^s))$, where $\eta>0$ is a step size and $\sigma = \sigma_t^s$. We then set $\su_0^{s+1}=\su^s_n$ and repeat the process for a total of $S$ epochs. Note that a new permutation/shuffling is only generated at the beginning of each epoch; see Algorithm~\ref{Algorithm:SPPRR}. 

To compute the inexact proximal point updates, we first note that the exact proximal point update is equivalent to the solution of the fixed-point problem $\su = T(\su)$, where the operator $T(\su)$ is defined as $T(\su):= \PCal_{\Lambda \times \Gamma}(\su_t^s - \eta F_{\sigma_t}(\su))$. If $\eta \leq 1/(2(\ell+\kappa+1))$, it is easy to verify that $T$ is a contraction. Thus, if we let $\su_{0, t}^s = \su_t^s$, we can define our inexact proximal point update to be the result of $M$ applications of the operator $T$ to $\su_{0, t}^s$ , which takes the following iterative form for $i=1,...,M$: 
\begin{equation}\label{Update:SPPRR-main}
\su_{i+1, t}^s \ = \ \PCal_{\Lambda \times \Gamma}(\su_t^s - \eta F_{\sigma_t^s}(\su_{i, t}^s)).   
\end{equation}
We set $\su_{t+1}^s = \su_{M, t}^s$ and proceed to the next inexact proximal point update. In our theoretical treatment we set $M = O(\log(1/\epsilon))$ to guarantee that $\su_{i+1, t}^s$ is sufficiently close to the fixed point, while in practice it suffices to have $M=2$ to observe fast convergence. 

To our knowledge, the SPPRR algorithm is the first stochastic first-order algorithm for solving the min-max optimization problem with random reshuffling. The following theorem shows that the complexity bound of our algorithm is independent of the number of samples $n$ up to a logarithmic factor.    

\begin{theorem}\label{Theorem:SPPRR}
Let $L(\su)$ be the objective function of the smooth min-max optimization problem in Eq.~\eqref{prob:WDRSL-min-max} with an optimal saddle point $\su^\star=(\lambda^\star, \beta^\star, \gamma^\star) \in \Lambda \times \Gamma$, and define $D_\su=\|\su_0^0 - \su^\star\|$. Under the conditions of Assumption~\ref{Assumption:main}, if each $F_i$ is bounded over $\Lambda \times \Gamma$ for all $i \in [n]$, then for a desired accuracy level $\epsilon \in (0, 1)$, number of epochs $S$, step size $\eta$, and number of fixed point-iterations $M$ that satisfy:
\begin{align*}
S  &= \, O\left(\frac{1}{n}\left(\frac{(\ell+\kappa+1)D_\su^2}{\epsilon} + \frac{G^2D_\su^2}{\epsilon^2}\right)\right), \\
\eta  &=  \, \min\left\{\frac{1}{2(\ell+\kappa+1)}, \frac{\epsilon}{4G^2}\right\}, \,\,
M  = \, 1 + \left\lfloor \log_2(10nS)\right\rfloor,
\end{align*} 
the iterates generated by the SPPRR algorithm satisfy $\EE[\Delta(\tilde{\su}^{S})] \leq \epsilon$. Further, the total number of gradient evaluations, $N$ satisfies:
\begin{equation*} N=O\left(\left(\frac{(\ell+\kappa+1)D_\su^2}{\epsilon} + \frac{G^2D_\su^2}{\epsilon^2}\right)\log\left(\frac{1}{\epsilon}\right)\right). 
\end{equation*}
\end{theorem}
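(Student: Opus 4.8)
The plan is to bound the duality gap $\Delta(\tilde{\su}^S)$ by first reducing it to a variational-inequality-type sum and then running an ``inexact proximal point $+$ random reshuffling'' analysis epoch by epoch. Since $L$ is convex in $(\lambda,\beta)$ and concave in $\gamma$, a standard linearization gives $L(\lambda_t^s,\beta_t^s,\gamma^\star) - L(\lambda^\star,\beta^\star,\gamma_t^s) \leq \langle F(\su_t^s),\,\su_t^s-\su^\star\rangle$ for every iterate, and applying Jensen's inequality to the output average $\tilde{\su}^S = \frac{1}{nS}\sum_{s=0}^{S-1}\sum_{t=1}^{n}\su_t^s$ then yields
\begin{equation*}
\Delta(\tilde{\su}^S) \;\leq\; \frac{1}{nS}\sum_{s=0}^{S-1}\sum_{t=1}^{n}\big\langle F(\su_t^s),\,\su_t^s - \su^\star\big\rangle ,
\end{equation*}
so it suffices to control the expectation of the right-hand side. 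All iterates stay in $\Lambda\times\Gamma$ because each update ends with a projection; this is also where the hypothesis ``each $F_i$ is bounded on $\Lambda\times\Gamma$'' is used, since $\Lambda$ is unbounded in the $\lambda$ direction. Write $G$ for the resulting uniform bound on $\|F_i\|$ over $\Lambda\times\Gamma$, and recall that each $F_i$ is $(\ell+\kappa+1)$-Lipschitz under Assumption~\ref{Assumption:main}.

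Next I would dispose of the inexactness. For $\eta \leq \tfrac{1}{2(\ell+\kappa+1)}$ the map $T(\su) = \PCal_{\Lambda\times\Gamma}(\su_t^s - \eta F_{\sigma_t^s}(\su))$ is a $\tfrac12$-contraction, so the output $\su_{t+1}^s = \su_{M,t}^s$ of $M$ fixed-point iterations satisfies $\|\su_{t+1}^s - \hat{\su}_{t+1}^s\| \leq 2^{-M}\|\su_t^s - \hat{\su}_{t+1}^s\| \leq 2^{-M}\eta G$, where $\hat{\su}_{t+1}^s = \PCal_{\Lambda\times\Gamma}(\su_t^s - \eta F_{\sigma_t^s}(\hat{\su}_{t+1}^s))$ is the exact proximal point. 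The choice $M = 1 + \lfloor\log_2(10nS)\rfloor$ gives $2^{-M} \leq (10nS)^{-1}$, so the cumulative perturbation over all $nS$ outer steps is $O(\eta G)$; propagating it through the bounds below contributes only an $O(\epsilon)$ term to the gap, and I may henceforth argue as if each outer step were the exact proximal update $\hat{\su}_{t+1}^s$, carrying this controlled error.

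The heart of the argument is the per-epoch analysis. First-order optimality of the proximal step gives, for all $\su\in\Lambda\times\Gamma$, $2\eta\langle F_{\sigma_t^s}(\su_{t+1}^s),\su_{t+1}^s-\su\rangle \leq \|\su_t^s-\su\|^2 - \|\su_{t+1}^s-\su\|^2 - \|\su_{t+1}^s-\su_t^s\|^2$, which telescopes over $t = 0,\dots,n-1$ within an epoch. The reshuffling enters through the exact identity $\sum_{t=0}^{n-1}F_{\sigma_t^s}(\su_0^s) = n\,F(\su_0^s)$ (summing over a permutation is summing over all indices): after re-centering each summand at the epoch anchor $\su_0^s$, the leading contribution becomes $n\langle F(\su_0^s),\su_0^s-\su\rangle$, while the residual ``drift'' terms of the form $\langle F_{\sigma_t^s}(\su_{t+1}^s)-F_{\sigma_t^s}(\su_0^s),\su_{t+1}^s-\su\rangle$ and $\langle F_{\sigma_t^s}(\su_0^s),\su_{t+1}^s-\su_0^s\rangle$ are estimated using the Lipschitzness and boundedness of the $F_i$ together with the within-epoch displacement bound $\|\su_{t+1}^s-\su_0^s\| = O(\eta G\,t)$ and the prox ``progress'' term $\sum_t\|\su_{t+1}^s-\su_t^s\|^2$. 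A naive treatment of these drift terms loses a factor of $n$; obtaining the stated step size $\eta = O(\epsilon/G^2)$ instead requires the finer cancellation specific to sampling \emph{without} replacement — the second drift sum telescopes into a quadratic form in $\sum_t F_{\sigma_t^s}(\su_0^s) = nF(\su_0^s)$ with a favorable sign, leaving only an $O(\eta^2 nG^2)$ remainder per epoch, the smoothness-induced corrections are absorbed using $\eta \leq \tfrac{1}{2(\ell+\kappa+1)}$ and the negative term $-\sum_t\|\su_{t+1}^s-\su_t^s\|^2$, and the higher-order corrections are controlled after taking expectation over the random permutation via the without-replacement conditional-expectation identities. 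This balancing act — keeping the $n$-dependence tight while simultaneously propagating the inexactness error — is the main obstacle.

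Finally I would telescope the per-epoch estimate over $s = 0,\dots,S-1$ using $\su_0^{s+1} = \su_n^s$ and $\|\su_0^0-\su^\star\| = D_\su$, which gives an inequality of the form
\begin{equation*}
\frac{1}{nS}\sum_{s=0}^{S-1}\sum_{t=1}^{n}\big\langle F(\su_t^s),\su_t^s-\su^\star\big\rangle \;\lesssim\; \frac{D_\su^2}{\eta nS} \;+\; \eta G^2 \;+\; \big(\text{terms that are } O(\epsilon)\text{ under the stated choices of }S,\eta\big) .
\end{equation*}
Plugging in $\eta = \min\{\tfrac{1}{2(\ell+\kappa+1)},\tfrac{\epsilon}{4G^2}\}$ and $S = O\big(\tfrac1n(\tfrac{(\ell+\kappa+1)D_\su^2}{\epsilon}+\tfrac{G^2 D_\su^2}{\epsilon^2})\big)$ makes every term $O(\epsilon)$, hence $\EE[\Delta(\tilde{\su}^S)] \leq \epsilon$ after rescaling constants. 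The gradient count follows at once: each epoch performs $n$ outer steps, each of $M$ fixed-point iterations costing one component-operator evaluation, so $N = nMS = O\big((\tfrac{(\ell+\kappa+1)D_\su^2}{\epsilon}+\tfrac{G^2 D_\su^2}{\epsilon^2})\log(1/\epsilon)\big)$.
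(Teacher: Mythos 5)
Your overall scaffolding (reduce the duality gap to $\frac{1}{nS}\sum_{s,t}\langle F(\su_t^s),\su_t^s-\su^\star\rangle$ via convexity--concavity and Jensen, dispose of the inexactness via the $\tfrac12$-contraction, telescope the proximal inequality) is sound and matches the paper's bookkeeping, and your treatment of $M$ is correct. But the heart of your argument --- the claim that the drift sum $\sum_t\langle F_{\sigma_t^s}(\su_0^s)-F(\su_0^s),\,\su_{t+1}^s-\su_0^s\rangle$ ``telescopes into a quadratic form in $\sum_t F_{\sigma_t^s}(\su_0^s)=nF(\su_0^s)$ with a favorable sign, leaving only an $O(\eta^2 nG^2)$ remainder per epoch'' --- has a genuine gap. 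That cancellation requires writing the within-epoch displacement as $\su_{t+1}^s-\su_0^s=-\eta\sum_{j\le t}F_{\sigma_j^s}(\cdot)$ so that the double sum $\sum_t\sum_{j\le t}\langle a_t-\bar a,a_j\rangle$ can be evaluated (in expectation over the permutation) as $\tfrac12\sum_t\|a_t-\bar a\|^2\ge 0$. In the present problem every update ends with a projection onto $\Lambda\times\Gamma$, so $\su_{t+1}^s-\su_0^s$ equals $-\eta\sum_{j\le t}F_{\sigma_j^s}(\su_{j+1,\cdot}^s)$ only up to an accumulated normal-cone correction $-\sum_{j\le t}p_j$ with $\|p_j\|$ as large as $\eta G$; the cross terms $\langle F_{\sigma_t^s}(\su_0^s)-F(\su_0^s),\,p_j\rangle$ carry no sign and contribute $O(\eta G^2 n^2)$ per epoch, which is exactly the factor-of-$n$ loss you were trying to avoid. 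This is not a corner case here: $\gamma^\star$ is a sign vector sitting on the boundary of $\Gamma$, so the projections are active along the whole trajectory. As written, your argument only supports a step size $\eta=O(\epsilon/(nG^2))$ and hence a rate with an extra factor of $n$.

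The paper circumvents this by an entirely different device, inherited from the analysis of SGD without replacement in the projected setting: it never recenters at the epoch anchor, but instead bounds the per-step bias $\bigl|\EE[\Delta(\su_t^s)]-\EE[L_{\sigma_{t-1}^s}(\lambda_t^s,\beta_t^s,\gamma^\star)-L_{\sigma_{t-1}^s}(\lambda^\star,\beta^\star,\gamma_t^s)]\bigr|$ by $G\cdot\frac1n\sum_r\WCal_1(\DCal_{t,s},\DCal_{t,s}^{(r)})$ via Kantorovich duality, where $\DCal_{t,s}^{(r)}$ is the law of $\su_t^s$ conditioned on $\sigma_{t-1}^s=r$. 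A single-swap coupling of the two permutations, combined with the non-expansiveness of the proximal map for a monotone $F_r$ (which survives the projection), shows these laws are within $2\eta G+O(D_\su/\sqrt{nS})$ in $\WCal_2$, giving a per-step bias of $O(\eta G^2)$ and hence the stated $\eta=O(\epsilon/G^2)$. If you want to salvage your route, you would need either an argument that controls the projection residuals $p_j$ (which I do not see how to do without further assumptions), or to replace the anchor-recentering step by a coupling-type bias bound of this kind.
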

Our results extend convergence guarantees for random reshuffling for SGD on smooth, bounded functions~\citep{Nagaraj-2019-Sgd} to smooth convex-concave min-max optimization regime. Note that $N$ does not depend on $n$; indeed, the random reshuffling algorithms can be viewed as stochastic gradient-based algorithms without replacement and the complexity bound is independent of $n$. Such result was known for minimization and our work extends it to min-max optimization. 
\begin{remark}
We remark that the boundedness of the component functions is a standard assumption for proving the convergence of stochastic gradient algorithms with random reshuffling in minimization problems~\citep{Shamir-2016-Without, Haochen-2019-Random, Nagaraj-2019-Sgd, Rajput-2020-Closing, Safran-2020-Good, Nguyen-2020-Unified, Mishchenko-2020-Random}. In the min-max regime, this translates to requiring the boundedness of each $F_i$ over the constraint set $\Lambda \times \Gamma$. 
\end{remark}
\begin{remark}
It is worth emphasizing the importance of random reshuffling-based algorithms in the min-max optimization; indeed, while the random reshuffling-based algorithms generally do not improve the convergence rate in convex optimization, they are extremely efficient in practice. This motivates us to consider their extension to min-max optimization and the impressive empirical performance confirm our conjecture.
\end{remark}
\begin{remark}
Algorithm~\ref{Algorithm:SPPRR} is not a straightforward extension of~\citet{Nagaraj-2019-Sgd}; indeed, it is a proximal point-based algorithm whereas the algorithm in~\citet{Nagaraj-2019-Sgd} is an variant of gradient descent (GD). The necessity of fixed-point sub-problem comes from the generality of min-max optimization. More specifically, ~\citet{Nagaraj-2019-Sgd} used the existence of \textit{convex objective functions} for proving their Lemma 2 and 4 which can not be extended to convex-concave min-max optimization problems in a obvious way. We resolve this issue by changing the subproblem to the proximal subproblem and solving the resulting fixed-point problem.  
\end{remark}
\begin{figure*}[!t]
\centering
\includegraphics[width=0.99\textwidth]{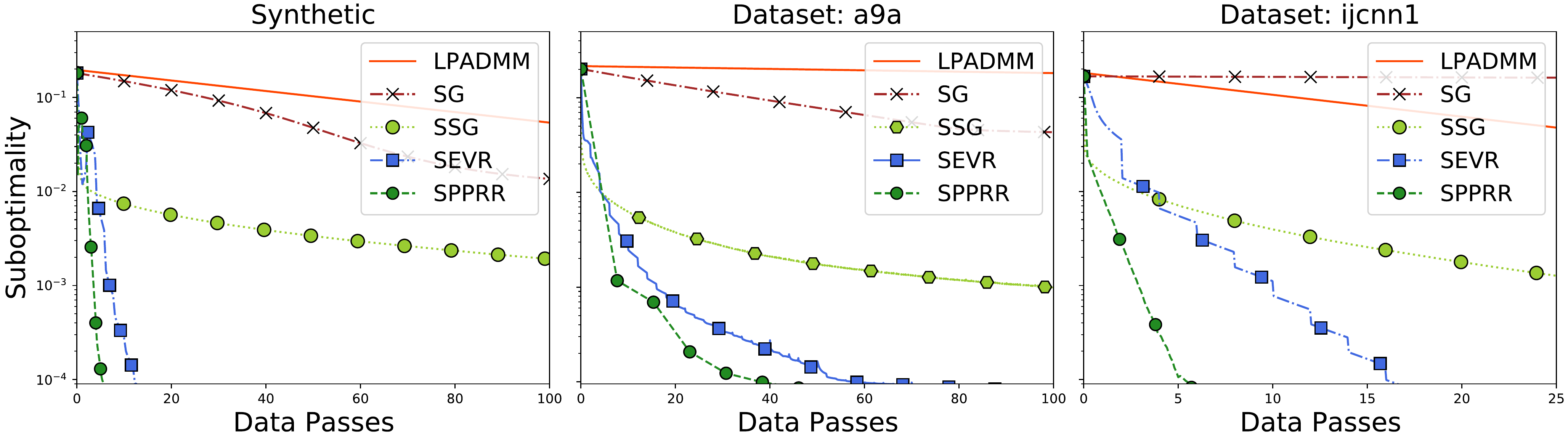}
\caption{\footnotesize{Comparison of SEVR and SPPRR with (minimization) baseline methods for solving convex reformulations of distributionally robust logistic regression on synthetic datasets and LIBSVM datasets. The horizontal axis represents the number of data passes, and the vertical axis represents the suboptimality $(f(\lambda^{(t)}, \beta^{(t)}) - f^{\star})$. We remark that LPADMM takes multiple data passes in every iteration as it needs to solve a nonlinear constrained problem in each iteration, which accounts for its poor performance.}}\label{fig:exp-comparison-with-min-baseline}
\end{figure*}


\begin{figure*}[t]
\centering
\includegraphics[width=0.99\textwidth]{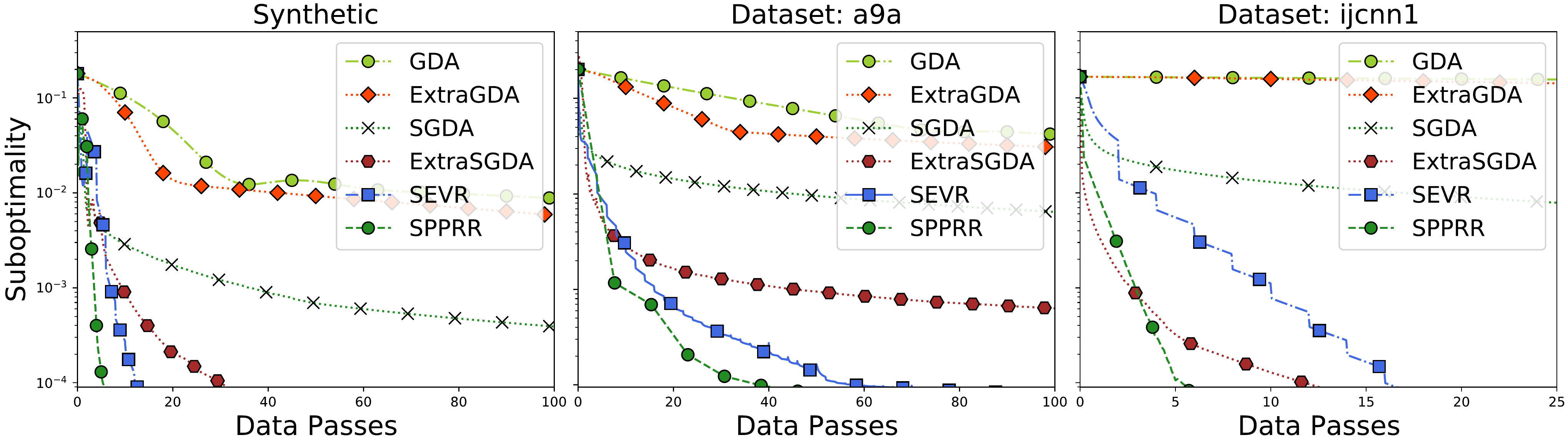}
\caption{\footnotesize{Comparison of SEVR and SPPRR with min-max baseline methods for solving distributionally robust logistic regression on synthetic datasets and LIBSVM datasets. The horizontal axis represents the number of data passes, and the vertical axis represents the suboptimality $(f(\lambda^{(t)}, \beta^{(t)}) - f^{\star})$.}}
\label{fig:exp-comparison-with-minmax-baseline}
\end{figure*}

\section{Experiments}\label{sec:experiment}
In this section, we empirically validate Algorithm~\ref{Algorithm:SEVR} and Algorithm~\ref{Algorithm:SPPRR}. Here we focus on distributionally robust logistic regression problems~\citep{Abadeh-2015-Distributionally} with Wasserstein-2 distance. Our experiments are implemented using Python 3.7 on a computer with a 2.6 GHz Intel Core i7 and 16GB of memory. We refer to Appendix~\ref{sec:appendix-exp} for further implementation details and more 
experimental results.

\paragraph{Baseline methods.} We compare our proposed algorithms, SEVR and SPPRR, against two classes of algorithms.\footnote{We exclude off-the-shelf solvers since~\citet{Li-2019-First} showed that LPADMM outperforms these solvers.} The first (Figure~\ref{fig:exp-comparison-with-min-baseline}) comprises algorithms that have been previously proposed for solving convex reformulations of WDRSL. In particular, we benchmark against linearized proximal alternating direction method of multipliers~(LPADMM)~\citep{Li-2019-First}, sub-gradient methods~(SG), and the stochastic sub-gradient method~(SSG). The second class (Figure~\ref{fig:exp-comparison-with-minmax-baseline}) is made up of general-purpose stochastic first-order algorithms for min-max optimization: gradient descent ascent~(GDA), extragradient descent ascent method~(ExtraGDA), stochastic gradient descent ascent~(SGDA), and single-call stochastic extragradient descent ascent~(ExtraSGDA)~\citep{Hsieh-2019-Convergence} to highlight the benefit of our algorithms over existing  general purpose min-max algorithms.
\begin{figure*}[t]
\centering
\subfigure[Effect of batch size on performance of SEVR ]{\includegraphics[width=0.495\textwidth]{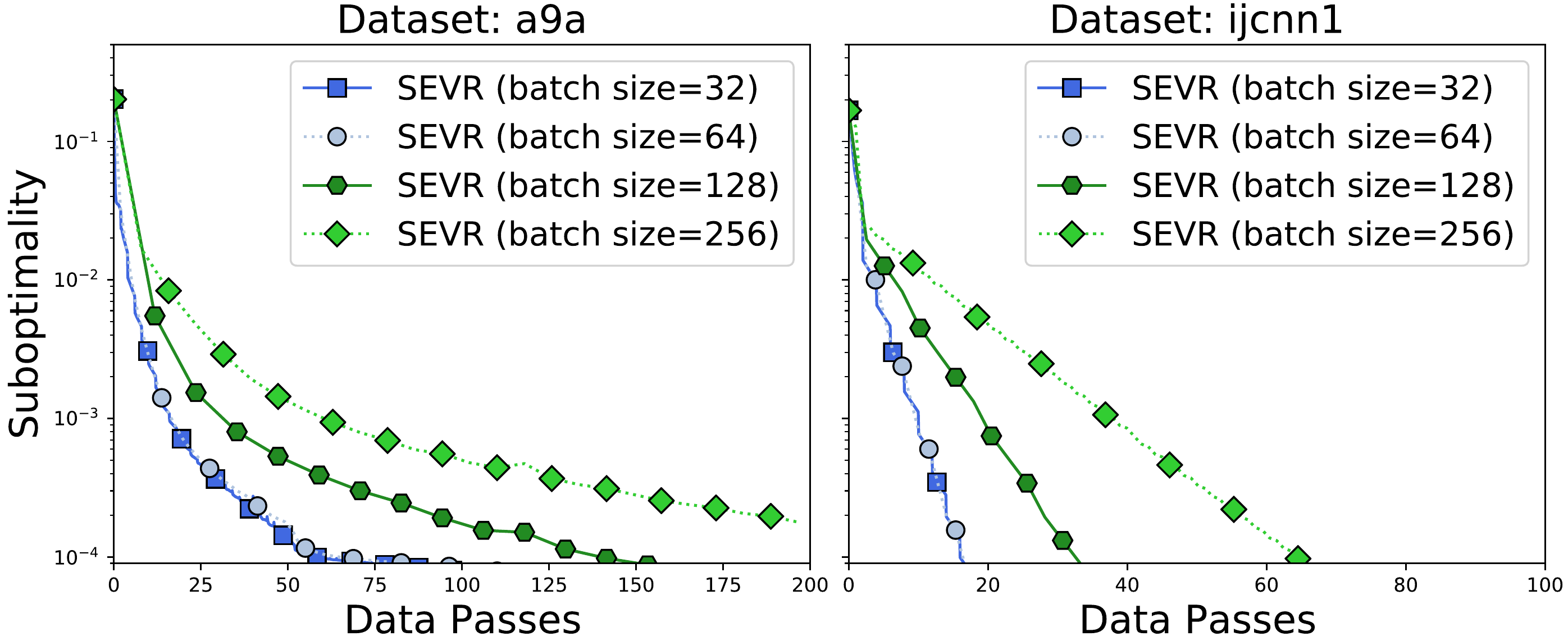}}
\subfigure[Number of fixed-point iterations for SPPRR]{\includegraphics[width=0.495\textwidth]{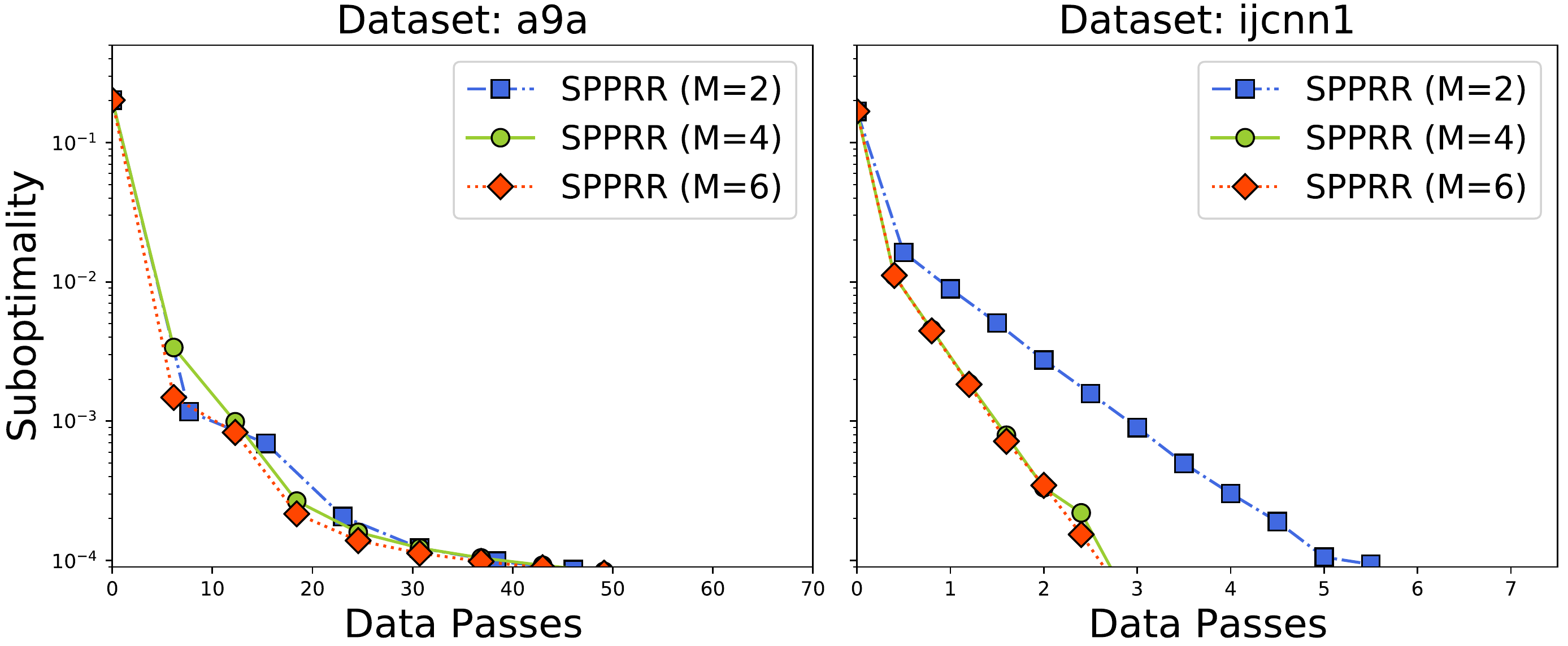}}
\caption{(\textbf{a})\,Comparing SEVR across batch sizes $B \in \{32, 64, 128, 256\}$ for solving WDRSL with logistic losses. (\textbf{b})\,Comparing SPPRR with a different number of fixed-point iterations $M \in \{2, 4, 6\}$ for solving WDRSL with logistic losses.}\label{fig:exp-comparison-batchsize-fix-point-iteration-M}
\end{figure*}
\paragraph{Experimental results.} For both the synthetic and real datasets, we find that both SEVR and SPPRR converge significantly faster than the baseline methods, especially when the  training dataset is large. In particular, we make two key observations: (1) As shown in Figure~\ref{fig:exp-comparison-with-min-baseline} our proposed algorithms show major improvements over existing algorithms for solving convex reformulations of WDRSL, and (2) Our algorithms improve upon existing stochastic gradient-based algorithms for min-max optimization (Figure~\ref{fig:exp-comparison-with-minmax-baseline}). This validates our approach of viewing WDRSL through the lens of min-max optimization and tailoring algorithms to the structure of a min-max reformulation. 

In both Figures~\ref{fig:exp-comparison-with-min-baseline} and~\ref{fig:exp-comparison-with-minmax-baseline}, we observe that SPPRR is able to outperform all the baselines across all datasets. This shows that random reshuffling maintains its impressive empirical performance even in min-max problems and highlights the need for a deeper theoretical understanding of the technique. Indeed, SPPRR is able to achieve $10^{-3}$ suboptimality within 20 data passes on all datasets, which corroborates our theoretical results on how its convergence rate is independent of the size of the dataset. We remark that ExtraSGDA achieves a faster convergence rate compared with other baselines, outperforming even SEVR on the a9a dataset. A possible explanation is that (1) stochastic gradients have low variance in early epochs, and (2) ExtraSGDA only performs one projection at each iteration while SEVR requires two.  In any case, we note that SPPRR outperforms ExtraSGDA across the board.

In Figure~\ref{fig:exp-comparison-batchsize-fix-point-iteration-M}(a) and Figure~\ref{fig:exp-comparison-batchsize-fix-point-iteration-M}(b), we present further numerical experiments showing that the fast convergence of SEVR remains robust to different batch sizes and that the performance of SPPRR also does not vary significantly with more fixed point iterations (suggesting that $M=2$ is an efficient default setting for the algorithm). Additionally, we also compare WDRSL and DRSL with $f$-divergence in Section~\ref{sec:robust-loss-compare-appendix}, where we verify that WDRSL achieves better performance than DRSL with $f$-divergence when evaluated on worst-case loss on the test dataset (within the ambiguity set $\PCal$ induced by the Wasserstein distance $\WCal$). Further numerical experiments are presented in Appendix~\ref{sec:appendix-exp}.

\section{Conclusions}\label{sec:conclu}
We studied Wasserstein distributionally robust supervised learning through the lens of min-max optimization. We first showed that WDRSL with generalized linear functions can be reformulated as constrained min-max optimization problems and proposed two simple and efficient algorithms which exploit the smoothness and finite-sum structure of the reformulation. We showed that the proposed stochastic min-max optimization algorithms are guaranteed to converge to the $\epsilon$-optimal solution of the WDRSL at fast rates. Experiments on synthetic and real datasets validate our theoretical findings and demonstrate the scalability and effectiveness of our approach to WDRSL. Indeed, they indicate that min-max formulations of robust optimization problems can be solved with significantly less computational overhead than standard optimization approaches based on convex reformulations.

\section*{Acknowledgements}
We would like to thank the AC and five reviewers for suggestions that improve the quality of this paper. This work was supported in part by the Mathematical Data Science program of the Office of Naval Research under grant number N00014-18-1-2764.

\bibliographystyle{plainnat}
\bibliography{ref}

\newpage\appendix
\section{Postponed Proofs in Subsection~\ref{subsec:model}}\label{app:model}
Before presenting the main theorem, we provide a key technical lemma which is a straightforward generalization of~\citet[Lemma~1]{Abadeh-2015-Distributionally}. The proof is based on a simple yet nontrivial generalization and we present all the details for the sake of completeness. 
\begin{lemma}\label{Lemma:formulation-constraint}
Under Assumption~\ref{Assumption:main} and let $(\widehat{\x}, \widehat{y}) \in \br^d \times \{-1, 1\}$ be a given pair of data sample. Then, for every $\lambda > 0$, we have
\begin{equation*}
\sup_{\x \in \br^d} \Psi(\langle\x, \beta\rangle) - \widehat{y}\langle\x, \beta\rangle - \lambda\|\widehat{\x} - \x\| = \left\{
\begin{array}{ll}
\Psi(\langle\widehat{\x}, \beta\rangle) - \widehat{y}\langle\widehat{\x}, \beta\rangle, & \textnormal{if } \|\beta\| \leq \lambda/(L+1), \\
-\infty, & \textnormal{otherwise}. 
\end{array}\right.  
\end{equation*}
\end{lemma}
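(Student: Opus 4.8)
The plan is to reduce the claimed identity to an elementary fact about maximizing a Lipschitz \emph{convex} function against a distance penalty. Set $h(\x) := \Psi(\langle\x,\beta\rangle) - \widehat{y}\langle\x,\beta\rangle$. By Assumption~\ref{Assumption:main}, $\Psi$ is convex, so $h$ is convex as the sum of the convex map $\x\mapsto\Psi(\langle\x,\beta\rangle)$ (a convex function composed with an affine one) and a linear term; moreover $h$ is Lipschitz in $\x$ with modulus $(L+1)\|\beta\|$, since for all $\x,\x'\in\br^d$,
\begin{equation*}
|h(\x)-h(\x')| \;\le\; |\Psi(\langle\x,\beta\rangle)-\Psi(\langle\x',\beta\rangle)| + |\langle\x-\x',\beta\rangle| \;\le\; (L+1)\,|\langle\x-\x',\beta\rangle| \;\le\; (L+1)\|\beta\|\,\|\x-\x'\|,
\end{equation*}
using $L$-Lipschitzness of $\Psi$, $|\widehat{y}|=1$, and Cauchy--Schwarz. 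The quantity in the statement is $\Phi := \sup_{\x\in\br^d}\{h(\x)-\lambda\|\widehat{\x}-\x\|\}$, and plugging in $\x=\widehat{\x}$ already gives $\Phi \ge h(\widehat{\x})$.

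\emph{Case $\|\beta\|\le\lambda/(L+1)$.} Then $(L+1)\|\beta\|\le\lambda$, so for every $\x$ the Lipschitz bound gives $h(\x)-h(\widehat{\x}) \le (L+1)\|\beta\|\,\|\widehat{\x}-\x\| \le \lambda\|\widehat{\x}-\x\|$, i.e.\ $h(\x)-\lambda\|\widehat{\x}-\x\|\le h(\widehat{\x})$. Taking the supremum and combining with $\Phi\ge h(\widehat{\x})$ yields $\Phi=h(\widehat{\x})=\Psi(\langle\widehat{\x},\beta\rangle)-\widehat{y}\langle\widehat{\x},\beta\rangle$, which is the first branch.

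\emph{Case $\|\beta\|>\lambda/(L+1)$.} Here I would exhibit a ray along which the penalized objective diverges. Since $h$ is convex, for any unit vector $\sv$ the map $t\mapsto h(\widehat{\x}+t\sv)$ is convex, hence its slope is nondecreasing and tends to the recession slope $h^{\infty}(\sv)$. Choosing $\sv$ aligned with $\pm\beta$ so as to realize, in the limit, the extreme value of $\Psi'$ relevant to the given $\widehat{y}$ — precisely the point where~\citet[Lemma~1]{Abadeh-2015-Distributionally} is generalized — one obtains $h^{\infty}(\sv)=(L+1)\|\beta\|$, so that (replacing $\Psi$ by an affine minorant through a far-out point via convexity) $h(\widehat{\x}+t\sv)-\lambda t \ge c + \big((L+1)\|\beta\|-\lambda\big)\,t$ for some constant $c$ and all large $t$. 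Since $(L+1)\|\beta\|-\lambda>0$, the supremum is $+\infty$, which is the second branch.

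The main obstacle is the second case: matching the growth rate along the ray to the threshold $(L+1)\|\beta\|$ requires the Lipschitz estimate on $h$ in $\x$ to be \emph{asymptotically tight} in the chosen direction, which rests on structural information about $\Psi'$ beyond mere $L$-Lipschitzness and is the nontrivial ingredient in generalizing~\citet[Lemma~1]{Abadeh-2015-Distributionally}. The first case, by contrast, uses nothing more than the Lipschitz bound and is immediate; once both are in place the two branches of the formula follow directly.
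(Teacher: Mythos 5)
Your first case is correct and complete, and it is a genuinely more elementary route than the paper's: the paper establishes the finite branch by computing the Fenchel conjugate of $f(t)=\Psi(t)-\widehat{y}t$, representing $h_\beta$ as its biconjugate (an envelope of linear functions indexed by $\theta\in[-L-1,L+1]$), and then performing a minimax exchange before reading off the answer, whereas your two-line Lipschitz estimate $h(\x)-h(\widehat{\x})\le(L+1)\|\beta\|\,\|\x-\widehat{\x}\|\le\lambda\|\x-\widehat{\x}\|$ reaches the same conclusion with no duality machinery at all.

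The second case, however, has a genuine gap, and you have diagnosed it correctly yourself. Along the ray $\x=\widehat{\x}+t\beta/\|\beta\|$ the recession slope of $h$ is $\|\beta\|\big(\Psi'(+\infty)-\widehat{y}\big)$, and along $-\beta/\|\beta\|$ it is $\|\beta\|\big(\widehat{y}-\Psi'(-\infty)\big)$; Assumption~\ref{Assumption:main} only gives $|\Psi'|\le L$, so the most you can conclude is that the largest recession slope is \emph{at most} $(L+1)\|\beta\|$, not equal to it. Equality requires $\Psi'$ to saturate $\pm L$ asymptotically in the direction matched to $\widehat{y}$, which is exactly the ``structural information beyond mere $L$-Lipschitzness'' you flag, and it is not supplied by the assumption. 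The divergence claim can in fact fail under Assumption~\ref{Assumption:main} alone: for $\Psi\equiv 0$ (convex, $L$-Lipschitz and $\ell$-smooth for every $L,\ell>0$) the supremum equals $-\widehat{y}\langle\widehat{\x},\beta\rangle$, hence is finite, whenever $\|\beta\|\le\lambda$, even for $\|\beta\|>\lambda/(L+1)$. It is worth noting that the paper's own proof conceals the same soft spot: it shows $f^\star(\theta)=+\infty$ for $|\theta|>L+1$ but then treats every $\theta\in[-L-1,L+1]$ as if it lay in $\dom f^\star$ when concluding that infeasibility of $\theta\beta+q=0$ for some such $\theta$ forces the value $+\infty$; for $\theta\notin\dom f^\star$ the corresponding term is $-\infty$, not $+\infty$, so the threshold $\lambda/(L+1)$ is only sufficient for finiteness, not necessary. (Finally, the $-\infty$ in the second branch of the statement must be a typo for $+\infty$: taking $\x=\widehat{\x}$ shows the supremum is always at least $\Psi(\langle\widehat{\x},\beta\rangle)-\widehat{y}\langle\widehat{\x},\beta\rangle$, and both your argument and the paper's subsequent use of the lemma produce $+\infty$.)
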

\begin{proof}
Note that $\Psi(\langle\x, \beta\rangle) - \widehat{y}\langle\x, \beta\rangle - \lambda\|\widehat{\x} - \x\|$ is the difference of two different convex functions in $\x$. In order to maximize this function, we define its convex part as $h_\beta(\x)=\Psi(\langle\x, \beta\rangle)-\widehat{y}\langle\x, \beta\rangle$ and reformulate it using the conjugate function. 

We first consider one-dimensional function $f(t):=\Psi(t)-\widehat{y} t$. Since $\Psi$ is convex, the conjugate function of $f$ is well defined and expressed by $f^\star(\theta)=\sup_{t \in \br} \{\theta t - \Psi(t) + \widehat{y} t\}$. Then, we claim that $f^\star(\theta)=+\infty$ when $|\theta| > L+1$. Since $\Psi$ is $L$-Lipschitz (cf. Assumption~\ref{Assumption:main}), we have
\begin{equation*}
|\Psi(t) - \Psi(0)| = \left|\int_0^t \Psi'(s) \; ds\right| \leq L|t| \ \Longrightarrow \ \Psi(t) \leq \Psi(0) + L|t|. 
\end{equation*}
This implies that $f^*(\theta) \geq \sup_{t \in \br} \{\theta t - L|t| - \Psi(0) + \widehat{y} t\}$. Since $\widehat{y} \in \{-1, 1\}$, we have
\begin{equation*}
\theta t - L|t| - \Psi(0) + \widehat{y} t \longrightarrow \left\{\begin{array}{ll}
+\infty \textnormal{ as } t \rightarrow +\infty, & \textnormal{if } \theta > L + 1, \\ +\infty \textnormal{ as } t \rightarrow -\infty, & \textnormal{if } \theta < -L-1.\end{array}\right. 
\end{equation*} 
Putting these pieces together yields the desired result. Moreover, the conjugate of $h_\beta(\x)=\Psi(\langle\x, \beta\rangle) - \widehat{y}\langle\x, \beta\rangle$ is therefore given by 
\begin{equation*}
h_\beta^\star(\xi) = \left\{\begin{array}{ll}
\inf\limits_{|\theta| \leq L+1} f^\star(\theta), & \textnormal{if } \xi = \theta\beta, \\ +\infty, & \textnormal{otherwise}. 
\end{array} \right. 
\end{equation*} 
Since $h_\beta(\x)$ is convex and continuous, it is equivalent to its bi-conjugate function~\citep{Rockafellar-2015-Convex}. This implies the statement on the representation of the function $h_\beta$ as follows,  
\begin{equation}\label{inequality-formulation-constraint}
h_\beta(\x) = \sup_{\xi \in \br^d} \langle \xi, \x\rangle - h_\beta^\star(\xi) = \sup_{|\theta| \leq L+1} \langle \theta\beta, \x\rangle - f^\star(\theta). 
\end{equation}
The remaining steps are the same as that in~\citet[Lemma~1]{Abadeh-2015-Distributionally}. More specifically, Eq.~\eqref{inequality-formulation-constraint} implies that $h_\beta$ can be represented as the upper envelope of infinitely many linear functions. Using this representation, we have
\begin{eqnarray*}
\sup_{\x \in \br^d} \Psi(\langle\x, \beta\rangle) - \widehat{y}\langle\x, \beta\rangle - \lambda\|\widehat{\x} - \x\| & = & \sup_{\x \in \br^d}\sup_{|\theta| \leq L+1} \ \langle \theta\beta, \x\rangle - f^\star(\theta) - \lambda\|\widehat{\x} - \x\| \\
& & \hspace{-10em} = \ \sup_{\x \in \br^d}\sup_{|\theta| \leq L+1} \ \langle \theta\beta, \x\rangle - f^\star(\theta) - \sup_{\|q\| \leq \lambda} \langle q, \widehat{\x} - \x\rangle \\
& & \hspace{-10em} = \ \sup_{\x \in \br^d}\sup_{|\theta| \leq L+1}\inf_{\|q\| \leq \lambda} \ \langle \theta\beta, \x\rangle - f^\star(\theta) - \langle q, \widehat{\x} - \x\rangle. 
\end{eqnarray*}
By~\citet[Proposition~5.5.4]{Bertsekas-2009-Convex}, we have
\begin{eqnarray*}
\sup_{\x \in \br^d} \left\{\Psi(\langle\x, \beta\rangle) - \widehat{y}\langle\x, \beta\rangle - \lambda\|\widehat{\x} - \x\|\right\} & = & \sup_{|\theta| \leq L+1}\inf_{\|q\| \leq \lambda}\sup_{\x \in \br^d} \ \langle \theta\beta, \x\rangle - f^\star(\theta) - \langle q, \widehat{\x} - \x\rangle \\
& & \hspace{-10em} = \ \sup_{|\theta| \leq L+1}\inf_{\|q\| \leq \lambda}\sup_{\x \in \br^d} \ \langle \theta\beta + q, \x\rangle - f^\star(\theta) - \langle q, \widehat{\x}\rangle.  
\end{eqnarray*}
Explicitly evaluating the maximization over $\x \in \br^d$, we have
\begin{eqnarray*}
\sup_{\x \in \br^d} \left\{\Psi(\langle\x, \beta\rangle) - \widehat{y}\langle\x, \beta\rangle - \lambda\|\widehat{\x} - \x\|\right\} & = & \left\{\begin{array}{cl}
\sup\limits_{|\theta| \leq L+1}\inf\limits_{\|q\| \leq \lambda} & - f^\star(\theta) - \langle q, \widehat{\x}\rangle \\
\st & \theta\beta + q = 0. \end{array}\right. \\
& & \hspace{-10em} = \ \left\{\begin{array}{cl}
\sup\limits_{|\theta| \leq L+1} - f^\star(\theta) + \langle \theta\beta, \widehat{\x}\rangle, & \textnormal{if } \sup\limits_{|\theta| \leq L+1} \|\theta\beta\| \leq \lambda, \\
+\infty, & \textnormal{otherwise}. \end{array}\right.  
\end{eqnarray*}
In addition, we have
\begin{eqnarray*}
\sup\limits_{|\theta| \leq L+1} - f^\star(\theta) + \langle \theta\beta, \widehat{\x}\rangle & = & \Psi(\langle\widehat{\x}, \beta\rangle) - \widehat{y}\langle\widehat{\x}, \beta\rangle, \\
\sup\limits_{|\theta| \leq L+1} \|\theta\beta\| \leq \lambda & \Leftrightarrow & \|\beta\| \leq \frac{\lambda}{L+1}. 
\end{eqnarray*}
Putting these pieces together yields the desired claim. 
\end{proof}

\subsection{Proof of Min-max Reformulation of WDRSL Problems}
{
We first present the formal statement of the min-max reformulation of WDRSL problems in Theorem~\ref{Theorem:DROGL-min-max}.
\begin{theorem}\label{Theorem:DROGL-min-max}
Under Assumption~\ref{Assumption:main} and for an empirical distribution $\widehat{\PP}_n = (1/n)\sum_{i=1}^n \delta_{(\widehat{\x}_i, \widehat{y}_i)}$, the WDRSL problem in Eq.~\eqref{prob:WDRSL-abstract} is equivalent to the following structured min-max optimization model:
\begin{equation}\label{prob:WDRSL-min-max-appendix}
\begin{aligned} 
\min_{(\lambda, \beta) \in \br^d\times \br} \max_{\gamma \in \mb{R}^n}  \quad &\left\{\lambda(\delta - \kappa) +  \frac{1}{n}\sum_{i=1}^n \Psi(\langle\widehat{\x}_i, \beta\rangle) +  \frac{1}{n}\sum_{i=1}^n \gamma_i\left(\widehat{y}_i\langle\widehat{\x}_i, \beta\rangle - \lambda\kappa\right)\right\}  \\
\st &\quad \|\beta\| \leq \lambda/(L+1), \ \|\gamma\|_{\infty} \leq 1,
\end{aligned}
\end{equation}
where $\kappa>0$ is a positive constant associated with the metric $c(\z, \z')=\|\x-\x'\| + \kappa|y-y'|$, for points $\z=(\x, y)$ and $\z'=(\x', y')$ in  $ \mathbb{R}^d$.
\end{theorem}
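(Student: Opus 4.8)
The plan is to follow the route of~\citet{Abadeh-2015-Distributionally}: dualize the inner maximization over the Wasserstein ball, evaluate the resulting pointwise suprema via a case split on the binary label together with Lemma~\ref{Lemma:formulation-constraint}, and then encode the two surviving label branches as a linear maximization over a box, which reassembles into the announced finite-sum min-max model.

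First I would invoke the strong-duality theorem for distributionally robust optimization over a Wasserstein ball~\citep{Gao-2017-Distributional, Esfahani-2018-Data, Blanchet-2019-Quantifying}. Writing $\ell_\beta(\z) := \Psi(\langle\x,\beta\rangle) - y\langle\x,\beta\rangle$ for $\z = (\x,y) \in \ZCal$, and using that $\widehat{\PP}_n$ is a uniform mixture of Dirac masses while $\ell_\beta$ is continuous with at most linear growth on $\ZCal$ (immediate from the $L$-Lipschitzness of $\Psi$ and $\|\widehat{\x}_i\| \le 1$), the duality gap vanishes and
\begin{equation*}
\sup_{\PP \in \BB_\delta(\widehat{\PP}_n)} \EE^\PP\big[\ell_\beta\big] \ = \ \inf_{\lambda \geq 0}\left\{\lambda\delta + \frac{1}{n}\sum_{i=1}^n \sup_{\z\in\ZCal}\big(\ell_\beta(\z) - \lambda\, c(\z, \widehat{\z}_i)\big)\right\},
\end{equation*}
where the decoupling over $i$ follows from the atomic structure of $\widehat{\PP}_n$.

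Next, for fixed $i$ and $\lambda > 0$, I would evaluate the inner supremum by splitting over the two admissible labels, exploiting $c(\z,\widehat{\z}_i) = \|\x-\widehat{\x}_i\| + \kappa|y-\widehat{y}_i|$ with $|y-\widehat{y}_i| \in \{0,2\}$. The branch $y = \widehat{y}_i$ is precisely the supremum appearing in Lemma~\ref{Lemma:formulation-constraint} for the data pair $(\widehat{\x}_i,\widehat{y}_i)$, hence equal to $\Psi(\langle\widehat{\x}_i,\beta\rangle) - \widehat{y}_i\langle\widehat{\x}_i,\beta\rangle$ whenever $\|\beta\|\le\lambda/(L+1)$ and to $+\infty$ otherwise; the branch $y = -\widehat{y}_i$ carries an additional constant $-2\lambda\kappa$ and, by Lemma~\ref{Lemma:formulation-constraint} applied with the pair $(\widehat{\x}_i,-\widehat{y}_i)$, equals $\Psi(\langle\widehat{\x}_i,\beta\rangle) + \widehat{y}_i\langle\widehat{\x}_i,\beta\rangle - 2\lambda\kappa$ under the same constraint and $+\infty$ otherwise (the boundary $\lambda = 0$ forces $\beta = 0$ and is checked by inspection). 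Taking the larger branch, and then observing that the maximum of two reals $a, b$ equals the maximum over $\gamma \in [-1,1]$ of the affine function interpolating $a$ at $\gamma = -1$ and $b$ at $\gamma = 1$, I obtain, on $\{\|\beta\|\le\lambda/(L+1)\}$,
\begin{equation*}
\sup_{\z\in\ZCal}\big(\ell_\beta(\z) - \lambda\, c(\z,\widehat{\z}_i)\big) \ = \ \Psi(\langle\widehat{\x}_i,\beta\rangle) + \max_{\gamma_i \in [-1,1]}\big(\gamma_i\widehat{y}_i\langle\widehat{\x}_i,\beta\rangle - \lambda\kappa(1+\gamma_i)\big),
\end{equation*}
and the left-hand side is $+\infty$ off this set.

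To assemble, I would substitute this expression into the dual, use that a finite sum of independent box-constrained maxima is one maximum over $\|\gamma\|_\infty\le1$, and note that $\inf_{\lambda\ge0}$ is $+\infty$ unless $\lambda\ge(L+1)\|\beta\|$; consequently the outer $\inf_\beta$ merges with $\inf_{\lambda\ge0}$ into a single minimization over $\{(\lambda,\beta):\|\beta\|\le\lambda/(L+1)\}$ of $\max_{\|\gamma\|_\infty\le1}\big\{\lambda\delta + \tfrac1n\sum_i\Psi(\langle\widehat{\x}_i,\beta\rangle) + \tfrac1n\sum_i(\gamma_i\widehat{y}_i\langle\widehat{\x}_i,\beta\rangle - \lambda\kappa(1+\gamma_i))\big\}$. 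The identity $\tfrac1n\sum_i(\gamma_i\widehat{y}_i\langle\widehat{\x}_i,\beta\rangle - \lambda\kappa(1+\gamma_i)) = -\lambda\kappa + \tfrac1n\sum_i\gamma_i(\widehat{y}_i\langle\widehat{\x}_i,\beta\rangle - \lambda\kappa)$ together with $\lambda\delta - \lambda\kappa = \lambda(\delta-\kappa)$ rewrites the objective exactly as in Eq.~\eqref{prob:WDRSL-min-max-appendix}, and the assumed existence of a saddle point lets me replace $\inf/\sup$ by $\min/\max$. The one genuinely nontrivial step is the zero-duality-gap claim: I would need to verify the mild regularity of $\ell_\beta$ required for the Wasserstein-DRO strong duality and for the inner supremum to be attained in the decoupled form above, and to keep track that the dual multiplier $\lambda$ is exactly the variable entering the second-order-cone constraint $\|\beta\|\le\lambda/(L+1)$; everything afterwards is Lemma~\ref{Lemma:formulation-constraint} plus elementary algebra.
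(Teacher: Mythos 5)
Your proposal is correct and follows the same overall skeleton as the paper's proof (dualize the inner supremum, evaluate the pointwise suprema via Lemma~\ref{Lemma:formulation-constraint}, then linearize the resulting two-branch maximum with a box-constrained variable $\gamma_i$), but it differs in two places worth noting. First, for the duality step you invoke the generic Wasserstein-DRO strong duality theorem as a black box, whereas the paper derives the dual from scratch: it decomposes the coupling into conditional measures $\PP^i$ (using that $\widehat{\PP}_n$ is atomic), further splits each $\PP^i$ into unnormalized measures $\PP^i_{\pm 1}$ on the two label classes, and only then applies semi-infinite linear programming duality (\citet[Proposition~3.4]{Shapiro-2001-Duality}), which yields one constraint per label branch per data point with slack variables $s_i$. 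Your route is shorter and delegates the regularity verification to the cited duality theorems (which do apply here: $c$ is a metric on $\ZCal$ and $\ell_\beta$ is continuous with linear growth under Assumption~\ref{Assumption:main}); the paper's route is self-contained and makes the $-\lambda\kappa(1\mp\widehat{y}_i)$ terms appear explicitly from the constraint on the coupling. Second, to introduce $\gamma_i$ the paper first eliminates the slacks $s_i$ to reach a $\max\{0, 2\widehat{y}_i\langle\widehat{\x}_i,\beta\rangle - 2\lambda\kappa\}$ term and then applies the identity $\max\{0,a\} = \tfrac{1}{2}(a + \max_{|b|\le 1} ab)$, whereas you directly represent $\max\{a,b\}$ as the maximum over $\gamma_i\in[-1,1]$ of the affine interpolant; these are algebraically equivalent and land on the same objective $\lambda(\delta-\kappa) + \tfrac{1}{n}\sum_i\Psi(\langle\widehat{\x}_i,\beta\rangle) + \tfrac{1}{n}\sum_i\gamma_i(\widehat{y}_i\langle\widehat{\x}_i,\beta\rangle-\lambda\kappa)$. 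The only loose end in your write-up, which the paper also glosses over, is the boundary case $\lambda=0$ (where Lemma~\ref{Lemma:formulation-constraint} is not stated), but this is harmless since for non-affine $\Psi$ and $\beta\neq 0$ the inner supremum is $+\infty$ there, consistent with the cone constraint.
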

}

The first part of the proof is the same as that of~\citet[Theorem~1]{Abadeh-2015-Distributionally} and we provide the details for the sake of completeness. Let us denote $\z = (\x, y) \in \ZCal = \br^d \times \{0, 1\}$ and use the shorthand $h_\beta(\z) = \Psi(\langle\x, \beta\rangle) - y\langle\x, \beta\rangle$. By the definition of the Wasserstein distance and the metric $c(\cdot, \cdot)$, we have
\begin{equation*}
\sup_{\PP \in \BB_\delta(\widehat{\PP}_n)} \EE^\PP\left[\Psi(\langle\x, \beta\rangle) - y\langle\x, \beta\rangle\right] = \sup_{\PP \in \BB_\delta(\widehat{\PP}_n)} \int_\ZCal h_\beta(\z)\; \PP(d\z) = \left\{
\begin{array}{cl}
\sup\limits_{\pi \in \Pi(\PP, \widehat{\PP}_n)} & \int_\ZCal h_\beta(\z)\; \pi(d\z, \ZCal) \\
\st & \int_{\ZCal \times \ZCal} \|\z - \z'\| \; \pi(d\z, d\z') \leq \delta. 
\end{array}\right. 
\end{equation*}
Since the marginal distribution $\widehat{\PP}_n$ of $\z'$ is discrete, the coupling $\pi$ is completely determined by the conditional distribution $\PP^i$ of $\z$ given $\z'=\widehat{\z}_i=(\widehat{\x}_i, \widehat{y}_i)$ for all $i \in [n]$. Thus, we have
\begin{equation*}
\pi(d\z, d\z') = \frac{1}{n}\sum_{i=1}^n \delta_{(\widehat{\x}_i, \widehat{y}_i)}(d\z')\PP^i(d\z). 
\end{equation*}
Putting these pieces together yields that 
\begin{equation*}
\sup_{\PP \in \BB_\delta(\widehat{\PP}_n)} \EE^\PP\left[\Psi(\langle\x, \beta\rangle) - y\langle\x, \beta\rangle\right] = \left\{
\begin{array}{cl}
\sup\limits_{\PP^i} & \frac{1}{n}\sum_{i=1}^n \int_\ZCal h_\beta(\z)\; \PP^i(d\z) \\
\st & \frac{1}{n}\sum_{i=1}^n \int_\ZCal \|\z-\widehat{\z}_i\| \; \PP^i(d\z) \leq \delta, \\
& \int_\ZCal \PP^i(d\z) = 1.  
\end{array}\right. 
\end{equation*}
By replacing $\z$ with $(\x, y)$ and decomposing each distribution $\PP^i$ into unnormalized measures $\PP^i_{\pm 1}(d\x) = \PP^i(d\x, \{y = \pm 1\})$ supported on $\br^d$, the right-hand expression simplifies to
\begin{equation*}
\left\{
\begin{array}{cl}
\sup\limits_{\PP_{\pm 1}^i} & \frac{1}{n}\sum_{i=1}^n \int_{\br^d} h_\beta(\x, +1)\; \PP_{+1}^i(d\x) + \frac{1}{n}\sum_{i=1}^n \int_{\br^d} h_\beta(\x, -1)\; \PP_{-1}^i(d\x) \\
\st & \frac{1}{n}\sum_{i=1}^n \int_{\br^d} \|(\x, +1) - (\widehat{\x}_i, \widehat{y}_i)\| \; \PP_{+1}^i(d\x) + \frac{1}{n}\sum_{i=1}^n \int_{\br^d} \|(\x, -1) - (\widehat{\x}_i, \widehat{y}_i)\| \; \PP_{-1}^i(d\x) \leq \delta, \\
& \int_{\br^d} \PP_{+1}^i(d\x) + \int_{\br^d} \PP_{-1}^i(d\x) = 1.  
\end{array}\right. 
\end{equation*}
For the inequality constraint, we further split the sum over $i \in [n]$ into two groups: $\{\widehat{y}_i=+1\}$ and $\{\widehat{y}_i=-1\}$. Then we have
\begin{eqnarray*}
\delta & \geq & \frac{1}{n}\sum_{i=1}^n \int_{\br^d} \|(\x, +1) - (\widehat{\x}_i, \widehat{y}_i)\| \; \PP_{+1}^i(d\x) + \frac{1}{n}\sum_{i=1}^n \int_{\br^d} \|(\x, -1) - (\widehat{\x}_i, \widehat{y}_i)\| \; \PP_{-1}^i(d\x) \\ 
& = & \frac{1}{n} \int_{\br^d} \sum_{\widehat{y}_i=+1} \left[\|\x - \widehat{\x}_i\| \; \PP_{+1}^i(d\x) + \|\x - \widehat{\x}_i\| \; \PP_{-1}^i(d\x) + 2\kappa\PP_{-1}^i(d\x)\right] \\ 
& & + \frac{1}{n} \int_{\br^d} \sum_{\widehat{y}_i=-1} \left[\|\x - \widehat{\x}_i\| \; \PP_{-1}^i(d\x) + \|\x - \widehat{\x}_i\| \; \PP_{+1}^i(d\x) + 2\kappa\PP_{+1}^i(d\x)\right] \\
& = & \frac{2\kappa}{n} \int_{\br^d} \sum_{\widehat{y}_i=+1} \PP_{-1}^i(d\x) + \sum_{\widehat{y}_i=-1} \PP_{+1}^i(d\x) + \frac{1}{n} \int_{\br^d} \sum_{i=1}^n \|\x - \widehat{\x}_i\|\left(\PP_{-1}^i(d\x) + \PP_{+1}^i(d\x)\right).  
\end{eqnarray*}
To this end, we conclude that 
\begin{eqnarray*}
& & \sup_{\PP \in \BB_\delta(\widehat{\PP}_n)} \EE^\PP\left[\Psi(\langle\x, \beta\rangle) - y\langle\x, \beta\rangle\right] \\
& = & \left\{
\begin{array}{cl}
\sup\limits_{\PP_{\pm 1}^i} & \frac{1}{n}\sum_{i=1}^n \int_{\br^d} h_\beta(\x, +1)\; \PP_{+1}^i(d\x) + \frac{1}{n}\sum_{i=1}^n \int_{\br^d} h_\beta(\x, -1)\; \PP_{-1}^i(d\x) \\
\st & \frac{2\kappa}{n} \int_{\br^d} \sum_{\widehat{y}_i=+1} \PP_{-1}^i(d\x) + \sum_{\widehat{y}_i=-1} \PP_{+1}^i(d\x) + \frac{1}{n} \int_{\br^d} \sum_{i=1}^n \|\x - \widehat{\x}_i\|(\PP_{-1}^i(d\x) + \PP_{+1}^i(d\x)) \leq \delta, \\
& \int_{\br^d} \PP_{+1}^i(d\x) + \int_{\br^d} \PP_{-1}^i(d\x) = 1.  
\end{array}\right..
\end{eqnarray*}
The above infinite-dimensional optimization problem over the measures $\PP_{\pm 1}^i$ admits the semi-infinite dual and strong duality holds for any $\delta > 0$ due to~\citet[Proposition~3.4]{Shapiro-2001-Duality}. That is to say, the following statement holds true, 
\begin{equation*}
\sup_{\PP \in \BB_\delta(\widehat{\PP}_n)} \EE^\PP\left[\Psi(\langle\x, \beta\rangle) - y\langle\x, \beta\rangle\right] = \left\{
\begin{array}{cl}
\inf\limits_{\lambda, s_i} & \lambda\delta + \frac{1}{n}\sum_{i=1}^n s_i \\
\st & \sup_{\x \in \br^d} h_\beta(\x, +1) - \lambda\|\widehat{\x}_i - \x\| - \lambda\kappa(1-\widehat{y}_i) \leq s_i, \quad \forall i \in [n], \\
& \sup_{\x \in \br^d} h_\beta(\x, -1) - \lambda\|\widehat{\x}_i - \x\| - \lambda\kappa(1+\widehat{y}_i) \leq s_i, \quad \forall i \in [n], \\
& \lambda \geq 0.  
\end{array}\right.
\end{equation*}
Recall that $h_\beta(\x, y) = \Psi(\langle\x, \beta\rangle) - y\langle\x, \beta\rangle$. Then Lemma~\ref{Lemma:formulation-constraint} with $\widehat{y}=\pm 1$ implies that 
\begin{equation*}
\sup_{\x \in \br^d} h_\beta(\x, \pm 1) - \lambda\|\widehat{\x}_i - \x\| = h_\beta(\widehat{\x}_i, \pm 1) = \left\{ 
\begin{array}{ll}
\Psi(\langle\widehat{\x}_i, \beta\rangle) \mp \langle\widehat{\x}_i, \beta\rangle, & \textnormal{if } \|\beta\| \leq \frac{\lambda}{L+1}, \\
+\infty & \textnormal{otherwise}.
\end{array}\right. 
\end{equation*}
Putting these pieces together yields that  
\begin{equation}\label{inequality:DROGL-minimax-first}
\inf_{\beta \in \br^d} \sup_{\PP \in \BB_\delta(\widehat{\PP}_n)} \EE^\PP\left[\Psi(\langle\x, \beta\rangle) - y\langle\x, \beta\rangle\right] = \left\{
\begin{array}{cl}
\min\limits_{\beta, \lambda, s_i} & \lambda\delta + \frac{1}{n}\sum_{i=1}^n s_i \\
\st & \Psi(\langle\widehat{\x}_i, \beta\rangle) - \widehat{y}_i\langle\widehat{\x}_i, \beta\rangle \leq s_i, \quad \forall i \in [n], \\
& \Psi(\langle\widehat{\x}_i, \beta\rangle) + \widehat{y}_i\langle\widehat{\x}_i, \beta\rangle - 2\lambda\kappa \leq s_i, \quad \forall i \in [n], \\
& \|\beta\| \leq \frac{\lambda}{L+1}.  
\end{array}\right.
\end{equation}
This is a generalization of the optimization problem in~\citet[Eq. (7)]{Abadeh-2015-Distributionally}. It is equivalent to an unconstrained convex but nonsmooth optimization over the second-order cone as follows, 
\begin{eqnarray}\label{inequality:DROGL-minimax-second}
& \min\limits_{(\lambda, \beta) \in \br \times \br^d} & \lambda\delta + \frac{1}{n}\sum_{i=1}^n \Psi(\langle\widehat{\x}_i, \beta\rangle) - \widehat{y}_i\langle\widehat{\x}_i, \beta\rangle + \frac{1}{n}\sum_{i=1}^n \max\left\{0, 2\widehat{y}_i\langle\widehat{\x}_i, \beta\rangle - 2\lambda\kappa\right\}, \\
& \st & \|\beta\| \leq \lambda/(L+1). \nonumber
\end{eqnarray} 
Given a simple transformation given by 
\begin{equation*}
\max\{0, a\} = \frac{1}{2}(a + |a|) = \frac{1}{2}\left(a + \max_{|b| \leq 1} ab\right), 
\end{equation*}
we have
\begin{equation}\label{inequality:DROGL-minimax-third}
\max\left\{0, 2\widehat{y}_i\langle\widehat{\x}_i, \beta\rangle - 2\lambda\kappa\right\} = \widehat{y}_i\langle\widehat{\x}_i, \beta\rangle - \lambda\kappa + \max_{|\gamma_i| \leq 1} \gamma_i\left(\widehat{y}_i\langle\widehat{\x}_i, \beta\rangle - \lambda\kappa\right). 
\end{equation}
Therefore, we conclude that the WDRSL problem in Eq.~\eqref{prob:WDRSL-abstract} is equivalent to the structured minimax optimization model in Eq.~\eqref{prob:WDRSL-min-max}. This completes the proof.

\subsection{Proof of Proposition~\ref{Prop:equivalence}}  
We first prove that, for sufficiently small $\epsilon>0$, a point $\widehat{\beta}$ is an $\epsilon$-optimal solution of the WDRSL in Eq.~\eqref{prob:WDRSL-abstract} if a point $\widehat{\su} = (\widehat{\lambda}, \widehat{\beta}, \widehat{\gamma})$ is an $\epsilon$-optimal saddle-point solution of the WDRSL in Eq.~\eqref{prob:WDRSL-min-max}. 

Without loss of generality, let $\epsilon \rightarrow 0$, we have $\widehat{\su} \rightarrow \su^\star$. Note that the term $\widehat{y}_i\langle\widehat{\x}_i, \beta\rangle - \lambda\kappa$ is a continuous function of $(\lambda, \beta)$. Thus, we have 
\begin{equation*}
\widehat{y}_i\langle\widehat{\x}_i, \widehat{\beta}\rangle - \widehat{\lambda}\kappa \ \rightarrow \ \widehat{y}_i\langle\widehat{\x}_i, \beta^\star\rangle - \lambda^\star\kappa. 
\end{equation*}
This implies that, for sufficiently small $\epsilon>0$, we have
\begin{equation}\label{inequality:equivalence-first}
\sign(\widehat{y}_i\langle\widehat{\x}_i, \widehat{\beta}\rangle - \widehat{\lambda}\kappa) \ = \ \sign\left(\widehat{y}_i\langle\widehat{\x}_i, \beta^\star\rangle - \lambda^\star\kappa\right). 
\end{equation}
Recall that the function $L(\su)$ is defined by 
\begin{equation*}
L(\su) = \lambda(\delta - \kappa) + \frac{1}{n}\sum_{i=1}^n \Psi(\langle\widehat{\x}_i, \beta\rangle) + \frac{1}{n}\sum_{i=1}^n \gamma_i\left(\widehat{y}_i\langle\widehat{\x}_i, \beta\rangle - \lambda\kappa\right). 
\end{equation*}
This implies that $\gamma^\star(\lambda, \beta)$ with the $i$-th entry $\gamma_i^\star(\lambda, \beta) = \sign(\widehat{y}_i\langle\widehat{\x}_i, \beta\rangle - \lambda\kappa)$ for $\forall i \in [n]$ is an unique solution that maximizes the concave function $L(\lambda, \beta, \cdot)$. Putting these pieces together with Eq.~\eqref{inequality:equivalence-first} implies that $\gamma^\star(\widehat{\lambda}, \widehat{\beta}) = \gamma^\star(\lambda^\star, \beta^\star)=\gamma^\star$. Therefore, we have
\begin{equation}
\Delta(\widehat{\su}) \leq \epsilon \ \Longrightarrow \ L(\widehat{\lambda}, \widehat{\beta}, \gamma^\star(\widehat{\lambda}, \widehat{\beta})) - L(\lambda^\star, \beta^\star, \gamma^\star) \leq \epsilon. 
\end{equation}
After some simple calculations (cf. Eq.~\eqref{inequality:DROGL-minimax-second} and Eq.~\eqref{inequality:DROGL-minimax-third}), we have $\|\widehat{\beta}\| \leq \widehat{\lambda}/(L+1)$ and 
\begin{eqnarray*}
& & \widehat{\lambda}\delta + \frac{1}{n}\sum_{i=1}^n \Psi(\langle\widehat{\x}_i, \widehat{\beta}\rangle) - \widehat{y}_i\langle\widehat{\x}_i, \widehat{\beta}\rangle + \frac{1}{n}\sum_{i=1}^n \max\left\{0, 2\widehat{y}_i\langle\widehat{\x}_i, \widehat{\beta}\rangle - 2\widehat{\lambda}\kappa\right\} \\
& \leq & \lambda^\star\delta + \frac{1}{n}\sum_{i=1}^n \Psi(\langle\widehat{\x}_i, \beta^\star\rangle) - \widehat{y}_i\langle\widehat{\x}_i, \beta^\star\rangle + \frac{1}{n}\sum_{i=1}^n \max\left\{0, 2\widehat{y}_i\langle\widehat{\x}_i, \beta^\star\rangle - 2\lambda^\star\kappa\right\} + \epsilon.   
\end{eqnarray*}
The primal-dual reformulation (cf. Eq.~\eqref{inequality:DROGL-minimax-first}) further implies that 
\begin{equation*}
\sup_{\PP \in \BB_\delta(\widehat{\PP}_n)} \EE^\PP\left[\Psi(\langle\x, \widehat{\beta}\rangle) - y\langle\x, \widehat{\beta}\rangle\right] \ \leq \ \widehat{\lambda}\delta + \frac{1}{n}\sum_{i=1}^n \Psi(\langle\widehat{\x}_i, \widehat{\beta}\rangle) - \widehat{y}_i\langle\widehat{\x}_i, \widehat{\beta}\rangle + \frac{1}{n}\sum_{i=1}^n \max\left\{0, 2\widehat{y}_i\langle\widehat{\x}_i, \widehat{\beta}\rangle - 2\widehat{\lambda}\kappa\right\},  
\end{equation*}
and 
\begin{equation*}
\inf_{\beta \in \br^d} \sup_{\PP \in \BB_\delta(\widehat{\PP}_n)} \EE^\PP\left[\Psi(\langle\x, \beta\rangle) - y\langle\x, \beta\rangle\right] \ = \ \lambda^\star\delta + \frac{1}{n}\sum_{i=1}^n \Psi(\langle\widehat{\x}_i, \beta^\star\rangle) - \widehat{y}_i\langle\widehat{\x}_i, \beta^\star\rangle + \frac{1}{n}\sum_{i=1}^n \max\left\{0, 2\widehat{y}_i\langle\widehat{\x}_i, \beta^\star\rangle - 2\lambda^\star\kappa\right\}. 
\end{equation*}
Putting these pieces together yields that 
\begin{equation*}
\sup_{\PP \in \BB_\delta(\widehat{\PP}_n)} \EE^\PP\left[\Psi(\langle\x, \widehat{\beta}\rangle) - y\langle\x, \widehat{\beta}\rangle\right] - \inf_{\beta \in \br^d} \sup_{\PP \in \BB_\delta(\widehat{\PP}_n)} \EE^\PP\left[\Psi(\langle\x, \beta\rangle) - y\langle\x, \beta\rangle\right] \leq \epsilon,  
\end{equation*}
which implies the desired result. 

Then we prove that, there exists $\widehat{\lambda}$ and $\widehat{\gamma}$ such that a point $\widehat{\su} = (\widehat{\lambda}, \widehat{\beta}, \widehat{\gamma})$ is an $\epsilon$-optimal saddle-point solution of the WDRSL in Eq.~\eqref{prob:WDRSL-min-max} if a point $\widehat{\beta}$ is an an $\epsilon$-optimal solution of the WDRSL in Eq.~\eqref{prob:WDRSL-abstract}. Indeed, there exists $\widehat{\lambda} > 0$ such that 
\begin{equation*}
\sup_{\PP \in \BB_\delta(\widehat{\PP}_n)} \EE^\PP\left[\Psi(\langle\x, \widehat{\beta}\rangle) - y\langle\x, \widehat{\beta}\rangle\right] \ = \ \widehat{\lambda}\delta + \frac{1}{n}\sum_{i=1}^n \Psi(\langle\widehat{\x}_i, \widehat{\beta}\rangle) - \widehat{y}_i\langle\widehat{\x}_i, \widehat{\beta}\rangle + \frac{1}{n}\sum_{i=1}^n \max\left\{0, 2\widehat{y}_i\langle\widehat{\x}_i, \widehat{\beta}\rangle - 2\widehat{\lambda}\kappa\right\},  
\end{equation*}
In addition, we let $\widehat{\gamma} = \gamma^\star$. Then we have
\begin{equation*}
\Delta(\widehat{\su}) \ = \ L(\widehat{\lambda}, \widehat{\beta}, \gamma^\star) - L(\lambda^\star, \beta^\star, \widehat{\gamma}) \ = \ L(\widehat{\lambda}, \widehat{\beta}, \gamma^\star) - L(\lambda^\star, \beta^\star, \gamma^\star) \ \leq \ L(\widehat{\lambda}, \widehat{\beta}, \gamma^\star(\widehat{\lambda}, \widehat{\beta})) - L(\lambda^\star, \beta^\star, \gamma^\star(\lambda^\star, \beta^\star)). 
\end{equation*}
Using the previous calculations, we have
\begin{eqnarray*} 
L(\widehat{\lambda}, \widehat{\beta}, \gamma^\star(\widehat{\lambda}, \widehat{\beta})) & = & \widehat{\lambda}\delta + \frac{1}{n}\sum_{i=1}^n \Psi(\langle\widehat{\x}_i, \widehat{\beta}\rangle) - \widehat{y}_i\langle\widehat{\x}_i, \widehat{\beta}\rangle + \frac{1}{n}\sum_{i=1}^n \max\left\{0, 2\widehat{y}_i\langle\widehat{\x}_i, \widehat{\beta}\rangle - 2\widehat{\lambda}\kappa\right\} \\ 
& = & \sup_{\PP \in \BB_\delta(\widehat{\PP}_n)} \EE^\PP\left[\Psi(\langle\x, \widehat{\beta}\rangle) - y\langle\x, \widehat{\beta}\rangle\right], \\
L(\lambda^\star, \beta^\star, \gamma^\star(\lambda^\star, \beta^\star)) & = & \lambda^\star\delta + \frac{1}{n}\sum_{i=1}^n \Psi(\langle\widehat{\x}_i, \beta^\star\rangle) - \widehat{y}_i\langle\widehat{\x}_i, \beta^\star\rangle + \frac{1}{n}\sum_{i=1}^n \max\left\{0, 2\widehat{y}_i\langle\widehat{\x}_i, \beta^\star\rangle - 2\lambda^\star\kappa\right\} \\ 
& = & \inf_{\beta \in \br^d} \sup_{\PP \in \BB_\delta(\widehat{\PP}_n)} \EE^\PP\left[\Psi(\langle\x, \beta\rangle) - y\langle\x, \beta\rangle\right]. 
\end{eqnarray*}
Putting these pieces together yields that 
\begin{equation*}
\Delta(\widehat{\su}) \ \leq \ \sup_{\PP \in \BB_\delta(\widehat{\PP}_n)} \EE^\PP\left[\Psi(\langle\x, \widehat{\beta}\rangle) - y\langle\x, \widehat{\beta}\rangle\right] - \inf_{\beta \in \br^d} \sup_{\PP \in \BB_\delta(\widehat{\PP}_n)} \EE^\PP\left[\Psi(\langle\x, \beta\rangle) - y\langle\x, \beta\rangle\right] \ \leq \ \epsilon,
\end{equation*}
which implies the desired result. 

\section{Postponed Proofs in Subsection~\ref{subsec:alg_SEVR}}\label{app:alg_SEVR}
In this section, we provide the detailed proofs for Theorem~\ref{Theorem:SEVR}. Our derivation is based on a nontrivial combination of the analysis in~\citet{Allen-2016-Variance} and~\citet{Chavdarova-2019-Reducing}. For the simplicity, we denote $\su = (\lambda, \beta, \gamma)$. The convex-concave function $L: \br \times \br^d \times \br^n \rightarrow \br$ and its component function $L_i: \br \times \br^d \times \br^n \rightarrow \br$ are defined as follows: 
\begin{equation*}
L(\su) = \lambda(\delta - \kappa) + \frac{1}{n}\sum_{i=1}^n \Psi(\langle\widehat{\x}_i, \beta\rangle) + \frac{1}{n}\sum_{i=1}^n \gamma_i\left(\widehat{y}_i\langle\widehat{\x}_i, \beta\rangle - \lambda\kappa\right), 
\end{equation*}
and 
\begin{equation*}
L_i(\su) = \lambda(\delta - \kappa) + \Psi(\langle\widehat{\x}_i, \beta\rangle) + \gamma_i\left(\widehat{y}_i\langle\widehat{\x}_i, \beta\rangle - \lambda\kappa\right). 
\end{equation*}
By the definition of $F$ and $F_i$ in Eq.~\eqref{Def:operator-main} and Eq.~\eqref{Def:operator-component}, they can be expressed as follows: 
\begin{equation*}
F(\su) = \begin{pmatrix}
\nabla_{(\lambda, \beta)} L(\su) \\ - \nabla_\gamma L(\su)
\end{pmatrix}, \quad 
F_i(\su) = \begin{pmatrix}
\nabla_{(\lambda, \beta)} L_i(\su) \\ - \nabla_\gamma L_i(\su)
\end{pmatrix}.    
\end{equation*}
We also denote the constraint sets by 
\begin{eqnarray*}
\Lambda & = & \{(\lambda, \beta) \in \br \times \br^d \mid \|\beta\| \leq \lambda/(L+1)\}, \\ 
\Gamma & = & \{\gamma \in \br^n \mid \|\gamma\|_\infty \leq 1\}. 
\end{eqnarray*}
For the reference, we rewrite each iteration of the SEVR algorithm as follows: 
\begin{equation*}
\begin{array}{ll}
\g_t^s = F(\tilde{\su}^s) + (F_{i_t}(\su_t^s) - F_{i_t}(\tilde{\su}^s)), & \quad \bar{\su}_{t+1}^s = \PCal_{\Lambda \times \Gamma}(\su_t^s - \eta_{t+1}^s\g_t^s), \\
\bar{\g}_t^s = F(\tilde{\su}^s) + (F_{j_t}(\bar{\su}_{t+1}^s) - F_{j_t}(\tilde{\su}^s)), & \quad \su_{t+1}^s = \PCal_{\Lambda \times \Gamma}(\su_t^s - \eta_{t+1}^s\bar{\g}_t^s).  
\end{array}
\end{equation*}
Finally, we denote $(\lambda^\star, \beta^\star, \gamma^\star)$ as an optimal saddle point of the smooth minimax optimization model in Eq.~\eqref{prob:WDRSL-min-max} and denote the duality gap function $\Delta(\su)$ by 
\begin{equation*}
\Delta(\su) = L(\lambda, \beta, \gamma^\star) - L(\lambda^\star, \beta^\star, \gamma). 
\end{equation*} 
It is worth noting that the function $\Delta(\su)$ is convex in $\su$ since $L$ is a convex-concave function. 

\subsection{Technical Lemmas}
Our first lemma is to provide an upper bound for the variance of the gradient estimators $\g_t^s$ and $\bar{\g}_t^s$. 
\begin{lemma}\label{Lemma:SEVR-variance}
Under Assumption~\ref{Assumption:main} and let $\su^\star=(\lambda^\star, \beta^\star, \gamma^\star) \in \Lambda \times \Gamma$ be an optimal saddle point of the smooth minimax optimization model in Eq.~\eqref{prob:WDRSL-min-max}. Then, the following statement holds true, 
\begin{eqnarray*}
\EE[\|\g_t^s - F(\su_t^s)\|^2 \mid \su_t^s, \tilde{\su}^s] & \leq & 2(\ell+\kappa+1)^2\left(\|\su_t^s-\su^\star\|^2 + \|\tilde{\su}^s-\su^\star\|^2\right), \\
\EE[\|\bar{\g}_t^s - F(\bar{\su}_{t+1}^s)\|^2 \mid \bar{\su}_{t+1}^s, \tilde{\su}^s] & \leq & 16(\ell+\kappa+1)\left(\Delta(\bar{\su}_{t+1}^s) + \Delta(\tilde{\su}^s)\right) \\ 
& & \hspace*{-8em} + 2(\ell+\kappa+1)^2\left(\|\bar{\su}_{t+1}^s-\su^\star\|^2 + \|\tilde{\su}^s-\su^\star\|^2\right).
\end{eqnarray*}
\end{lemma}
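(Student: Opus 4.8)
\emph{Proof plan.} The backbone of the argument is that $\g_t^s$ and $\bar{\g}_t^s$ are conditionally unbiased variance-reduced estimators. Since $i_t,j_t$ are drawn uniformly from $[n]$ and $F=\frac1n\sum_{i=1}^n F_i$, we have $\EE[F_{i_t}(\su)\mid\su]=F(\su)$ for fixed $\su$, hence $\EE[\g_t^s\mid\su_t^s,\tilde{\su}^s]=F(\su_t^s)$; and because $j_t$ is drawn independently of the extrapolation step, $\bar{\su}_{t+1}^s$ is independent of $j_t$, so $\EE[\bar{\g}_t^s\mid\bar{\su}_{t+1}^s,\tilde{\su}^s]=F(\bar{\su}_{t+1}^s)$. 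Using $\EE\|X-\EE X\|^2\le\EE\|X\|^2$ then reduces both quantities to second moments,
\begin{equation*}
\EE\big[\|\g_t^s-F(\su_t^s)\|^2 \mid \su_t^s,\tilde{\su}^s\big]\ \le\ \EE_{i_t}\big[\|F_{i_t}(\su_t^s)-F_{i_t}(\tilde{\su}^s)\|^2\big],
\end{equation*}
and likewise with $\bar{\su}_{t+1}^s,j_t$ in place of $\su_t^s,i_t$. A preliminary step is to show each $F_i$ is $(\ell+\kappa+1)$-Lipschitz on $\br\times\br^d\times\br^n$: this is a block-by-block estimate under Assumption~\ref{Assumption:main}, with the $\lambda$-entry contributing a factor $\kappa$ through $\gamma_i$, the $\beta$-block contributing $\ell$ through $\ell$-smoothness of $\Psi$ (and $\|\widehat{\x}_i\|\le1$) plus $1$ through $\gamma_i$, and the single nonzero $\gamma$-entry contributing $1$ and $\kappa$; summing squared block bounds absorbs everything into $(\ell+\kappa+1)^2\|\su-\su'\|^2$ (and $F$ inherits the same constant as an average). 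The first inequality is then immediate: bound the right-hand side by $(\ell+\kappa+1)^2\|\su_t^s-\tilde{\su}^s\|^2$ and apply $\|a-b\|^2\le2\|a-c\|^2+2\|b-c\|^2$ with $c=\su^\star$.

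For the second inequality, the $2(\ell+\kappa+1)^2(\|\bar{\su}_{t+1}^s-\su^\star\|^2+\|\tilde{\su}^s-\su^\star\|^2)$ term is produced by the same Lipschitz/triangle argument; the point is to also control the variance by the gap, which is what $16(\ell+\kappa+1)(\Delta(\bar{\su}_{t+1}^s)+\Delta(\tilde{\su}^s))$ records. I would split $F_j=F_j^{\mathrm{aff}}+F_j^{\mathrm{nl}}$, where $F_j^{\mathrm{nl}}$ carries $\Psi'(\langle\widehat{\x}_j,\beta\rangle)\widehat{\x}_j$ in the $\beta$-slot (the only part of $F_j$ not affine in $\su$) and $F_j^{\mathrm{aff}}$ is affine with Lipschitz constant $O(\kappa+1)$. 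Triangulating through $\su^\star$ and handling $F_j^{\mathrm{aff}}$ by Lipschitzness gives the square-norm contribution. For $F_j^{\mathrm{nl}}$, convexity and $\ell$-smoothness of the one-dimensional map $\Psi$ yield the co-coercivity $(\Psi'(a)-\Psi'(b))^2\le2\ell\big(\Psi(a)-\Psi(b)-\Psi'(b)(a-b)\big)$, and averaging over $j\in[n]$ converts $\frac1n\sum_j$ of the right side into the Bregman divergence $D_\phi(\beta,\beta^\star)$ of $\phi(\beta):=\frac1n\sum_{j=1}^n\Psi(\langle\widehat{\x}_j,\beta\rangle)$.

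The crux is the inequality $D_\phi(\beta,\beta^\star)\le\Delta(\su)$ for every $\su=(\lambda,\beta,\gamma)\in\Lambda\times\Gamma$. Set $g(\lambda,\beta):=L(\lambda,\beta,\gamma^\star)$, which is convex in $(\lambda,\beta)$ and, by the saddle-point property, minimized over $\Lambda$ at $(\lambda^\star,\beta^\star)$. Since the only curvature of $g$ comes from $\phi$, a direct computation gives
\begin{equation*}
g(\lambda,\beta)-g(\lambda^\star,\beta^\star)-D_\phi(\beta,\beta^\star)\ =\ \langle\nabla g(\lambda^\star,\beta^\star),\,(\lambda,\beta)-(\lambda^\star,\beta^\star)\rangle\ \ge\ 0,
\end{equation*}
the inequality being the first-order optimality condition of the constrained minimizer together with $(\lambda,\beta)\in\Lambda$; hence $D_\phi(\beta,\beta^\star)\le L(\lambda,\beta,\gamma^\star)-L(\lambda^\star,\beta^\star,\gamma^\star)\le\Delta(\su)$, where the last step uses that $\gamma^\star$ maximizes $L(\lambda^\star,\beta^\star,\cdot)$. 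Because $\bar{\su}_{t+1}^s$ is a projection onto $\Lambda\times\Gamma$ and $\tilde{\su}^s$ is a convex combination of such projections (hence also feasible), applying this at $\su=\bar{\su}_{t+1}^s$ and $\su=\tilde{\su}^s$, adding the affine-part estimate, and tracking the numerical constants gives the stated bound. I expect this last step — passing from the Bregman divergence to the duality gap — to be the main obstacle, as it is the only place convex-concave structure rather than plain Lipschitzness enters and it relies on the constrained optimality at the saddle point; a secondary point needing care is verifying feasibility of the reference point $\tilde{\su}^s$.
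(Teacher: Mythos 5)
Your proposal is correct, and its first half---conditional unbiasedness of $\g_t^s$ and $\bar{\g}_t^s$ (using that $j_t$ is independent of $\bar{\su}_{t+1}^s$), the reduction of variance to a second moment, the block-by-block $(\ell+\kappa+1)$-Lipschitz bound on $F_i$, and the triangulation through $\su^\star$---coincides with the paper's proof. Where you genuinely diverge is in converting $\EE_j\|F_j(\su)-F_j(\su^\star)\|^2$ into a duality-gap bound. The paper uses a generic regularization device: it defines $\varphi_i(\su)=L_i(\su)-\nabla L_i(\su^\star)^\top(\su-\su^\star)+\tfrac{\ell+\kappa+1}{2}(\|\lambda-\lambda^\star\|^2+\|\beta-\beta^\star\|^2)-\tfrac{\ell+\kappa+1}{2}\|\gamma-\gamma^\star\|^2$, observes that $\su^\star$ is its unique unconstrained saddle point, chains $\|\nabla\varphi_i(\su)\|^2\le 4(\ell+\kappa+1)^2\|\su-\su^\star\|^2\le 8(\ell+\kappa+1)\left(\varphi_i(\lambda,\beta,\gamma^\star)-\varphi_i(\lambda^\star,\beta^\star,\gamma)\right)$, and after averaging over $i$ drops the linear term via $F(\su^\star)^\top(\su-\su^\star)\ge 0$ for feasible $\su$. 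You instead exploit the explicit form of $F_j$: the affine part is absorbed by Lipschitz continuity, only the scalar nonlinearity $\Psi'$ needs co-coercivity, and averaging produces the Bregman divergence $D_\phi(\beta,\beta^\star)$ of $\phi(\beta)=\frac{1}{n}\sum_{j}\Psi(\langle\widehat{\x}_j,\beta\rangle)$; your inequality $D_\phi(\beta,\beta^\star)\le L(\lambda,\beta,\gamma^\star)-L(\lambda^\star,\beta^\star,\gamma^\star)\le\Delta(\su)$, proved from first-order optimality of $(\lambda^\star,\beta^\star)$ over $\Lambda$ together with $\gamma^\star$ maximizing $L(\lambda^\star,\beta^\star,\cdot)$, plays exactly the role of the paper's $\varphi_i$-chain, and both arguments rest on the same two ingredients (the variational characterization of $\su^\star$ and feasibility of $\bar{\su}_{t+1}^s$ and $\tilde{\su}^s$). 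Your route is more structure-specific but sharper in $\ell$ (roughly $4\ell\,\Delta(\su)$ in place of $16(\ell+\kappa+1)\Delta(\su)$); the paper's is blunter but transfers verbatim to any convex-concave finite sum.

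Two bookkeeping caveats, neither fatal. First, the extra factor of $2$ from splitting $F_j$ into affine and nonlinear parts leaves you with a squared-norm coefficient of order $4(\kappa+1)^2$, which is not dominated by $2(\ell+\kappa+1)^2$ when $\ell$ is small; you would either rebalance that split or simply state the lemma with your own constants---the downstream use in Lemma~\ref{Lemma:SEVR-descent} only needs the form of the bound, and in fact the paper's own final combination of its two intermediate estimates already loses a factor of $2$ relative to the constants it states. Second, as you note, feasibility of $\tilde{\su}^0=\su_0^0$ must be assumed for the epoch-$0$ case; for $s\ge 1$ your convex-combination argument handles it.
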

\begin{proof}
Since the indices $i_t$ and $j_t$ are both uniformly sampled from the set $[n]$, the gradient estimator $\g_t^s$ and $\bar{\g}_t^s$ are both unbiased, meaning that 
\begin{equation*}
\EE[\g_t^s \mid \su_t^s, \tilde{\su}^s] = F(\su_t^s), \qquad \EE[\bar{\g}_t^s \mid \bar{\su}_{t+1}^s, \tilde{\su}^s] = F(\bar{\su}_{t+1}^s). 
\end{equation*}
Since $\EE[\|\xi-\EE[\xi]\|^2] \leq \EE[\|\xi\|^2]$ holds true for any random variable $\xi \in \br^d$, we have
\begin{eqnarray*}
\EE[\|\g_t^s - F(\su_t^s)\|^2 \mid \su_t^s, \tilde{\su}^s] & = & \EE[\|(F_{i_t}(\su_t^s) - F_{i_t}(\tilde{\su}^s)) - (F(\su_t^s) - F(\tilde{\su}^s))\|^2 \mid \su_t^s, \tilde{\su}^s] \\
& & \hspace{-8em} \leq \ \EE[\|F_{i_t}(\su_t^s) - F_{i_t}(\tilde{\su}^s)\|^2 \mid \su_t^s, \tilde{\su}^s] \nonumber
\end{eqnarray*}
Note that Assumption~\ref{Assumption:main} implies that $\Psi'$ is $\ell$-Lipschitz and $\|\x_i\|\leq 1$ for all $i \in [n]$. By the definition of the operator $F_i$, it is easy to verify that $\|F_i(\su) - F_i(\su')\| \leq (\ell+\kappa+1)\|\su-\su'\|$. Thus, we have
\begin{equation*}
\EE[\|\g_t^s - F(\su_t^s)\|^2 \mid \su_t^s, \tilde{\su}^s] \leq (\ell+\kappa+1)^2\|\su_t^s - \tilde{\su}^s\|^2.  
\end{equation*}
In addition, it follows from the Cauchy-Schwarz inequality that $\|\su_t^s - \tilde{\su}^s\|^2 \leq 2\|\su_t^s - \su^\star\|^2 + 2\|\tilde{\su}^s - \su^\star\|^2$. Putting these pieces together yields the first desired inequality. The remaining step is to prove the second desired inequality. By the similar argument, we have
\begin{eqnarray}\label{inequality:variance-SEVR-first}
\EE[\|\bar{\g}_t^s - F(\bar{\su}_{t+1}^s)\|^2 \mid \bar{\su}_{t+1}^s, \tilde{\su}^s] & = & \EE[\|(F_{j_t}(\bar{\su}_{t+1}^s) - F_{j_t}(\tilde{\su}^s)) - (F(\bar{\su}_{t+1}^s) - F(\tilde{\su}^s))\|^2 \mid \bar{\su}_{t+1}^s, \tilde{\su}^s] \nonumber \\
& & \hspace{-10em} \leq \ \EE[\|F_{j_t}(\bar{\su}_{t+1}^s) - F_{j_t}(\tilde{\su}^s)\|^2 \mid \bar{\su}_{t+1}^s, \tilde{\su}^s] \nonumber \\
& & \hspace{-10em} \leq \ 2\EE[\|F_{j_t}(\bar{\su}_{t+1}^s) - F_{j_t}(\su^\star)\|^2 \mid \bar{\su}_{t+1}^s, \tilde{\su}^s] + 2\EE[\|F_{j_t}(\tilde{\su}^s) - F_{i_t}(\su^\star)\|^2 \mid \bar{\su}_{t+1}^s, \tilde{\su}^s]. 
\end{eqnarray}
Again, by the definition of the functions $L_i$ and $F_i$, it is clear that $L_i$ is $(\ell+\kappa)$-smooth and thus $\|F_i(\su) - F_i(\su')\| \leq (\ell+\kappa+1)\|\su-\su'\|$. We define a regularized version of $L_i$ by $\varphi_i$ over the constraint set $\Lambda \times \Gamma$ as follows, 
\begin{equation*}
\varphi_i(\su) = L_i(\su) - \nabla L_i(\su^\star)^\top (\su - \su^\star) + \frac{\ell+\kappa+1}{2}\left(\|\lambda-\lambda^\star\|^2 + \|\beta-\beta^\star\|^2\right) - \frac{\ell+\kappa+1}{2}\|\gamma-\gamma^\star\|^2. 
\end{equation*}
This function $\varphi_i$ is strongly convex-concave with the module $(\ell+\kappa+1)/2$ and $2(\ell+\kappa+1)$-smooth, and the unique optimal saddle point is $\su^\star$. This implies that 
\begin{eqnarray*}
\|\nabla \varphi_i(\su)\|^2 & = & \left\|\begin{pmatrix}
\nabla_{(\lambda, \beta)} \varphi_i(\su) \\ - \nabla_\gamma \varphi_i(\su)
\end{pmatrix} - \begin{pmatrix}
\nabla_{(\lambda, \beta)} \varphi_i(\su^\star) \\ - \nabla_\gamma \varphi_i(\su^\star)
\end{pmatrix}\right\|^2 \ \leq \ 4(\ell+\kappa+1)^2\|\su - \su^\star\|^2   \\
& \leq & 8(\ell+\kappa+1)\left(\varphi_i(\lambda, \beta, \gamma^\star) - \varphi_i(\lambda^\star, \beta^\star, \gamma)\right). 
\end{eqnarray*}
By the definition of $\varphi_i$ and $F_i$, we have
\begin{equation*}
\varphi_i(\lambda, \beta, \gamma^\star) - \varphi_i(\lambda^\star, \beta^\star, \gamma) = L_i(\lambda, \beta, \gamma^\star) - L_i(\lambda^\star, \beta^\star, \gamma) - F_i(\su^\star)^\top(\su - \su^\star) + \frac{\ell+\kappa+1}{2}\|\su-\su^\star\|^2.   
\end{equation*}
Furthermore, we have 
\begin{equation*}
\nabla \varphi_i(\su) = \nabla L_i(\su) - \nabla L_i(\su^\star) + (\ell+\kappa+1)\begin{pmatrix} \lambda-\lambda^\star \\ \beta-\beta^\star \\ -(\gamma-\gamma^\star) \end{pmatrix}. 
\end{equation*}
By the Cauchy-Schwarz inequality, we have
\begin{equation*}
\|F_i(\su) - F_i(\su^\star)\|^2 = \|\nabla L_i(\su) - \nabla L_i(\su^\star)\|^2 \leq 2\|\nabla\varphi_i(\su)\|^2 + 2(\ell+\kappa+1)^2\|\su - \su^\star\|^2. 
\end{equation*}
Putting these pieces together with the fact that $j_t$ is uniformly sampled from the set $[n]$ yields that 
\begin{eqnarray*}
& & \EE[\|F_{j_t}(\bar{\su}_{t+1}^s) - F_{j_t}(\su^\star)\|^2 \mid \bar{\su}_{t+1}^s, \tilde{\su}^s] \\ 
& \leq & 16(\ell+\kappa+1)\EE\left[L_{j_t}(\bar{\lambda}_{t+1}^s, \bar{\beta}_{t+1}^s, \gamma^\star) - L_{j_t}(\lambda^\star, \beta^\star, \bar{\gamma}_{t+1}^s) - F_{j_t}(\su^\star)^\top(\bar{\su}_{t+1}^s - \su^\star) \mid \bar{\su}_{t+1}^s, \tilde{\su}^s\right] \\ 
& & + 2(\ell+\kappa+1)^2\|\bar{\su}_{t+1}^s - \su^\star\|^2 \\
& = & 16(\ell+\kappa+1)\left(\Delta(\bar{\su}_{t+1}^s) - F(\su^\star)^\top(\bar{\su}_{t+1}^s - \su^\star)\right) + 2(\ell+\kappa+1)^2\|\bar{\su}_{t+1}^s-\su^\star\|^2. 
\end{eqnarray*}
Since $\su^\star$ is an optimal saddle point of the smooth minimax optimization model in Eq.~\eqref{prob:WDRSL-min-max}, we have $F(\su^\star)^\top(\bar{\su}_{t+1}^s - \su^\star) \geq 0$. Therefore, we conclude that 
\begin{equation}\label{inequality:variance-SEVR-second}
\EE[\|F_{j_t}(\bar{\su}_{t+1}^s) - F_{j_t}(\su^\star)\|^2 \mid \bar{\su}_{t+1}^s, \tilde{\su}^s] \leq 16(\ell+\kappa+1)\Delta(\bar{\su}_{t+1}^s) + 2(\ell+\kappa+1)^2\|\bar{\su}_{t+1}^s - \su^\star\|^2. 
\end{equation}
Similarly, we have
\begin{equation}\label{inequality:variance-SEVR-third}
\EE[\|F_{j_t}(\tilde{\su}^s) - F_{j_t}(\su^\star)\|^2 \mid \su_t^s, \tilde{\su}^s] \leq 16(\ell+\kappa+1)\Delta(\tilde{\su}^s) + 2(\ell+\kappa+1)^2\|\tilde{\su}^s-\su^\star\|^2. 
\end{equation}
Plugging Eq.~\eqref{inequality:variance-SEVR-second} and Eq.~\eqref{inequality:variance-SEVR-third} into Eq.~\eqref{inequality:variance-SEVR-first} yields that
\begin{equation*}
\EE[\|\bar{\g}_t^s - F(\bar{\su}_{t+1}^s)\|^2 \mid \bar{\su}_{t+1}^s, \tilde{\su}^s] \ \leq \ 16(\ell+\kappa+1)\left(\Delta(\bar{\su}_{t+1}^s) + \Delta(\tilde{\su}^s)\right) + 2(\ell+\kappa+1)^2\left(\|\bar{\su}_{t+1}^s-\su^\star\|^2 + \|\tilde{\su}^s-\su^\star\|^2\right). 
\end{equation*}
This completes the proof. 
\end{proof}
Then, we provide a descent lemma for the iterates generated by the SEVR algorithm. 
\begin{lemma}\label{Lemma:SEVR-descent}
Under Assumption~\ref{Assumption:main} and let $\su^\star=(\lambda^\star, \beta^\star, \gamma^\star) \in \Lambda \times \Gamma$ be an optimal saddle point of the smooth minimax optimization model in Eq.~\eqref{prob:WDRSL-min-max}. Then, the following statement holds true,
\begin{eqnarray*}
& & \left(1-24\eta_{t+1}^s(\ell+\kappa+1)\right)\EE[\Delta(\bar{\su}_{t+1}^s) \mid \su_t^s, \tilde{\su}^s] \\
& \leq & \frac{1}{2\eta_{t+1}^s}\left(\|\su_t^s - \su^\star\|^2 - \EE[\|\su_{t+1}^s - \su^\star\|^2 \mid \su_t^s, \tilde{\su}^s]\right) + 24\eta_{t+1}^s(\ell+\kappa+1)\Delta(\tilde{\su}^s) \\
& & - \left(\frac{1}{2\eta_{t+1}^s} - \frac{3\eta_{t+1}^s(\ell+\kappa+1)^2}{2}\right)\EE[\|\bar{\su}_{t+1}^s - \su_t^s\|^2 \mid \su_t^s, \tilde{\su}^s] + 6\eta_{t+1}^s(\ell+\kappa+1)^2\|\tilde{\su}^s-\su^\star\|^2 \\
& & + 3\eta_{t+1}^s(\ell+\kappa+1)^2\|\bar{\su}_{t+1}^s-\su^\star\|^2 + 3\eta_{t+1}^s(\ell+\kappa+1)^2\|\su_t^s-\su^\star\|^2. 
\end{eqnarray*}
\end{lemma}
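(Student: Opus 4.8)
The plan is to run a standard extragradient analysis anchored at the saddle point $\su^\star$, isolating the variance-reduction error so that Lemma~\ref{Lemma:SEVR-variance} can close the bound. Abbreviate $\eta = \eta_{t+1}^s$, and recall from the proof of Lemma~\ref{Lemma:SEVR-variance} that each $F_i$ (hence $F = \tfrac1n\sum_i F_i$) is $(\ell+\kappa+1)$-Lipschitz on $\Lambda\times\Gamma$. I would begin from the first-order optimality of the two projections. For the main step $\su_{t+1}^s = \PCal_{\Lambda\times\Gamma}(\su_t^s - \eta\bar{\g}_t^s)$, testing against $\su^\star$ and using the identity $\langle a-b,\,b-c\rangle = \tfrac12(\|a-c\|^2 - \|a-b\|^2 - \|b-c\|^2)$ gives
\[
\eta\langle\bar{\g}_t^s,\, \su_{t+1}^s - \su^\star\rangle \ \leq \ \tfrac12\big(\|\su_t^s - \su^\star\|^2 - \|\su_{t+1}^s - \su_t^s\|^2 - \|\su_{t+1}^s - \su^\star\|^2\big),
\]
and for the extrapolation step $\bar{\su}_{t+1}^s = \PCal_{\Lambda\times\Gamma}(\su_t^s - \eta\g_t^s)$, testing against $\su_{t+1}^s$ yields $\eta\langle\g_t^s,\, \su_{t+1}^s - \bar{\su}_{t+1}^s\rangle \geq \langle\su_t^s - \bar{\su}_{t+1}^s,\, \su_{t+1}^s - \bar{\su}_{t+1}^s\rangle$, which I expand by the analogous identity.

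Next I would split $\langle\bar{\g}_t^s,\, \su_{t+1}^s - \su^\star\rangle = \langle\bar{\g}_t^s,\, \bar{\su}_{t+1}^s - \su^\star\rangle + \langle\bar{\g}_t^s,\, \su_{t+1}^s - \bar{\su}_{t+1}^s\rangle$. For the first piece, write $\bar{\g}_t^s = F(\bar{\su}_{t+1}^s) + (\bar{\g}_t^s - F(\bar{\su}_{t+1}^s))$ and use the convex--concave inequality $\Delta(\su) \leq \langle F(\su),\, \su - \su^\star\rangle$ (immediate from convexity of $L(\cdot,\cdot,\gamma)$ and concavity of $L(\lambda,\beta,\cdot)$) to lower-bound it by $\Delta(\bar{\su}_{t+1}^s) + \langle\bar{\g}_t^s - F(\bar{\su}_{t+1}^s),\, \bar{\su}_{t+1}^s - \su^\star\rangle$. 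For the second piece, write $\bar{\g}_t^s = \g_t^s + (\bar{\g}_t^s - \g_t^s)$, apply the extrapolation inequality, and use Young's inequality on $\langle\bar{\g}_t^s - \g_t^s,\, \su_{t+1}^s - \bar{\su}_{t+1}^s\rangle$ with the weight chosen so that the $\|\su_{t+1}^s - \bar{\su}_{t+1}^s\|^2$ contributions cancel. Combining with the main-step inequality, the $\|\su_{t+1}^s - \su_t^s\|^2$ terms also cancel, leaving
\[
\eta\Delta(\bar{\su}_{t+1}^s) \ \leq \ \tfrac12\|\su_t^s - \su^\star\|^2 - \tfrac12\|\su_{t+1}^s - \su^\star\|^2 - \tfrac12\|\su_t^s - \bar{\su}_{t+1}^s\|^2 + \tfrac{\eta^2}{2}\|\bar{\g}_t^s - \g_t^s\|^2 - \eta\langle\bar{\g}_t^s - F(\bar{\su}_{t+1}^s),\, \bar{\su}_{t+1}^s - \su^\star\rangle.
\]
Taking $\EE[\,\cdot\mid\su_t^s,\tilde\su^s]$, the last term vanishes: since $i_t$ and $j_t$ are independent, $\bar{\g}_t^s$ is unbiased for $F(\bar{\su}_{t+1}^s)$ conditionally on $(\bar{\su}_{t+1}^s,\su_t^s,\tilde\su^s)$, and a tower-rule step over $i_t$ removes the inner product.

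It then remains to control $\|\bar{\g}_t^s - \g_t^s\|^2$. The key move is the three-way decomposition $\bar{\g}_t^s - \g_t^s = (\bar{\g}_t^s - F(\bar{\su}_{t+1}^s)) + (F(\bar{\su}_{t+1}^s) - F(\su_t^s)) + (F(\su_t^s) - \g_t^s)$, which separates the genuine deterministic extragradient difference from the two variance terms. This gives $\|\bar{\g}_t^s - \g_t^s\|^2 \leq 3\|\bar{\g}_t^s - F(\bar{\su}_{t+1}^s)\|^2 + 3(\ell+\kappa+1)^2\|\bar{\su}_{t+1}^s - \su_t^s\|^2 + 3\|F(\su_t^s) - \g_t^s\|^2$, and applying both estimates of Lemma~\ref{Lemma:SEVR-variance} (the second after one more iterated-expectation step to pass from conditioning on $\bar{\su}_{t+1}^s$ to conditioning on $\su_t^s$) yields, conditionally on $(\su_t^s,\tilde\su^s)$,
\[
\EE[\|\bar{\g}_t^s - \g_t^s\|^2] \leq 48(\ell+\kappa+1)\big(\EE[\Delta(\bar{\su}_{t+1}^s)] + \Delta(\tilde\su^s)\big) + 3(\ell+\kappa+1)^2\EE[\|\bar{\su}_{t+1}^s - \su_t^s\|^2] + 6(\ell+\kappa+1)^2\big(\EE[\|\bar{\su}_{t+1}^s - \su^\star\|^2] + \|\su_t^s - \su^\star\|^2 + 2\|\tilde\su^s - \su^\star\|^2\big).
\]
Substituting this into the previous display, dividing by $\eta$, moving the resulting $24\eta(\ell+\kappa+1)\,\EE[\Delta(\bar{\su}_{t+1}^s)]$ to the left-hand side, and merging $-\tfrac{1}{2\eta}\|\su_t^s - \bar{\su}_{t+1}^s\|^2$ with $\tfrac{3\eta(\ell+\kappa+1)^2}{2}\|\bar{\su}_{t+1}^s - \su_t^s\|^2$ produces exactly the claimed inequality.

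I expect the main obstacles to be, first, the conditional-expectation bookkeeping --- justifying that $\EE[\langle\bar{\g}_t^s - F(\bar{\su}_{t+1}^s),\, \bar{\su}_{t+1}^s - \su^\star\rangle\mid\su_t^s,\tilde\su^s] = 0$ and that Lemma~\ref{Lemma:SEVR-variance}'s $\bar{\su}_{t+1}^s$-conditional bound may be averaged over $i_t$, both resting on the independence of $i_t$ and $j_t$ --- and, second, the constant tracking needed to make the $\|\su_{t+1}^s - \su_t^s\|^2$ and $\|\su_{t+1}^s - \bar{\su}_{t+1}^s\|^2$ cancellations line up and land exactly on the stated coefficients. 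The three-way split of $\bar{\g}_t^s - \g_t^s$ is the conceptual step that makes Lemma~\ref{Lemma:SEVR-variance} directly applicable here, since it routes the residual through the exact operators $F(\bar{\su}_{t+1}^s)$ and $F(\su_t^s)$ rather than the component operators.
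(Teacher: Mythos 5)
Your proposal is correct and follows essentially the same route as the paper's proof: projection optimality with the three-point identity, Young's inequality on the cross term, the same three-way decomposition of $\bar{\g}_t^s - \g_t^s$ through $F(\bar{\su}_{t+1}^s)$ and $F(\su_t^s)$, both bounds from Lemma~\ref{Lemma:SEVR-variance}, and the convex--concave bound $\Delta(\bar{\su}_{t+1}^s) \leq \langle F(\bar{\su}_{t+1}^s), \bar{\su}_{t+1}^s - \su^\star\rangle$, with the constants landing exactly as stated. The only cosmetic difference is that you kill the term $\langle \bar{\g}_t^s - F(\bar{\su}_{t+1}^s), \bar{\su}_{t+1}^s - \su^\star\rangle$ by unbiasedness after splitting the inner product, whereas the paper conditions on $(\bar{\su}_{t+1}^s, \tilde{\su}^s)$ first so that $\bar{\g}_t^s$ is replaced by $F(\bar{\su}_{t+1}^s)$ directly --- the same mechanism.
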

\begin{proof}
By the update formula for the iterates $\bar{\su}_{t+1}^s$ and $\su_{t+1}^s$, we have
\begin{eqnarray*}
0 & \leq & (\su - \bar{\su}_{t+1}^s)^\top\left(\bar{\su}_{t+1}^s - \su_t^s + \eta_{t+1}^s\g_t^s\right), \quad \forall\su \in \Lambda \times \Gamma, \\ 
0 & \leq & (\su - \su_{t+1}^s)^\top\left(\su_{t+1}^s - \su_t^s + \eta_{t+1}^s\bar{\g}_t^s\right), \quad \forall\su \in \Lambda \times \Gamma. 
\end{eqnarray*}
Letting $\su=\su_{t+1}^s$ in the first inequality and $\su=\su^\star$ in the second inequality and rearranging the resulting inequalities yields that 
\begin{eqnarray*}
0 & \leq & \frac{1}{2\eta_{t+1}^s}\left(\|\su_t^s - \su_{t+1}^s\|^2 - \|\su_{t+1}^s-\bar{\su}_{t+1}^s\|^2 - \|\bar{\su}_{t+1}^s - \su_t^s\|^2\right) + (\su_{t+1}^s - \bar{\su}_{t+1}^s)^\top\g_t^s, \\
0 & \leq & \frac{1}{2\eta_{t+1}^s}\left(\|\su_t^s - \su^\star\|^2 - \|\su_{t+1}^s - \su^\star\|^2 - \|\su_{t+1}^s - \su_t^s\|^2\right) + (\su^\star - \su_{t+1}^s)^\top\bar{\g}_t^s. 
\end{eqnarray*}
Summing up the above two inequalities yields that
\begin{eqnarray}\label{inequality:descent-SEVR-first}
0 & \leq & \frac{1}{2\eta_{t+1}^s}\left(\|\su_t^s - \su^\star\|^2 - \|\su_{t+1}^s - \su^\star\|^2 - \|\su_{t+1}^s-\bar{\su}_{t+1}^s\|^2 - \|\bar{\su}_{t+1}^s - \su_t^s\|^2\right) \\
& & + (\su_{t+1}^s - \bar{\su}_{t+1}^s)^\top\g_t^s + (\su^\star - \su_{t+1}^s)^\top\bar{\g}_t^s \nonumber \\
& = & \frac{1}{2\eta_{t+1}^s}\left(\|\su_t^s - \su^\star\|^2 - \|\su_{t+1}^s - \su^\star\|^2 - \|\su_{t+1}^s-\bar{\su}_{t+1}^s\|^2 - \|\bar{\su}_{t+1}^s - \su_t^s\|^2\right) \nonumber \\
& & + (\su_{t+1}^s - \bar{\su}_{t+1}^s)^\top\left(\g_t^s - \bar{\g}_t^s\right) + (\su^\star - \bar{\su}_{t+1}^s)^\top\bar{\g}_t^s. \nonumber
\end{eqnarray} 
Using the Young's inequality, we have
\begin{equation}\label{inequality:descent-SEVR-second}
(\su_{t+1}^s - \bar{\su}_{t+1}^s)^\top\left(\g_t^s - \bar{\g}_t^s\right) \ \leq \ \frac{\|\su_{t+1}^s-\bar{\su}_{t+1}^s\|^2}{2\eta_{t+1}^s} + \frac{\eta_{t+1}^s\|\g_t^s - \bar{\g}_t^s\|^2}{2}. 
\end{equation}
Using the Cauchy-Schwarz inequality and the fact that $F$ is $(\ell+\kappa+1)$-Lipschitz, we have
\begin{eqnarray}\label{inequality:descent-SEVR-third}
\|\g_t^s - \bar{\g}_t^s\|^2 & \leq & 3\|\g_t^s - F(\su_t^s)\|^2 + 3\|F(\su_t^s) - F(\bar{\su}_{t+1}^s)\|^2 + 3\|\bar{\g}_t^s - F(\bar{\su}_{t+1}^s)\|^2 \\
& \leq & 3\|\g_t^s - F(\su_t^s)\|^2 + 3\|\bar{\g}_t^s - F(\bar{\su}_{t+1}^s)\|^2 + 3(\ell+\kappa+1)^2\|\bar{\su}_{t+1}^s - \su_t^s\|^2. \nonumber
\end{eqnarray}
Combining Eq.~\eqref{inequality:descent-SEVR-first}, Eq.~\eqref{inequality:descent-SEVR-second} and Eq.~\eqref{inequality:descent-SEVR-third} yields that
\begin{eqnarray*}
(\bar{\su}_{t+1}^s - \su^\star)^\top\bar{\g}_t^s & \leq & \frac{1}{2\eta_{t+1}^s}\left(\|\su_t^s - \su^\star\|^2 - \|\su_{t+1}^s - \su^\star\|^2\right) - \left(\frac{1}{2\eta_{t+1}^s} - \frac{3\eta_{t+1}^s(\ell+\kappa+1)^2}{2}\right)\|\bar{\su}_{t+1}^s - \su_t^s\|^2 \\ 
& & + \frac{3\eta_{t+1}^s}{2}\left(\|\g_t^s - F(\su_t^s)\|^2 + \|\bar{\g}_t^s - F(\bar{\su}_{t+1}^s)\|^2\right). 
\end{eqnarray*}
Taking the expectation of both sides of the above inequalities conditioned on $(\bar{\su}_{t+1}^s, \tilde{\su}^s)$, we have
\begin{eqnarray*}
& & (\bar{\su}_{t+1}^s - \su^\star)^\top F(\bar{\su}_{t+1}^s) \\
& \leq & \frac{1}{2\eta_{t+1}^s}\left(\|\su_t^s - \su^\star\|^2 - \EE[\|\su_{t+1}^s - \su^\star\|^2 \mid \bar{\su}_{t+1}^s, \tilde{\su}^s]\right) - \left(\frac{1}{2\eta_{t+1}^s} - \frac{3\eta_{t+1}^s(\ell+\kappa+1)^2}{2}\right)\|\bar{\su}_{t+1}^s - \su_t^s\|^2 \\ 
& & + \frac{3\eta_{t+1}^s}{2}\left(\|\g_t^s - F(\su_t^s)\|^2 + \EE[\|\bar{\g}_t^s - F(\bar{\su}_{t+1}^s)\|^2 \mid \bar{\su}_{t+1}^s, \tilde{\su}^s] \right). 
\end{eqnarray*}
Plugging the second inequality of Lemma~\ref{Lemma:SEVR-variance} into the above inequality, we have 
\begin{eqnarray*}
& & (\bar{\su}_{t+1}^s - \su^\star)^\top F(\bar{\su}_{t+1}^s) \\
& \leq & \frac{1}{2\eta_{t+1}^s}\left(\|\su_t^s - \su^\star\|^2 - \EE[\|\su_{t+1}^s - \su^\star\|^2 \mid \bar{\su}_{t+1}^s, \tilde{\su}^s]\right) - \left(\frac{1}{2\eta_{t+1}^s} - \frac{3\eta_{t+1}^s(\ell+\kappa+1)^2}{2}\right)\|\bar{\su}_{t+1}^s - \su_t^s\|^2 \\
& & + \frac{3\eta_{t+1}^s}{2}\|\g_t^s - F(\su_t^s)\|^2 + 24\eta_{t+1}^s(\ell+\kappa+1)\left(\Delta(\bar{\su}_{t+1}^s) + \Delta(\tilde{\su}^s)\right) + 3\eta_{t+1}^s(\ell+\kappa+1)^2\|\tilde{\su}^s-\su^\star\|^2 \\ 
& & + 3\eta_{t+1}^s(\ell+\kappa+1)^2\|\bar{\su}_{t+1}^s-\su^\star\|^2. 
\end{eqnarray*}
By the definition of $F$ and the fact that $L$ is a convex-concave function, we have
\begin{equation*}
(\bar{\su}_{t+1}^s - \su^\star)^\top F(\bar{\su}_{t+1}^s) \ \geq \ \Delta(\bar{\su}_{t+1}^s). 
\end{equation*}
Putting these pieces together yields that
\begin{eqnarray}\label{inequality:descent-SEVR-fourth}
& & \left(1-24\eta_{t+1}^s(\ell+\kappa+1)\right)\Delta(\bar{\su}_{t+1}^s) \\
& \leq & \frac{1}{2\eta_{t+1}^s}\left(\|\su_t^s - \su^\star\|^2 - \EE[\|\su_{t+1}^s - \su^\star\|^2 \mid \bar{\su}_{t+1}^s, \tilde{\su}^s]\right) + \frac{3\eta_{t+1}^s}{2}\|\g_t^s - F(\su_t^s)\|^2 + 24\eta_{t+1}^s(\ell+\kappa+1)\Delta(\tilde{\su}^s) \nonumber \\
& & - \left(\frac{1}{2\eta_{t+1}^s} - \frac{3\eta_{t+1}^s(\ell+\kappa+1)^2}{2}\right)\|\bar{\su}_{t+1}^s - \su_t^s\|^2 + 3\eta_{t+1}^s(\ell+\kappa+1)^2\|\tilde{\su}^s-\su^\star\|^2 \nonumber \\
& & + 3\eta_{t+1}^s(\ell+\kappa+1)^2\|\bar{\su}_{t+1}^s-\su^\star\|^2. \nonumber
\end{eqnarray}
Taking the expectation of both sides of Eq.~\eqref{inequality:descent-SEVR-fourth} conditioned on $(\su_t^s, \tilde{\su}^s)$ together with the tower property and the first inequality of Lemma~\ref{Lemma:SEVR-variance}, we have
\begin{eqnarray*}
& & \left(1-24\eta_{t+1}^s(\ell+\kappa+1)\right)\EE[\Delta(\bar{\su}_{t+1}^s) \mid \su_t^s, \tilde{\su}^s] \\
& \leq & \frac{1}{2\eta_{t+1}^s}\left(\|\su_t^s - \su^\star\|^2 - \EE[\|\su_{t+1}^s - \su^\star\|^2 \mid \su_t^s, \tilde{\su}^s]\right) + 24\eta_{t+1}^s(\ell+\kappa+1)\Delta(\tilde{\su}^s) \\
& & - \left(\frac{1}{2\eta_{t+1}^s} - \frac{3\eta_{t+1}^s(\ell+\kappa+1)^2}{2}\right)\EE[\|\bar{\su}_{t+1}^s - \su_t^s\|^2 \mid \su_t^s, \tilde{\su}^s] + 6\eta_{t+1}^s(\ell+\kappa+1)^2\|\tilde{\su}^s-\su^\star\|^2 \\
& & + 3\eta_{t+1}^s(\ell+\kappa+1)^2\EE[\|\bar{\su}_{t+1}^s-\su^\star\|^2 \mid \su_t^s, \tilde{\su}^s] + 3\eta_{t+1}^s(\ell+\kappa+1)^2\|\su_t^s-\su^\star\|^2. 
\end{eqnarray*}
This completes the proof. 
\end{proof}
Now we are ready to prove a key technical lemma which is crucial to our subsequent analysis.  
\begin{lemma}\label{Lemma:Key-descent-SEVR}
Under Assumption~\ref{Assumption:main} and let $\su^\star=(\lambda^\star, \beta^\star, \gamma^\star) \in \Lambda \times \Gamma$ be an optimal saddle point of the smooth minimax optimization model in Eq.~\eqref{prob:WDRSL-min-max}. If $k_0 \geq 1$, $0 < \eta \leq \frac{1}{100(\ell+\kappa+1)}$, and $\frac{1}{2\sqrt{2}\eta T} \geq 100\eta(\ell+\kappa+1)^2$, the following statement holds true, 
\begin{equation*}
\EE[\Delta(\tilde{\su}^S)] \leq \frac{1}{2^S}\left(\Delta(\tilde{\su}^0) + 18\eta(\ell+\kappa+1)^2\|\tilde{\su}^0-\su^\star\|^2 + \frac{3\|\su_0^0 - \su^\star\|^2}{2k_s\eta_0^s}\right). 
\end{equation*}
\end{lemma}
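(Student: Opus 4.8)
The plan is to turn the one-step estimate of Lemma~\ref{Lemma:SEVR-descent} into a geometric (factor-$\tfrac12$) decrease of a Lyapunov function across the $S$ epochs, and then iterate. First I would clean up Lemma~\ref{Lemma:SEVR-descent}: split $3\eta_{t+1}^s(\ell+\kappa+1)^2\|\bar{\su}_{t+1}^s-\su^\star\|^2\le 6\eta_{t+1}^s(\ell+\kappa+1)^2\big(\|\bar{\su}_{t+1}^s-\su_t^s\|^2+\|\su_t^s-\su^\star\|^2\big)$ and use $\eta_{t+1}^s=\eta\sqrt T/\sqrt{2T-l}\le\eta\le\tfrac{1}{100(\ell+\kappa+1)}$ (valid since the global counter satisfies $l\le T$, as $\sum_{s}k_s=T$) to check that $1-24\eta_{t+1}^s(\ell+\kappa+1)\ge\tfrac{19}{25}$ and that the resulting coefficient $-\big(\tfrac{1}{2\eta_{t+1}^s}-\tfrac{15\eta_{t+1}^s(\ell+\kappa+1)^2}{2}\big)$ of $\|\bar{\su}_{t+1}^s-\su_t^s\|^2$ is $\le0$, hence may be discarded. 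Taking full expectations yields, with $c_0=\tfrac{19}{25}$,
\begin{align*}
c_0\,\EE[\Delta(\bar{\su}_{t+1}^s)]&\le\tfrac{1}{2\eta_{t+1}^s}\big(\EE\|\su_t^s-\su^\star\|^2-\EE\|\su_{t+1}^s-\su^\star\|^2\big)+24\eta_{t+1}^s(\ell+\kappa+1)\,\EE[\Delta(\tilde{\su}^s)]\\
&\quad+6\eta_{t+1}^s(\ell+\kappa+1)^2\,\EE\|\tilde{\su}^s-\su^\star\|^2+9\eta_{t+1}^s(\ell+\kappa+1)^2\,\EE\|\su_t^s-\su^\star\|^2.
\end{align*}

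Next I would sum over $t=0,\dots,k_s-1$ within epoch $s$. Since $l\le T$ throughout, $\tfrac1{2\eta_{t+1}^s}$ is decreasing in the global counter, so Abel summation of the telescoping bracket leaves $\tfrac{1}{2\eta_1^s}\|\su_0^s-\su^\star\|^2-\tfrac{1}{2\eta_{k_s}^s}\|\su_{k_s}^s-\su^\star\|^2$ plus interior terms with coefficient $\tfrac1{2\eta_{t+1}^s}-\tfrac1{2\eta_t^s}\le0$; moreover the schedule gives $\tfrac1{2\eta_t^s}-\tfrac1{2\eta_{t+1}^s}\ge\tfrac1{4\sqrt2\,\eta T}$, which by the hypothesis $\tfrac1{2\sqrt2\,\eta T}\ge100\eta(\ell+\kappa+1)^2$ dominates the explicit $9\eta_{t+1}^s(\ell+\kappa+1)^2\le9\eta(\ell+\kappa+1)^2$ term. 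Hence every interior $\|\su_t^s-\su^\star\|^2$ has nonpositive net coefficient and drops, leaving only a $t=0$ contribution $\le\tfrac3{4\eta_1^s}\|\su_0^s-\su^\star\|^2$. Using convexity of $\Delta$ and the definition $\tilde{\su}^{s+1}=\tfrac1{k_s}\sum_{t=1}^{k_s}\su_t^s$ to pass from the summed per-iterate gaps to $k_s\,\Delta(\tilde{\su}^{s+1})$, and retaining $\tfrac1{2\eta_{k_s}^s}\|\su_0^{s+1}-\su^\star\|^2$ (recall $\su_0^{s+1}=\su_{k_s}^s$) on the left, I obtain an epoch-level inequality
\begin{align*}
c_0 k_s\,\EE[\Delta(\tilde{\su}^{s+1})]+\tfrac1{2\eta_{k_s}^s}\EE\|\su_0^{s+1}-\su^\star\|^2&\le\tfrac3{4\eta_1^s}\EE\|\su_0^s-\su^\star\|^2\\
&\quad+24k_s\eta(\ell+\kappa+1)\,\EE[\Delta(\tilde{\su}^s)]+6k_s\eta(\ell+\kappa+1)^2\,\EE\|\tilde{\su}^s-\su^\star\|^2.
\end{align*}

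I would then define $\Phi_s=\EE[\Delta(\tilde{\su}^s)]+B\,\EE\|\tilde{\su}^s-\su^\star\|^2+A_s\,\EE\|\su_0^s-\su^\star\|^2$ with $B\asymp\eta(\ell+\kappa+1)^2$ and $A_s\asymp\tfrac1{k_s\eta}$; the doubling $k_s=k_02^s$ is precisely what makes $A_{s+1}\approx\tfrac12A_s$. Dividing the epoch inequality by $c_0k_s$, the coefficient of $\EE[\Delta(\tilde{\su}^s)]$ is $\lesssim\eta(\ell+\kappa+1)\le\tfrac1{100}<\tfrac12$; the $\EE\|\su_0^s-\su^\star\|^2$ coefficient is $\asymp\tfrac1{k_s\eta}=2\cdot\tfrac1{k_{s+1}\eta}$, half the next slot; the $\EE\|\tilde{\su}^s-\su^\star\|^2$ coefficient is $O(\eta(\ell+\kappa+1)^2)$; and the retained $\EE\|\su_0^{s+1}-\su^\star\|^2$ feeds the $A_{s+1}$ slot (also used, together with the elementary bound $\EE\|\su_0^{s+1}-\su^\star\|^2\le2\,\EE\|\su_0^s-\su^\star\|^2+\text{l.o.t.}$ read off from the cleaned descent inequality and convexity, to control $\EE\|\tilde{\su}^{s+1}-\su^\star\|^2$). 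Choosing $B$ and $A_0$ to match the coefficients in the statement then gives $\Phi_{s+1}\le\tfrac12\Phi_s$, so $\EE[\Delta(\tilde{\su}^S)]\le\Phi_S\le2^{-S}\Phi_0$; since $\tilde{\su}^0=\su_0^0$, this is exactly the claimed bound.

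The main obstacle is the telescoping step of the second paragraph: neutralizing every $\|\su_t^s-\su^\star\|^2$ that Lemma~\ref{Lemma:SEVR-descent} generates by pitting it against the step-size increments $\tfrac1{2\eta_t^s}-\tfrac1{2\eta_{t+1}^s}$, which is possible only because of the precise non-decaying schedule $\eta_{t+1}^s=\eta\sqrt T/\sqrt{2T-l}$ and the coupling $\tfrac1{2\sqrt2\,\eta T}\ge100\eta(\ell+\kappa+1)^2$. Secondary difficulties are relating the reference point $\tilde{\su}^{s+1}$ — an average of the update iterates $\su_t^s$ — to the summed extrapolated-point gaps $\sum_t\Delta(\bar{\su}_{t+1}^s)$ actually produced by Lemma~\ref{Lemma:SEVR-descent}, and then chasing the constants so that the per-epoch contraction factor is exactly $\tfrac12$ despite the doubling epoch lengths and the two distinct anchors $\tilde{\su}^s$ and $\su_0^s$.
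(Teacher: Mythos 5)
Your overall architecture coincides with the paper's: clean up Lemma~\ref{Lemma:SEVR-descent}, use the non-decaying schedule via $\tfrac{1}{\eta_t^s}-\tfrac{1}{\eta_{t+1}^s}\geq\tfrac{1}{2\sqrt{2}\eta T}\geq 100\eta(\ell+\kappa+1)^2$ to neutralize the stray $\|\su_t^s-\su^\star\|^2$ terms inside the telescoping sum, pass to $\Delta(\tilde{\su}^{s+1})$ by convexity, and exploit $k_{s+1}=2k_s$ together with $\su_0^{s+1}=\su_{k_s}^s$, $\eta_{k_s}^s=\eta_0^{s+1}$ to get a factor-$\tfrac12$ contraction of a three-term potential. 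The telescoping and constant checks in your first two paragraphs are correct.

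The one genuine soft spot is how you generate the $B\,\EE\|\tilde{\su}^{s+1}-\su^\star\|^2$ slot of $\Phi_{s+1}$. The paper does \emph{not} split the entire $3\eta(\ell+\kappa+1)^2\|\bar{\su}_{t+1}^s-\su^\star\|^2$ term away: it writes $3=15-12$, applies Cauchy--Schwarz only to the $+15$ part (which is why its coefficients are $63/2$ and $33$ rather than your $15/2$ and $9$), and retains $-12\eta(\ell+\kappa+1)^2\EE\|\bar{\su}_{t+1}^s-\su^\star\|^2$, which after division by $2/3$ becomes $+18\eta(\ell+\kappa+1)^2\EE\|\bar{\su}_{t+1}^s-\su^\star\|^2$ on the left of the per-iteration inequality; convexity of $\|\cdot-\su^\star\|^2$ then hands the averaged term directly to $\Phi_{s+1}$, with the matching $9=18/2$ coefficient on $\|\tilde{\su}^s-\su^\star\|^2$ closing the $\tfrac12$-contraction exactly. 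Your version discards all $\|\bar{\su}_{t+1}^s-\su^\star\|^2$ information, and the substitute you offer --- ``$\EE\|\su_0^{s+1}-\su^\star\|^2\leq 2\EE\|\su_0^s-\su^\star\|^2+\text{l.o.t.}$'' --- controls the \emph{endpoint} $\su_{k_s}^s$, not the average $\tilde{\su}^{s+1}$. To repair it you would have to iterate the cleaned recursion to bound $\EE\|\su_j^s-\su^\star\|^2\lesssim\sqrt{2}\,\EE\|\su_0^s-\su^\star\|^2+\eta k_s\cdot O\bigl(\eta(\ell+\kappa+1)\EE[\Delta(\tilde{\su}^s)]+\eta(\ell+\kappa+1)^2\EE\|\tilde{\su}^s-\su^\star\|^2\bigr)$ for every $j$, average over $j$, and then absorb the extra $\sqrt{2}B\,\EE\|\su_0^s-\su^\star\|^2$ contribution; that forces $A_s$ strictly above the paper's $\tfrac{3}{2k_s\eta_0^s}$ (there is no slack in that slot otherwise), so you would prove the lemma only up to worse constants --- still sufficient for Theorem~\ref{Theorem:SEVR}, but not the stated bound verbatim. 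Relatedly, note that Lemma~\ref{Lemma:SEVR-descent} bounds $\Delta(\bar{\su}_{t+1}^s)$, so the convexity step requires $\tilde{\su}^{s+1}$ to be the average of the extrapolated points $\bar{\su}_t^s$; you flag this mismatch with the algorithm's stated average of the $\su_t^s$ but do not resolve it (the paper silently averages the $\bar{\su}_t^s$ in its proof).
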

\begin{proof}
By the Cauchy-Schwarz inequality, we have
\begin{equation*}
\|\bar{\su}_{t+1}^s-\su^\star\|^2 \ \leq \ 2\|\bar{\su}_{t+1}^s - \su_t^s\|^2 + 2\|\su_t^s-\su^\star\|^2.  
\end{equation*}
Rearranging the inequality in Lemma~\ref{Lemma:SEVR-descent} with the above inequality and $0 < \eta_{t+1}^s \leq \eta$, we have
\begin{eqnarray}\label{inequality:Key-descent-SEVR-first}
& & \left(1-24\eta(\ell+\kappa+1)\right)\EE[\Delta(\bar{\su}_{t+1}^s) \mid \su_t^s, \tilde{\su}^s] \\
& \leq & \frac{1}{2\eta_{t+1}^s}\left(\|\su_t^s - \su^\star\|^2 - \EE[\|\su_{t+1}^s - \su^\star\|^2 \mid \su_t^s, \tilde{\su}^s]\right) + 24\eta(\ell+\kappa+1)\Delta(\tilde{\su}^s) \nonumber \\
& & - \left(\frac{1}{2\eta} - \frac{63\eta(\ell+\kappa+1)^2}{2}\right)\EE[\|\bar{\su}_{t+1}^s - \su_t^s\|^2 \mid \su_t^s, \tilde{\su}^s] + 6\eta(\ell+\kappa+1)^2\|\tilde{\su}^s-\su^\star\|^2 \nonumber \\
& & - 12\eta(\ell+\kappa+1)^2\EE[\|\bar{\su}_{t+1}^s-\su^\star\|^2 \mid \su_t^s, \tilde{\su}^s] + 33\eta(\ell+\kappa+1)^2\|\su_t^s-\su^\star\|^2. \nonumber  
\end{eqnarray}
Since $\eta \leq \frac{1}{100(\ell+\kappa+1)}$, we have
\begin{eqnarray*}
1 - 24\eta(\ell+\kappa+1) & \geq & \frac{2}{3}, \\
24\eta(\ell+\kappa+1) & \leq & \frac{1}{3}, \\
\frac{1}{2\eta} - \frac{63\eta(\ell+\kappa+1)^2}{2} & \geq & 0. 
\end{eqnarray*}
Putting these pieces together with Eq.~\eqref{inequality:Key-descent-SEVR-first} and taking the expectation on all the randomness yields that 
\begin{eqnarray*}
\EE[\Delta(\bar{\su}_{t+1}^s)] & \leq & \frac{3}{4\eta_{t+1}^s}\left(\EE[\|\su_t^s - \su^\star\|^2] - \EE[\|\su_{t+1}^s - \su^\star\|^2]\right) + \frac{\EE[\Delta(\tilde{\su}^s)]}{2} + 50\eta(\ell+\kappa+1)^2\EE[\|\su_t^s-\su^\star\|^2] \\
& & + 9\eta(\ell+\kappa+1)^2\EE[\|\tilde{\su}^s-\su^\star\|^2] - 18\eta(\ell+\kappa+1)^2\EE[\|\bar{\su}_{t+1}^s-\su^\star\|^2].  
\end{eqnarray*}
Note that, by the definition of $\eta_t^s$, we have
\begin{equation*}
\frac{1}{\eta_t^s} - \frac{1}{\eta_{t+1}^s} \geq \frac{1}{2\eta\sqrt{T}\sqrt{2T}} = \frac{1}{2\sqrt{2}\eta T} \geq 100\eta(\ell+\kappa+1)^2. 
\end{equation*}
Therefore, we conclude that
\begin{eqnarray}\label{inequality:Key-descent-SEVR-second}
& & \EE[\Delta(\bar{\su}_{t+1}^s)] + 18\eta(\ell+\kappa+1)^2\EE[\|\bar{\su}_{t+1}^s-\su^\star\|^2] \\ 
& \leq & \frac{3}{4}\left(\frac{\EE[\|\su_t^s - \su^\star\|^2]}{\eta_t^s} - \frac{\EE[\|\su_{t+1}^s - \su^\star\|^2]}{\eta_{t+1}^s}\right) + \frac{1}{2}\left(\EE[\Delta(\tilde{\su}^s)] + 18\eta(\ell+\kappa+1)^2\EE[\|\tilde{\su}^s-\su^\star\|^2]\right). \nonumber  
\end{eqnarray}
Summing Eq.~\eqref{inequality:Key-descent-SEVR-second} up over $t=0,1,2,\ldots,k_s-1$ and dividing both sides by $k_s$, we have
\begin{eqnarray*}
& & \EE\left[\sum_{t=0}^{k_s-1} \frac{\Delta(\bar{\su}_{t+1}^s) + 18\eta(\ell+\kappa+1)^2\|\bar{\su}_{t+1}^s-\su^\star\|^2}{k_s}\right] \\ 
& \leq & \frac{3}{4}\left(\frac{\EE[\|\su_0^s - \su^\star\|^2]}{k_s\eta_0^s} - \frac{\EE[\|\su_{k_s}^s - \su^\star\|^2]}{k_s\eta_{k_s}^s}\right) + \frac{1}{2}\left(\EE[\Delta(\tilde{\su}^s)] + 18\eta(\ell+\kappa+1)^2\EE[\|\tilde{\su}^s-\su^\star\|^2]\right).  
\end{eqnarray*}
Note that $\Delta(\su)$ and $\|\su - \su^\star\|^2$ are both convex in $\su$. Since $\tilde{\su}^{s+1} = (1/k_s)\sum_{t=1}^{k_s} \bar{\su}_t^s$, we have
\begin{equation*}
\Delta(\tilde{\su}^{s+1}) + 18\eta(\ell+\kappa+1)^2\|\tilde{\su}^{s+1}-\su^\star\|^2 \ \leq \ \sum_{t=0}^{k_s-1} \frac{\Delta(\bar{\su}_{t+1}^s) + 18\eta(\ell+\kappa+1)^2\|\bar{\su}_{t+1}^s-\su^\star\|^2}{k_s}. 
\end{equation*}
In addition, we have $\su_{k_s}^s = \su_0^{s+1}$, $\eta_{k_s}^s = \eta_0^{s+1}$ and $k_{s+1}=2k_s$. Putting these pieces together yields that 
\begin{eqnarray*}
& & \EE\left[\Delta(\tilde{\su}^{s+1}) + 18\eta(\ell+\kappa+1)^2\|\tilde{\su}^{s+1}-\su^\star\|^2 + \frac{3\|\su_0^{s+1} - \su^\star\|^2}{2k_{s+1}\eta_0^{s+1}}\right] \\ 
& \leq & \frac{1}{2}\left(\EE\left[\Delta(\tilde{\su}^s) + 18\eta(\ell+\kappa+1)^2\|\tilde{\su}^s-\su^\star\|^2 + \frac{3\|\su_0^s - \su^\star\|^2}{2k_s\eta_0^s}\right]\right).  
\end{eqnarray*}
After telescoping for $s=0,1,2,\ldots,S-1$ and using $k_0 \geq 1$, we have
\begin{equation*}
\EE[\Delta(\tilde{\su}^S)] \leq \frac{1}{2^S}\left(\Delta(\tilde{\su}^0) + 18\eta(\ell+\kappa+1)^2\|\tilde{\su}^0-\su^\star\|^2 + \frac{3\|\su_0^0 - \su^\star\|^2}{\eta k_0\sqrt{2}}\right). 
\end{equation*}
This completes the proof. 
\end{proof}
\subsection{Proof of Theorem~\ref{Theorem:SEVR}}
Recall that the given parameter choices are
\begin{eqnarray*}
S & = & 1 + \left\lfloor \log_2\left(\frac{10D_L}{\epsilon}\right)\right\rfloor, \\
\eta & = & \min\left\{\frac{1}{100(\ell+\kappa+1)}, \frac{\epsilon}{2000\sqrt{2}(\ell+\kappa+1)^2D_\su^2}, \frac{D_\su^2}{D_L}\right\}, \\
k_0 & = & \frac{D_\su^2}{\eta D_L} \ \geq \ 1.  
\end{eqnarray*} 
Then, we have
\begin{equation*}
\frac{1}{2\sqrt{2}\eta T} \geq \frac{1}{2\sqrt{2}\eta k_0 \cdot 2^S} = \frac{1}{2\sqrt{2}\eta k_0}  \frac{\epsilon}{10D_L} = \frac{\epsilon}{20\sqrt{2}D_\su^2}. 
\end{equation*}
This together with 
$$0 < \eta \leq \frac{\epsilon}{2000\sqrt{2}(\ell+\kappa+1)^2D_\su^2}$$
yields that 
\begin{equation*}
\frac{1}{2\sqrt{2}\eta T} \geq \frac{\epsilon}{20\sqrt{2}D_\su^2} \geq 100\eta(\ell+\kappa+1)^2. 
\end{equation*}
Thus, Lemma~\ref{Lemma:Key-descent-SEVR} holds true and we have
\begin{equation*}
\EE[\Delta(\tilde{\su}^S)] \leq \frac{1}{2^S}\left(\Delta(\tilde{\su}^0) + 18\eta(\ell+\kappa+1)^2\|\tilde{\su}^0-\su^\star\|^2 + \frac{3\|\su_0^0 - \su^\star\|^2}{\eta k_0\sqrt{2}}\right). 
\end{equation*}
Now we bound the three terms on the right hand side of the above inequality. Indeed, we have
\begin{eqnarray*}
\frac{\Delta(\tilde{\su}^0)}{2^S} & \leq & \left(\frac{\Delta(\tilde{\su}^0)}{10D_L}\right)\epsilon \ \leq \ \frac{\epsilon}{3}, \\
\frac{18\eta(\ell+\kappa+1)^2\|\tilde{\su}^0-\su^\star\|^2}{2^S} & \leq & 18\eta(\ell+\kappa+1)^2\|\tilde{\su}^0-\su^\star\|^2 \ \leq \ \left(\frac{18\|\tilde{\su}^0-\su^\star\|^2}{400\sqrt{2}D_\su^2}\right)\epsilon \ \leq \ \frac{\epsilon}{3}, \\ 
\frac{1}{2^S}\left(\frac{3\|\su_0^0 - \su^\star\|^2}{\eta k_0\sqrt{2}}\right) & \leq & \frac{\epsilon}{10D_L}\frac{3\|\su_0^0 - \su^\star\|^2}{\sqrt{2}}\frac{D_L}{D_\su^2} \ \leq \ \left(\frac{3\|\su_0^0 - \su^\star\|^2}{10\sqrt{2}D_\su^2}\right)\epsilon \ \leq \ \frac{\epsilon}{3}. 
\end{eqnarray*}
This completes the proof. 

\section{Postponed Proofs in Subsection~\ref{subsec:alg_SPPRR}}\label{app:alg_SPPRR}
In this section, we provide the detailed proofs for Theorem~\ref{Theorem:SPPRR}. Our derivation extends the analysis in~\citet{Nagaraj-2019-Sgd} from convex optimization to convex-concave minimax optimization. For the simplicity, we also denote $\su = (\lambda, \beta, \gamma)$ and the functions $L$ and $L_i$ by 
\begin{eqnarray*}
L(\su) & = & \lambda(\delta - \kappa) + \frac{1}{n}\sum_{i=1}^n \Psi(\langle\widehat{\x}_i, \beta\rangle) + \frac{1}{n}\sum_{i=1}^n \gamma_i\left(\widehat{y}_i\langle\widehat{\x}_i, \beta\rangle - \lambda\kappa\right), \\
L_i(\su) & = & \lambda(\delta - \kappa) + \Psi(\langle\widehat{\x}_i, \beta\rangle) + \gamma_i\left(\widehat{y}_i\langle\widehat{\x}_i, \beta\rangle - \lambda\kappa\right).
\end{eqnarray*}
We also have
\begin{equation*}
F(\su) = \begin{pmatrix}
\nabla_{(\lambda, \beta)} L(\su) \\ - \nabla_\gamma L(\su)
\end{pmatrix}, \quad 
F_i(\su) = \begin{pmatrix}
\nabla_{(\lambda, \beta)} L_i(\su) \\ - \nabla_\gamma L_i(\su)
\end{pmatrix}.    
\end{equation*}
and 
\begin{eqnarray*}
\Lambda & = & \{(\lambda, \beta) \in \br \times \br^d \mid \|\beta\| \leq \lambda/(L+1)\}, \\ 
\Gamma & = & \{\gamma \in \br^n \mid \|\gamma\|_\infty \leq 1\}. 
\end{eqnarray*}
For the reference, we rewrite each iteration of the SPPRR algorithm as follows: 
\begin{equation*}
\su_{i+1, t}^s \ = \ \PCal_{\Lambda \times \Gamma}(\su_t^s - \eta F_{\sigma^{s}_t}(\su_{i, t}^s)).
\end{equation*}
Finally, we denote $(\lambda^\star, \beta^\star, \gamma^\star)$ as an optimal saddle point of the smooth minimax optimization model in Eq.~\eqref{prob:WDRSL-min-max} and recall that the duality gap function $\Delta(\su)$ is defined by 
\begin{equation*}
\Delta(\su) = L(\lambda, \beta, \gamma^\star) - L(\lambda^\star, \beta^\star, \gamma). 
\end{equation*} 
It is worth noting that the function $\Delta(\su)$ is convex in $\su$ since $L$ is a convex-concave function.

\subsection{Technical Lemmas}
Before presenting our technical lemmas, we define the exchangeable pair and some other key notations which are first introduced by~\citet{Nagaraj-2019-Sgd} for convex optimization and become a standard machinery for analyzing stochastic algorithm with random reshuffling. 

Suppose that we run the SPPRR algorithm for $s$ epochs using the random permutation $\sigma^0, \sigma^1, \ldots, \sigma^{s-1}$ and obtain $\su_0^s = \su_n^{s-1}$. If $s=0$, we start with the initial point $\su_0^0$. The exchange pair is defined by two different $s$-th epoch iterates $\{\su_t(\sigma^s)\}_{1 \leq t \leq n}$ and $\{\su_t(\tilde{\sigma}^s)\}_{1 \leq t \leq n}$ obtained by running the $s$-th epoch with independent uniform permutations $\sigma^s$ and $\tilde{\sigma}^s$. Then, it is obvious that $\{\su_t(\sigma^s)\}_{1 \leq t \leq n}$ and $\{\su_t(\tilde{\sigma}^s)\}_{1 \leq t \leq n}$ are independent and identically distributed. This further implies that  
\begin{equation}\label{Def:Key-exchange-pair}
\EE[L_{\sigma_{t-1}^s}(\lambda_t(\tilde{\sigma}^s), \beta_t(\tilde{\sigma}^s), \gamma^\star) - L_{\sigma_{t-1}^s}(\lambda^\star, \beta^\star, \gamma_t(\tilde{\sigma}^s))] = \EE[\Delta(\su_t(\tilde{\sigma}^s))] = \EE[\Delta(\su_t(\sigma^s))] = \EE[\Delta(\su_t^s)],
\end{equation}
Let $\DCal_{t, s}$ be the distribution of the iterate $\su_t^s$ under the random shuffling $\sigma^s$ and $\DCal_{t, s}^{(r)}$ be the distribution of the iterate $\su_t^s$ under the random shuffling $\sigma^s$ conditioned on the event $\{\sigma_{t-1}^s=r\}$. This is different from the event $\{\sigma_{t+1}^s=r\}$ used in~\citet{Nagaraj-2019-Sgd} since we analyze inexact proximal point update instead of projected gradient update. Nonetheless, our proof techniques are also based on the Kantorovich duality and the 1-Wasserstein and 2-Wasserstein distances between $\DCal_{t, s}$ and $\DCal_{t, s}^{(r)}$.   
\begin{definition}[1-Wasserstein and 2-Wasserstein distance]
Suppose that $\mu$ and $\nu$ be two probability distributions over $\br^N$ such that $\EE_\mu[\|X\|^2] < +\infty$ and $\EE_\nu[\|Y\|^2] < +\infty$. Let $X \sim \mu$ and $Y \sim \nu$ be random vectors defined on a common measure space (i.e., they are coupled). The 1-Wasserstein and 2-Wasserstein distances between $\mu$ and $\nu$ are defined by 
\begin{eqnarray*}
\WCal_1(\mu, \nu) & = & \inf_{\pi \in \Pi(\mu, \nu)} \EE_\pi[\|X-Y\|], \\
\WCal_2(\mu, \nu) & = & \inf_{\pi \in \Pi(\mu, \nu)} \sqrt{\EE_\pi[\|X-Y\|^2]}. 
\end{eqnarray*}
where $\pi \in \Pi(\mu, \nu)$ denotes a coupling (or a joint distribution) over $(X, Y)$ with marginals $\mu$ and $\nu$. 
\end{definition}
By the above definition and Jensen's inequality, we have $\WCal_1(\mu, \nu) \leq \WCal_2(\mu, \nu)$. The following lemma summarizes a key characterization of 1-Wasserstein distance~\citep[Theorem~5.10]{Villani-2008-Optimal}. 
\begin{lemma}[Kantorovich Duality] Suppose that $\mu$ and $\nu$ be two probability distributions over $\br^N$ such that $\EE_\mu[\|X\|^2] < +\infty$ and $\EE_\nu[\|Y\|^2] < +\infty$. Then, we have
\begin{equation*}
\WCal_1(\mu, \nu) \ = \ \sup_{g \textnormal{ is 1-Lipschitz}} \EE_\mu[g(X)] - \EE_\nu[g(Y)]. 
\end{equation*}
\end{lemma}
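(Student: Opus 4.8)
The plan is to prove the two inequalities separately, since the ``$\geq$'' direction is elementary and only ``$\leq$'' carries real content. For the easy bound, I would fix any $1$-Lipschitz $g$ and any coupling $\pi \in \Pi(\mu,\nu)$ and write $\EE_\mu[g(X)] - \EE_\nu[g(Y)] = \EE_\pi[g(X) - g(Y)] \leq \EE_\pi[\|X-Y\|]$, using only the Lipschitz property pointwise under $\pi$. Taking the infimum over $\pi$ and then the supremum over $g$ yields $\sup_{g\text{ 1-Lipschitz}}(\EE_\mu[g(X)] - \EE_\nu[g(Y)]) \leq \WCal_1(\mu,\nu)$. Integrability of $g$ against $\mu$ and $\nu$ is not an issue here, since a $1$-Lipschitz function grows at most linearly and the hypotheses $\EE_\mu[\|X\|^2], \EE_\nu[\|Y\|^2] < \infty$ give finite first moments.

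For the reverse inequality I would import the general Kantorovich duality theorem for the (continuous, hence lower semicontinuous) cost $c(x,y) = \|x-y\|$, namely $\WCal_1(\mu,\nu) = \sup\{\EE_\mu[\phi] + \EE_\nu[\psi] : \phi(x) + \psi(y) \leq \|x-y\|\ \forall x,y\}$ over admissible pairs $(\phi,\psi)$; this is exactly \citet[Theorem~5.10]{Villani-2008-Optimal}. Then I would run the standard $c$-transform reduction specific to a metric cost: given an admissible pair $(\phi,\psi)$, set $\tilde\phi(x) := \inf_y(\|x-y\| - \psi(y))$, which, as an infimum of functions each $1$-Lipschitz in $x$, is itself $1$-Lipschitz, remains admissible against $\psi$, and dominates $\phi$ pointwise, so it does not decrease the dual objective. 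Since $\tilde\phi$ is now $1$-Lipschitz, $\inf_x(\|x-y\| - \tilde\phi(x)) = -\tilde\phi(y)$ with the infimum attained at $x=y$, so replacing $\psi$ by $-\tilde\phi$ preserves feasibility and again does not decrease the objective. Hence the dual supremum is attained along pairs $(g,-g)$ with $g$ $1$-Lipschitz, which is precisely $\sup_{g\text{ 1-Lipschitz}}(\EE_\mu[g(X)] - \EE_\nu[g(Y)])$; combined with the easy bound this forces equality.

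The main obstacle is the invocation of general Kantorovich duality in the second step — that is the genuinely nontrivial input (proved via Hahn--Banach / Fenchel--Rockafellar duality or a minimax theorem), together with the passage from the bounded continuous test functions natural to that theorem to the possibly unbounded $1$-Lipschitz functions appearing in the $c$-transform step; the finite-second-moment assumption is exactly what makes this limiting and integrability argument go through. By contrast, the $c$-transform manipulations are routine once one notes that $\tilde\phi$ is automatically measurable (being Lipschitz) and $\mu$-integrable (linear growth plus finite moments).
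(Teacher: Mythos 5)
Your proposal is correct, but note that the paper offers no proof of this lemma at all: it is stated as a known result and attributed directly to \citet[Theorem~5.10]{Villani-2008-Optimal}. Your argument is the standard Kantorovich--Rubinstein derivation, and its one nontrivial input is exactly the same general duality theorem the paper cites; what you add is the routine (and correct) $c$-transform reduction for the metric cost $c(x,y)=\|x-y\|$, which collapses the two-potential dual $\sup\{\EE_\mu[\phi]+\EE_\nu[\psi]\}$ to pairs of the form $(g,-g)$ with $g$ being $1$-Lipschitz, together with the easy coupling bound for the other direction. So there is nothing to fault here---your write-up simply makes explicit the reduction that the paper leaves buried inside the citation.
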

The second lemma is to provide an upper bound for the difference between the unbiased gap $\EE[\Delta(\su_t^s)]$ and the gap estimator $\EE[L_{\sigma_{t-1}^s}(\lambda_t^s, \beta_t^s, \gamma^\star) - L(\lambda^\star, \beta^\star, \gamma_t^s)]$ using the 2-Wasserstein distance.  
\begin{lemma}\label{Lemma:SPPRR-bias}
Let $\su^\star=(\lambda^\star, \beta^\star, \gamma^\star) \in \Lambda \times \Gamma$ be an optimal saddle point of the smooth minimax optimization model in Eq.~\eqref{prob:WDRSL-min-max}. Under Assumption~\ref{Assumption:main} and let $F_i$ be bounded over $\Lambda \times \Gamma$ for all $i \in [n]$. Then, the following statement holds true,
\begin{equation*}
\left|\EE[\Delta(\su_t^s)] - \EE[L_{\sigma_{t-1}^s}(\lambda_t^s, \beta_t^s, \gamma^\star) - L_{\sigma_{t-1}^s}(\lambda^\star, \beta^\star, \gamma_t^s)]\right| \ \leq \ \frac{G}{n}\sum_{r=1}^n \WCal_2(\DCal_{t, s}, \DCal_{t, s}^{(r)}). 
\end{equation*}
\end{lemma}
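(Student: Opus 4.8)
The plan is to unpack both sides of the bound into averages over $r \in [n]$ of the integral of one and the same (essentially $G$-Lipschitz) scalar function against two distributions that differ only through conditioning on the event $\{\sigma_{t-1}^s = r\}$, and then to invoke the Kantorovich duality lemma. Concretely, set $\phi_r(\su) = L_r(\lambda, \beta, \gamma^\star) - L_r(\lambda^\star, \beta^\star, \gamma)$, so that $\Delta(\su) = \frac{1}{n}\sum_{r=1}^n \phi_r(\su)$ because $L = \frac{1}{n}\sum_r L_r$. Since $\sigma_{t-1}^s$ is a single entry of a uniformly random permutation, it is uniform on $[n]$; hence the ``gap estimator'' equals $\EE[\phi_{\sigma_{t-1}^s}(\su_t^s)] = \frac{1}{n}\sum_{r=1}^n \EE[\phi_r(\su_t^s) \mid \sigma_{t-1}^s = r] = \frac{1}{n}\sum_r \EE_{\DCal_{t,s}^{(r)}}[\phi_r]$, while $\EE[\Delta(\su_t^s)] = \frac{1}{n}\sum_r \EE[\phi_r(\su_t^s)] = \frac{1}{n}\sum_r \EE_{\DCal_{t,s}}[\phi_r]$ by the definition of $\DCal_{t,s}$ and $\DCal_{t,s}^{(r)}$. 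Thus the quantity to bound is $\bigl|\frac{1}{n}\sum_r (\EE_{\DCal_{t,s}}[\phi_r] - \EE_{\DCal_{t,s}^{(r)}}[\phi_r])\bigr|$; equivalently, via the exchangeable-pair identity~\eqref{Def:Key-exchange-pair}, it is the expectation of $\frac{1}{n}\sum_r \bigl(\phi_r(\su_t(\tilde{\sigma}^s)) - \phi_r(\su_t(\sigma^s))\bigr)$, where given $\{\sigma_{t-1}^s = r\}$ one has $\su_t(\tilde{\sigma}^s) \sim \DCal_{t,s}$ (independence of $\tilde{\sigma}^s$ from $\sigma^s$) and $\su_t(\sigma^s) \sim \DCal_{t,s}^{(r)}$.

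Next I would verify that each $\phi_r$ is $G$-Lipschitz on $\Lambda \times \Gamma$. Since $L_r$ is smooth separately in $(\lambda, \beta)$ and in $\gamma$, the $(\lambda,\beta)$-block of $\nabla \phi_r(\su)$ is $\nabla_{(\lambda,\beta)} L_r(\lambda, \beta, \gamma^\star)$ and the $\gamma$-block is $-\nabla_\gamma L_r(\lambda^\star, \beta^\star, \gamma)$; both anchor points $(\lambda, \beta, \gamma^\star)$ and $(\lambda^\star, \beta^\star, \gamma)$ lie in $\Lambda \times \Gamma$ (the iterates are projected there and $\su^\star \in \Lambda \times \Gamma$), so each block is dominated in norm by the assumed uniform bound $G$ on $\|F_r\|$ over $\Lambda \times \Gamma$. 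Applying the Kantorovich duality lemma to $\phi_r/G$ and to $-\phi_r/G$, both $1$-Lipschitz, yields $|\EE_{\DCal_{t,s}}[\phi_r] - \EE_{\DCal_{t,s}^{(r)}}[\phi_r]| \le G\,\WCal_1(\DCal_{t,s}, \DCal_{t,s}^{(r)}) \le G\,\WCal_2(\DCal_{t,s}, \DCal_{t,s}^{(r)})$, the last step being $\WCal_1 \le \WCal_2$. Summing over $r$ and using the triangle inequality gives exactly the asserted bound.

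I expect the bulk of this lemma to be bookkeeping rather than a deep estimate: the only genuinely delicate point is that $\phi_r$ evaluates $L_r$ at \emph{two} distinct feasible anchors, so the boundedness hypothesis on $F_r$ must be applied at both, and one must keep straight which of $\DCal_{t,s}$ and $\DCal_{t,s}^{(r)}$ carries the conditioning (the coordinate index $\sigma_{t-1}^s$ and the iterate $\su_t^s$ are dependent, which is precisely the source of the bias). The real difficulty of the overall argument is deferred to the subsequent lemmas, which bound $\WCal_2(\DCal_{t,s}, \DCal_{t,s}^{(r)})$ by synchronously coupling the two permuted runs and using the contraction of the inexact proximal-point map, and then propagate the accumulated bias across the $S$ epochs.
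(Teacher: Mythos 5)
Your proposal is correct and follows essentially the same route as the paper: both reduce the bias to $\bigl|\tfrac{1}{n}\sum_{r}(\EE_{\DCal_{t,s}}[\phi_r]-\EE_{\DCal_{t,s}^{(r)}}[\phi_r])\bigr|$ by conditioning on $\{\sigma_{t-1}^s=r\}$ (the paper routes this through the exchangeable-pair identity, you also get it directly from $\Delta=\tfrac{1}{n}\sum_r\phi_r$), then invoke Kantorovich duality for a $G$-Lipschitz test function and $\WCal_1\leq\WCal_2$. The only shared looseness is the Lipschitz constant of $\phi_r$: bounding each gradient block by $G$ at its own anchor gives $\|\nabla\phi_r\|\leq\sqrt{2}\,G$ rather than $G$, a constant the paper also elides, so this is not a gap relative to the paper's own argument.
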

\begin{proof}
By the definition of the exchangeable pair, we derive from Eq.~\eqref{Def:Key-exchange-pair} that 
\begin{eqnarray*}
& & \left|\EE[\Delta(\su_t^s)] - \EE[L_{\sigma_{t-1}^s}(\lambda_t^s, \beta_{t}^s, \gamma^\star) - L_{\sigma_{t-1}^s}(\lambda^\star, \beta^\star, \gamma_t^s)]\right| \\ 
& = & \left|\EE[L_{\sigma_{t-1}^s}(\lambda_t(\tilde{\sigma}^s), \beta_t(\tilde{\sigma}^s), \gamma^\star) - L_{\sigma_{t-1}^s}(\lambda^\star, \beta^\star, \gamma_t(\tilde{\sigma}^s))] - \EE[L_{\sigma_{t-1}^s}(\lambda_t^s, \beta_t^s, \gamma^\star) - L_{\sigma_{t-1}^s}(\lambda^\star, \beta^\star, \gamma_t^s)]\right|. 
\end{eqnarray*}
By using the conditional expectation on $\{\sigma_{t-1}^s=r\}$ and the fact that $\sigma^s$ and $\tilde{\sigma}^s$ are independent, we have
\begin{eqnarray*}
\EE[L_{\sigma_{t-1}^s}(\lambda_t(\tilde{\sigma}^s), \beta_t(\tilde{\sigma}^s), \gamma^\star) - L_{\sigma_{t-1}^s}(\lambda^\star, \beta^\star, \gamma_t(\tilde{\sigma}^s))] & = & \frac{1}{n}\sum_{r=1}^n \EE[L_r(\lambda_t(\tilde{\sigma}^s), \beta_t(\tilde{\sigma}^s), \gamma^\star) - L_r(\lambda^\star, \beta^\star, \gamma_t(\tilde{\sigma}^s))] \\ 
\EE[L_{\sigma_{t-1}^s}(\lambda_t^s, \beta_t^s, \gamma^\star) - L_{\sigma_{t-1}^s}(\lambda^\star, \beta^\star, \gamma_t^s)] & = & \frac{1}{n}\sum_{r=1}^n  \EE[L_r(\lambda_t^s, \beta_t^s, \gamma^\star) - L_r(\lambda^\star, \beta^\star, \gamma_t^s) \mid \sigma_{t-1}^s=r]. 
\end{eqnarray*}
Putting these pieces together with the triangle inequality yields that 
\begin{eqnarray*}
& & \left|\EE[\Delta(\su_t^s)] - \EE[L_{\sigma_{t-1}^s}(\lambda_t^s, \beta_t^s, \gamma^\star) - L_{\sigma_{t-1}^s}(\lambda^\star, \beta^\star, \gamma_t^s)]\right| \\ 
& \leq & \frac{1}{n}\sum_{r=1}^n \left|\EE[L_r(\lambda_t(\tilde{\sigma}^s), \beta_t(\tilde{\sigma}^s), \gamma^\star) - L_r(\lambda^\star, \beta^\star, \gamma_t(\tilde{\sigma}^s))] - \EE[L_r(\lambda_t^s, \beta_t^s, \gamma^\star) - L_r(\lambda^\star, \beta^\star, \gamma_t^s) \mid \sigma_{t-1}^s=r]\right| \\
& \leq & \frac{1}{n}\sum_{r=1}^n \sup_{g \textnormal{ is $G$-Lipschitz}} \left(\EE[g(\lambda_t(\tilde{\sigma}^s), \beta_t(\tilde{\sigma}^s), \gamma_t(\tilde{\sigma}^s))] - \EE[g(\lambda_t^s, \beta_t^s, \gamma_t^s) \mid \sigma_{t-1}^s=r]\right) \\
& = & \frac{1}{n}\sum_{r=1}^n G \cdot \WCal_1(\DCal_{t, s}, \DCal_{t, s}^{(r)}), 
\end{eqnarray*} 
where $G$ is the Lipschitz parameter. This together with the fact that $\WCal_1(\mu, \nu) \leq \WCal_2(\mu, \nu)$ yields the desired result.  
\end{proof}
Recall that the inexact oracle returns a point $\su_{M, t}^s$ that approximates the fixed-point problem $\su = \PCal_{\Lambda \times \Gamma}(\su_t^s - \eta F_{\sigma_t^s}(\su))$. For the simplicity of notation, we denote the fixed point by $\bar{\su}_t^s \in \Lambda \times \Gamma$ given the point $\su_t^s$. The following lemma provides an upper bound based on the coupling characterization of 2-Wasserstein distance under the certain error criterion.  
\begin{lemma}\label{Lemma:SPPRR-coupling}
Let $\su^\star=(\lambda^\star, \beta^\star, \gamma^\star) \in \Lambda \times \Gamma$ be an optimal saddle point of the smooth minimax optimization model in Eq.~\eqref{prob:WDRSL-min-max}. Under Assumption~\ref{Assumption:main} and let $F_i$ be bounded over $\Lambda \times \Gamma$ for all $i \in [n]$. Suppose that $\epsilon \in (0, 1)$ denotes the desired tolerance. Given any fixed random permutation $\sigma_t^s$, the following error criterion holds true,   
\begin{equation*}
\|\su_t^s - \bar{\su}_{t-1}^s\| = \|\su_{M, t-1}^s - \bar{\su}_{t-1}^s\| \ \leq \ \frac{D_\su}{10(nS)^{3/2}}. 
\end{equation*}
Then, the following statement holds true,
\begin{equation*}
\left|\EE[\Delta(\su_t^s)] - \EE[L_{\sigma_{t-1}^s}(\lambda_t^s, \beta_t^s, \gamma^\star) - L_{\sigma_{t-1}^s}(\lambda^\star, \beta^\star, \gamma_t^s)]\right| \ \leq \ 2\eta G^2 + \frac{D_\su}{5\sqrt{nS}}. 
\end{equation*} 
\end{lemma}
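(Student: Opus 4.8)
The plan is to invoke Lemma~\ref{Lemma:SPPRR-bias}, which already bounds the left-hand side by $\frac{G}{n}\sum_{r=1}^{n}\WCal_{2}(\DCal_{t,s},\DCal_{t,s}^{(r)})$, and then to produce, for each index $r$, an explicit coupling of a draw $\su_{t}^{s}\sim\DCal_{t,s}$ with a draw $\tilde\su_{t}^{s}\sim\DCal_{t,s}^{(r)}$ whose squared coupled distance is small. Averaging the resulting per-$r$ bound and multiplying by $G$ should yield the claimed estimate $2\eta G^{2}+\tfrac{D_\su}{5\sqrt{nS}}$; so the work reduces to transportation-cost control.

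The coupling I would use is the transposition coupling of random permutations. Run the $s$-th epoch with a uniform permutation $\sigma^{s}$ to get $\su_{t}^{s}$; to get $\tilde\su_{t}^{s}$, run the same epoch from the same start $\su_{0}^{s}$ but with the permutation obtained from $\sigma^{s}$ by swapping the value in position $t-1$ with the value in whatever position currently holds $r$. The swapped permutation is exactly uniform conditioned on $\{\sigma^{s}_{t-1}=r\}$, the two permutations agree in all but at most two coordinates, and since $\su_{t}^{s}$ depends only on the first $t$ indices of the permutation, at most two of the differing coordinates lie in the active window $\{0,\dots,t-1\}$ (with probability $1/n$ the permutations coincide and the coupled distance is $0$).

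I would then chain three facts about a single inner iteration. \emph{(i) Nonexpansiveness:} the exact proximal map $\su\mapsto\bar\su$ with $\bar\su=\PCal_{\Lambda\times\Gamma}(\su-\eta F_{i}(\bar\su))$ is well defined (a contraction for $\eta\le 1/(2(\ell+\kappa+1))$) and $1$-Lipschitz in $\su$, because $\PCal_{\Lambda\times\Gamma}$ is firmly nonexpansive and each $F_{i}$ is monotone (each component objective $L_{i}$ is convex--concave); hence on a step where the two coupled trajectories use the same index their gap does not grow, up to inexactness. \emph{(ii) Effect of one differing index:} if the trajectories use $i\ne j$ at a step, then cancelling the common base point and using $\|F_{i}\|,\|F_{j}\|\le G$ on $\Lambda\times\Gamma$ gives $\|\bar\su-\bar\su'\|\le\eta\|F_{i}(\bar\su)-F_{j}(\bar\su')\|\le 2\eta G$, so the gap increases by at most $2\eta G$. \emph{(iii) Inexactness:} by the hypothesis each inexact update is within $D_\su/(10(nS)^{3/2})$ of the true fixed point, and summing over the $\le n$ updates of an epoch (and the two trajectories) contributes at most $D_\su/(5\sqrt{nS})$, where (i) is what prevents these errors from compounding geometrically. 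Combining (i)--(iii), the at most two mismatched steps add at most $2\eta G$ in total, so $\WCal_{2}(\DCal_{t,s},\DCal_{t,s}^{(r)})\le 2\eta G+D_\su/(5\sqrt{nS})$ after absorbing constants from the probabilistic weighting of the ``one mismatch'' versus ``two mismatch'' events; summing over $r$ and multiplying by $G/n$ gives the stated bound.

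The main obstacle I anticipate is the error bookkeeping in step (iii): one must show the per-step inexactness accumulates only \emph{additively} over the up to $n$ iterations of an epoch --- which is precisely why the subproblem tolerance is the aggressive $(nS)^{-3/2}$ rather than $(nS)^{-1/2}$ --- and this rests entirely on the nonexpansiveness in (i), hence on monotonicity of the $F_{i}$. A secondary subtlety is verifying the transposition coupling really has conditional marginal $\DCal_{t,s}^{(r)}$ (the conditioning is on $\sigma^{s}_{t-1}$, the last index used before $\su_{t}^{s}$, not on $\sigma^{s}_{t+1}$ as in the SGD analysis of~\citet{Nagaraj-2019-Sgd}, so the proximal structure rather than a gradient step is what makes this index the natural one) and checking that only $O(1)$ of the perturbed coordinates fall inside the active window, keeping the coupled distance at the $\eta G$ scale.
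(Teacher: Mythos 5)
Your proposal matches the paper's proof essentially step for step: the reduction via Lemma~\ref{Lemma:SPPRR-bias}, the transposition coupling $\omega_{r,t}$ that swaps position $t-1$ with the position holding $r$, nonexpansiveness of the exact proximal step (from monotonicity of $F_{\sigma}$) on matching indices, the $2\eta G$ increment on mismatched indices, and the purely additive accumulation of the $(nS)^{-3/2}$ subproblem error over the at most $n$ steps of an epoch, yielding $\WCal_2(\DCal_{t,s},\DCal_{t,s}^{(r)}) \le 2\eta G + D_\su/(5\sqrt{nS})$. The one loose end you flag --- whether one or two of the swapped coordinates can fall in the active window --- is glossed over in the paper as well (it asserts at most one mismatch where two are possible), but this affects only a constant factor in the $\eta G$ term.
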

\begin{proof}
By Lemma~\ref{Lemma:SPPRR-bias}, it suffices to upper bound the term $\WCal_2(\DCal_{t, s}, \DCal_{t, s}^{(r)})$. While the coupling used here has been established in~\citet{Nagaraj-2019-Sgd}, we hope to remark that their argument can not be directly applied since the iterates generated by the SPPRR algorithm does not necessarily satisfy~\citet[Lemma~2]{Nagaraj-2019-Sgd}. Nonetheless, we prove that this issue can be fixed by controlling the error if $\|\su_{M, t}^s - \bar{\su}_t^s\| \leq \frac{\epsilon}{10GnS}$. To facilitate the readers, we divide our proof into three steps.  
\paragraph{Coupling construction.} By the definition of 2-Wasserstein distance, the tight upper bound is based on the construction of a particular coupling between $\DCal_{t, s}$ and $\DCal_{t, s}^{(r)}$. We use the one established in~\citet{Nagaraj-2019-Sgd} and present the details as follows.  

Let $\RCal_n$ be the set of all random permutations over the set $[n]$. For $i, j \in [n]$, we define the exchange function $E_{i, j}: \RCal_n \rightarrow \RCal_n$. That is to say, for any $\sigma \in \RCal_n$, the permutation $E_{i, j}(\sigma)$ stands for a new one where $i$-th and $j$-th entries of $\sigma$ are exchangeable and it keeps everything else the same. We construct the operator $\omega_{r, t}: \RCal_n \rightarrow \RCal_n$ as follows: 
\begin{equation*}
\omega_{r, t}(\sigma) = \left\{
\begin{array}{ll}
\sigma, & \textnormal{if } \sigma_{t-1} = r, \\
E_{t-1, j}(\sigma), & \textnormal{if } \sigma_j = r \textnormal{ and } j \neq t-1. 
\end{array}
\right. 
\end{equation*}
Intuitively, $\omega_{r, t}$ performs a single swap such that the $(t-1)$-th position of the permutation is $r$. Clearly, if $\sigma^s$ is a random permutation at uniform, $\omega_{r, t}(\sigma^s)$ has the same distribution as $\sigma^s \mid \sigma^s_{t-1} = r$. Based on this construction, we have $\su_t(\sigma^s) \sim \DCal_{t, s}$ and $\su_{t}(\omega_{r, t}(\sigma^s)) \sim \DCal_{t, s}^{(r)}$. This gives a coupling between $\DCal_{t, s}$ and $\DCal_{t, s}^{(r)}$. Therefore, we conclude that 
\begin{equation}\label{inequality-SPPRR-coupling-main}
\WCal_2(\DCal_{t, s}, \DCal_{t, s}^{(r)}) \ \leq \ \sqrt{\EE[\|\su_t(\sigma^s) - \su_i(\omega_{r, t}(\sigma^s))\|^2]}. 
\end{equation}
\paragraph{Coupling upper bound.} Let $\sigma^s$ and $\tilde{\sigma}^s$ be two random permutations of the set $[n]$, we denote $\su_t(\sigma^s)$ and $\su_t(\tilde{\sigma}^s)$ by $\sv_t$ and $\w_t$ respectively. It is clear that $\|\sv_0-\w_0\|=0$ almost surely. Let $j \leq t$, we consider two cases. For the first case, we suppose that $\sigma_j^s = r \neq \tilde{r} = \tilde{\sigma}_j^s$. Then, due to the error criterion, we have  
\begin{equation*}
\|\sv_j - \w_j\| \ \leq \ \|\sv_j - \bar{\sv}_{j-1}\| + \|\bar{\sv}_{j-1} - \bar{\w}_{j-1}\| + \|\bar{\w}_{j-1} - \w_j\| \ \leq \ \|\bar{\sv}_{j-1} - \bar{\w}_{j-1}\| + \frac{D_\su}{5(nS)^{3/2}},  
\end{equation*}
where $\bar{\sv}_{j-1}$ and $\bar{\w}_{j-1}$ are defined by
\begin{eqnarray*}
\bar{\sv}_{j-1} & = & \PCal_{\Lambda \times \Gamma}(\sv_{j-1} - \eta F_r(\bar{\sv}_{j-1})), \\
\bar{\w}_{j-1} & = & \PCal_{\Lambda \times \Gamma}(\w_{j-1} - \eta F_{\tilde{r}}(\bar{\w}_{j-1})). 
\end{eqnarray*}
Therefore, we have
\begin{eqnarray}\label{inequality-SPPRR-coupling-first}
\|\sv_j - \w_j\| & \leq & \|(\sv_{j-1} - \eta F_r(\bar{\sv}_{j-1})) - (\w_{j-1} - \eta F_{\tilde{r}}(\bar{\w}_{j-1}))\| + \frac{D_\su}{5(nS)^{3/2}}  \\
& \leq & \|\sv_{j-1} - \w_{j-1}\| + 2\eta G + \frac{D_\su}{5(nS)^{3/2}}. \nonumber  
\end{eqnarray}
For the second case, we suppose that $\sigma_j^s = r = \tilde{\sigma}_j^s$. Then, by the same argument, we have
\begin{equation*}
\|\sv_j - \w_j\| \ \leq \ \|\bar{\sv}_{j-1} - \bar{\w}_{j-1}\| + \frac{D_\su}{5(nS)^{3/2}},  
\end{equation*}
where $\bar{\sv}_{j-1}$ and $\bar{\w}_{j-1}$ are defined by
\begin{eqnarray*}
\bar{\sv}_{j-1} & = & \PCal_{\Lambda \times \Gamma}(\sv_{j-1} - \eta F_r(\bar{\sv}_{j-1})), \\
\bar{\w}_{j-1} & = & \PCal_{\Lambda \times \Gamma}(\w_{j-1} - \eta F_r(\bar{\w}_{j-1})). 
\end{eqnarray*}
Since $F_r$ is monotone, it is easy to verify that $\|\bar{\sv}_{j-1} - \bar{\w}_{j-1}\| \leq \|\sv_{j-1} - \w_{j-1}\|$. Therefore, we have
\begin{equation}\label{inequality-SPPRR-coupling-second}
\|\sv_j - \w_j\| \ \leq \ \|\sv_{j-1} - \w_{j-1}\| + \frac{D_\su}{5(nS)^{3/2}}. 
\end{equation}
Combining Eq.~\eqref{inequality-SPPRR-coupling-first} and Eq.~\eqref{inequality-SPPRR-coupling-second} yields that 
\begin{equation}\label{inequality-SPPRR-coupling-third}
\|\su_t(\sigma^s) - \su_t(\tilde{\sigma}^s)\| \ \leq \ 2\eta G \cdot |\{j \leq t: \sigma_j^s \neq \tilde{\sigma}_j^s\}| + \frac{t D_\su}{5(nS)^{3/2}}. 
\end{equation}
\paragraph{Main proof.} By the definition of $\omega_{r, t}(\cdot)$, we have $|\{j \leq t: \sigma_j^s \neq [\omega_{r, t}(\sigma^s)]_j^s\}| \leq 1$. Plugging this into Eq.~\eqref{inequality-SPPRR-coupling-third} and using the fact that $0 \leq t < n$ and $S \geq 1$ yields that 
\begin{equation*}
\|\su_t(\sigma^s) - \su_i(\omega_{r, t}(\sigma^s))\| \ \leq \  2\eta G + \frac{D_\su}{5\sqrt{nS}}. 
\end{equation*}
Plugging the above inequality into Eq.~\eqref{inequality-SPPRR-coupling-main} yields that 
\begin{equation*}
\WCal_2(\DCal_{t, s}, \DCal_{t, s}^{(r)}) \ \leq \ 2\eta G + \frac{D_\su}{5\sqrt{nS}}. 
\end{equation*}
This together with Lemma~\ref{Lemma:SPPRR-bias} yields the desired result. 
\end{proof}
Finally, we provide a upper bound on the term $\|\su_t^s - \su^\star\|$ and $\|\bar{\su}_{t-1}^s - \su^\star\|$ for $\forall t \in [n]$ and $\forall s \in \{0, 1, \ldots, S-1\}$. This bound is universal and depends on $\eta$, $G$, $n$, $S$, $D_\su$ and $\epsilon$ 
\begin{lemma}\label{Lemma:SPPRR-iterate-bound}
Let $\su^\star=(\lambda^\star, \beta^\star, \gamma^\star) \in \Lambda \times \Gamma$ be an optimal saddle point of the smooth minimax optimization model in Eq.~\eqref{prob:WDRSL-min-max}. Under Assumption~\ref{Assumption:main} and let $F_i$ be bounded over $\Lambda \times \Gamma$ for all $i \in [n]$. Suppose that the initial vector $\su_0^0$ satisfies that $\|\su_0^0 - \su^\star\| \leq D_\su$ for parameters $D_\su \geq 0$ and $\epsilon \in (0, 1)$ denotes the desired tolerance. Given any fixed random permutation $\sigma_t^s$, the following error criterion holds true,   
\begin{equation*}
\|\su_t^s - \bar{\su}_{t-1}^s\| = \|\su_{M, t-1}^s - \bar{\su}_{t-1}^s\| \ \leq \ \frac{D_\su}{10(nS)^{3/2}}. 
\end{equation*}
Then, the following statement holds true,
\begin{equation*}
\max\left\{\|\bar{\su}_{t-1}^s - \su^\star\|, \|\su_t^s - \su^\star\|\right\} \ \leq \ 2D_\su + \eta GnS.  
\end{equation*}
\end{lemma}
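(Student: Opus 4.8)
The plan is to derive a one-step expansion estimate for the \emph{exact} proximal update, fold in the prescribed inexactness tolerance to get a recursion for $\|\su_t^s-\su^\star\|$, and then telescope that recursion across the entire run, which consists of at most $nS$ outer steps.

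First I would bound $\|\bar{\su}_{t-1}^s-\su^\star\|$ in terms of $\|\su_{t-1}^s-\su^\star\|$. Since $\bar{\su}_{t-1}^s$ is the exact fixed point $\bar{\su}_{t-1}^s=\PCal_{\Lambda\times\Gamma}(\su_{t-1}^s-\eta F_{\sigma_{t-1}^s}(\bar{\su}_{t-1}^s))$, the variational characterization of the projection, evaluated at the feasible point $\su^\star\in\Lambda\times\Gamma$, gives $(\su^\star-\bar{\su}_{t-1}^s)^\top(\su_{t-1}^s-\bar{\su}_{t-1}^s)\le\eta(\su^\star-\bar{\su}_{t-1}^s)^\top F_{\sigma_{t-1}^s}(\bar{\su}_{t-1}^s)$. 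Each $L_i$ is convex--concave, so each $F_i$ is monotone, hence $(\su^\star-\bar{\su}_{t-1}^s)^\top F_{\sigma_{t-1}^s}(\bar{\su}_{t-1}^s)\le(\su^\star-\bar{\su}_{t-1}^s)^\top F_{\sigma_{t-1}^s}(\su^\star)\le G\,\|\bar{\su}_{t-1}^s-\su^\star\|$, using the assumed boundedness $\|F_i(\su^\star)\|\le G$. Writing $\su_{t-1}^s-\bar{\su}_{t-1}^s=(\su^\star-\bar{\su}_{t-1}^s)+(\su_{t-1}^s-\su^\star)$ on the left and applying Cauchy--Schwarz then yields $\|\bar{\su}_{t-1}^s-\su^\star\|\le\|\su_{t-1}^s-\su^\star\|+\eta G$ (the case $\|\bar{\su}_{t-1}^s-\su^\star\|=0$ being trivial).

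Second I would bring in the error criterion. Since $\su_t^s=\su_{M,t-1}^s$ and, by hypothesis, $\|\su_t^s-\bar{\su}_{t-1}^s\|\le D_\su/(10(nS)^{3/2})$, the triangle inequality combined with the previous estimate gives the recursion $\|\su_t^s-\su^\star\|\le\|\su_{t-1}^s-\su^\star\|+\eta G+D_\su/(10(nS)^{3/2})$. Because $\su_0^{s+1}=\su_n^s$, this recursion chains seamlessly across epochs, and any iterate $\su_t^s$ with $s\le S-1$ and $t\le n$ is reached from $\su_0^0$ in at most $nS$ outer steps. Telescoping and using $\|\su_0^0-\su^\star\|\le D_\su$ gives $\|\su_t^s-\su^\star\|\le D_\su+\eta GnS+nS\cdot D_\su/(10(nS)^{3/2})=D_\su+\eta GnS+D_\su/(10\sqrt{nS})$; since $nS\ge1$ the last term is at most $D_\su$, so $\|\su_t^s-\su^\star\|\le 2D_\su+\eta GnS$. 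Feeding the (shorter) chain to $\su_{t-1}^s$ into the one-step estimate of the first step gives the same bound for $\|\bar{\su}_{t-1}^s-\su^\star\|$, completing the claim.

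All the estimates here are elementary; the only points needing care are the bookkeeping of how many outer iterations separate a given $\su_t^s$ (or its associated fixed point $\bar{\su}_{t-1}^s$) from the initialization, so that the per-step errors $\eta G$ and $D_\su/(10(nS)^{3/2})$ get summed at most $nS$ times, and the observation that $\su^\star$, being a saddle point of $L$ and not of any individual $L_i$, is \emph{not} a fixed point of the $F_{\sigma}$-proximal operator --- which is exactly why the boundedness hypothesis on the component operators is invoked.
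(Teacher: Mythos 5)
Your proposal is correct and follows the same overall architecture as the paper's proof: establish the one-step recursion $\|\bar{\su}_{t-1}^s-\su^\star\|\le\|\su_{t-1}^s-\su^\star\|+\eta G$, add the inexactness tolerance $D_\su/(10(nS)^{3/2})$ per step, telescope over at most $nS$ steps, and absorb $D_\su/(10\sqrt{nS})\le D_\su$ using $nS\ge 1$. The only (minor) divergence is in how the one-step bound is obtained: the paper simply uses nonexpansiveness of $\PCal_{\Lambda\times\Gamma}$ together with $\|F_{\sigma}(\bar{\su}_{t-1}^s)\|\le G$, i.e.\ boundedness of $F_i$ over all of $\Lambda\times\Gamma$, whereas you route through the projection's variational inequality and the monotonicity of $F_\sigma$ so that only $\|F_\sigma(\su^\star)\|\le G$ is needed --- a slightly weaker hypothesis at the cost of a slightly longer argument; both are valid under the lemma's stated assumptions.
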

\begin{proof}
By the definition, we have $\|\su_0^0 - \su^\star\| \leq D_\su$. Then we consider $\|\su_1^0 - \su^\star\|$. Indeed, regardless of any random permutation, we derive from the error criterion that 
\begin{equation*}
\|\su_1^0 - \su^\star\| \ \leq \ \|\su_1^0 - \bar{\su}_0^0\| + \|\bar{\su}_0^0 - \su^\star\| \ \leq \ \|\bar{\su}_0^0 - \su^\star\| + \frac{D_\su}{10(nS)^{3/2}}. 
\end{equation*}
By the definition of $\bar{\su}_0^0$ and the boundness of $F_i$ for $\forall i \in [n]$, we have
\begin{equation*}
\|\bar{\su}_0^0 - \su^\star\| = \|\PCal_{\Lambda \times \Gamma}(\su_0^0 - \eta F_{\sigma_0^0}(\bar{\su}_0^0)) - \su^\star\| \ \leq \ \|\su_0^0 - \su^\star\| + \eta G.
\end{equation*}
Putting these pieces together yields that 
\begin{equation*}
\|\su_1^0 - \su^\star\| \ \leq \ \|\su_0^0 - \su^\star\| + \eta G + \frac{D_\su}{10(nS)^{3/2}} \ \leq \ D_\su + \eta G + \frac{D_\su}{10(nS)^{3/2}}.    
\end{equation*}
Repeating the above argument, we have
\begin{eqnarray*}
\|\su_t^s - \su^\star\| & \leq & \|\su_{t-1}^s - \su^\star\| + \eta G + \frac{D_\su}{10(nS)^{3/2}} \ \leq \ \|\su_0^s - \su^\star\| + t\left(\eta G + \frac{D_\su}{10(nS)^{3/2}}\right) \\
& \leq & \|\su_0^0 - \su^\star\| + (ns + t)\left(\eta G + \frac{D_\su}{10(nS)^{3/2}}\right),   
\end{eqnarray*}
and 
\begin{equation*}
\|\bar{\su}_{t-1}^s - \su^\star\| \ \leq \ \|\su_{t-1}^s - \su^\star\| + \eta G \ \leq \ \|\su_0^0 - \su^\star\| + (ns + t)\left(\eta G + \frac{D_\su}{10(nS)^{3/2}}\right). 
\end{equation*}
Since $1 \leq t \leq n$ and $0 \leq s \leq S-1$, we have $ns + t \leq nS$. This together with the previous two inequalities, $\|\su_0^0 - \su^\star\| \leq D_\su$ and $nS \geq 1$ yields the desired result. 
\end{proof}

\subsection{Proof of Theorem~\ref{Theorem:SPPRR}}
Recall that the iterate $\bar{\su}_t^s$ is defined as the fixed point of the operator $T(\su) := \PCal_{\Lambda \times \Gamma}(\su_t^s - \eta F_{\sigma_t^s}(\su))$. By using the triangle inequality, we have
\begin{equation*}
\|\su_t^s - \su^\star\|^2 \ \leq \ \|\su_t^s - \bar{\su}_{t-1}^s\|^2 + 2\|\su_t^s - \bar{\su}_{t-1}^s\|\|\bar{\su}_{t-1}^s - \su^\star\| + \|\bar{\su}_{t-1}^s - \su^\star\|^2.  
\end{equation*}
We claim that the following error criterion holds true,   
\begin{equation}\label{Def:SPPRR-error-main}
\|\su_t^s - \bar{\su}_{t-1}^s\| = \|\su_{M, t-1}^s - \bar{\su}_{t-1}^s\| \ \leq \ \frac{D_\su}{10(nS)^{3/2}}. 
\end{equation}
By Lemma~\ref{Lemma:SPPRR-iterate-bound}, we have $\|\bar{\su}_{t-1}^s - \su^\star\| \leq 2D_\su + \eta GnS$. Putting these pieces together yields that 
\begin{equation}\label{inequality-SPPRR-main-first}
\|\su_t^s - \su^\star\|^2 \ \leq \ \|\bar{\su}_{t-1}^s - \su^\star\|^2 + \frac{2D_\su^2 + \eta GD_\su nS}{5(nS)^{3/2}} + \frac{D_\su^2}{100(nS)^3}.  
\end{equation}
By definition of $\bar{\su}_t^s$, we have
\begin{equation*}
\|\bar{\su}_{t-1}^s - \su^\star\|^2 \ \leq \ \|\su_{t-1}^s - \su^\star\|^2 - 2\eta\langle F_{\sigma^{s}_{t-1}}(\bar{\su}_{t-1}^s), \bar{\su}_{t-1}^s - \su^\star\rangle. \end{equation*}
By the definition of $F_i$, we have
\begin{equation}\label{inequality-SPPRR-main-second}
\|\bar{\su}_{t-1}^s - \su^\star\|^2 \ \leq \ \|\su_{t-1}^s - \su^\star\|^2 - 2\eta\left(L_{\sigma_{t-1}^s}(\bar{\lambda}_{t-1}^s, \bar{\beta}_{t-1}^s, \gamma^\star) - L_{\sigma_{t-1}^s}(\lambda^\star, \beta^\star, \bar{\gamma}_{t-1}^s)\right). 
\end{equation}
Since $F_i$ is bounded by $G$ for all $i \in [n]$, the function $L$ is $G$-Lipschitz over $\Lambda \times \Gamma$. Thus, we have
\begin{eqnarray}\label{inequality-SPPRR-main-third}
& & \left|(L_{\sigma_{t-1}^s}(\bar{\lambda}_{t-1}^s, \bar{\beta}_{t-1}^s, \gamma^\star) - L_{\sigma_{t-1}^s}(\lambda^\star, \beta^\star, \bar{\gamma}_{t-1}^s)) - (L_{\sigma_{t-1}^s}(\lambda_t^s, \beta_t^s, \gamma^\star) - L_{\sigma_{t-1}^s}(\lambda^\star, \beta^\star, \gamma_t^s))\right| \\ 
& \leq & G\|\su_t^s - \bar{\su}_{t-1}^s\| \ \overset{\textnormal{Eq.~\eqref{Def:SPPRR-error-main}}}{\leq} \frac{GD_\su}{10(nS)^{3/2}}. \nonumber
\end{eqnarray} 
Combining Eq.~\eqref{inequality-SPPRR-main-first}, Eq.~\eqref{inequality-SPPRR-main-second} and Eq.~\eqref{inequality-SPPRR-main-third} yields that 
\begin{eqnarray*}
\|\su_t^s - \su^\star\|^2 & \leq & \|\su_{t-1}^s - \su^\star\|^2 - 2\eta\left(L_{\sigma_{t-1}^s}(\lambda_t^s, \beta_t^s, \gamma^\star) - L_{\sigma_{t-1}^s}(\lambda^\star, \beta^\star, \gamma_t^s) - \frac{GD_\su}{10(nS)^{3/2}}\right) \\
& & + \frac{2D_\su^2 + \eta GD_\su nS}{5(nS)^{3/2}} + \frac{D_\su^2}{100(nS)^3}.  
\end{eqnarray*}
Taking the expectation of both sides and using Lemma~\ref{Lemma:SPPRR-coupling}, we have
\begin{eqnarray*}
\EE[\|\su_t^s - \su^\star\|^2] & \leq & \EE[\|\su_{t-1}^s - \su^\star\|^2] - 2\eta\left(\EE[\Delta(\su_t^s)] - 2\eta G^2 - \frac{GD_\su}{5\sqrt{nS}} - \frac{GD_\su}{10(nS)^{3/2}}\right) \\
& & + \frac{2D_\su^2 + \eta GD_\su nS}{5(nS)^{3/2}} + \frac{D_\su^2}{100(nS)^3}.  
\end{eqnarray*}
Summing the above inequality up over $t = 1, 2, \ldots, n$ and $s = 0, 1, \ldots, S-1$ and using the fact that $nS \geq 1$, we conclude that 
\begin{eqnarray*}
\frac{1}{nS}\sum_{s=0}^{S-1}\sum_{t=1}^n \EE[\Delta(\su_t^s)] & \leq & \frac{\|\su_0^0 - \su^\star\|^2}{2\eta nS} + 2\eta G^2 + \frac{GD_\su}{5\sqrt{nS}} + \frac{GD_\su}{10(nS)^{3/2}} + \frac{D_\su^2}{5\eta(nS)^{3/2}} + \frac{GD_\su}{10\sqrt{nS}} + \frac{D_\su^2}{200\eta(nS)^3} \\
& \leq & \frac{D_\su^2}{2\eta nS} + 2\eta G^2 + \frac{2GD_\su}{5\sqrt{nS}} + \frac{D_\su^2}{4\eta(nS)^{3/2}} \\
& \leq & \frac{3D_\su^2}{4\eta nS} + 2\eta G^2 + \frac{2GD_\su}{5\sqrt{nS}}.  
\end{eqnarray*}
By the convexity of $\Delta(\cdot)$ and the definition of $\tilde{\su}^S$, we have
\begin{equation*}
\EE[\Delta(\tilde{\su}^S)] \ \leq \ \frac{1}{nS}\sum_{s=0}^{S-1}\sum_{t=1}^n \EE[\Delta(\su_t^s)]. 
\end{equation*}
By the definition of $\eta$, we have
\begin{eqnarray*}
\frac{3D_\su^2}{4\eta nS} + 2\eta G^2 & \leq & \frac{3D_\su^2}{4nS}\max\left\{2(\ell+\kappa+1), \frac{4G^2}{\epsilon}\right\} + \frac{\epsilon}{2} \\ 
& \leq & \frac{3D_\su^2}{4nS}\left(2(\ell+\kappa+1) + \frac{4G^2}{\epsilon}\right) + \frac{\epsilon}{2} \\ 
& = & \frac{2(\ell+\kappa+1)D_\su^2}{nS} + \frac{3G^2D_\su^2}{\epsilon nS} + \frac{\epsilon}{2}. 
\end{eqnarray*}
Putting these pieces together yields that 
\begin{equation*}
\EE[\Delta(\tilde{\su}^S)] \ \leq \ \frac{2(\ell+\kappa+1)D_\su^2}{nS} + + \frac{3G^2D_\su^2}{\epsilon nS} + \frac{\epsilon}{2} + \frac{2GD_\su}{5\sqrt{nS}}. 
\end{equation*}
By the definition of $S$, we have $\EE[\Delta(\tilde{\su}^S)] \leq \epsilon$. Then it remains to prove the claim for Eq.~\eqref{Def:SPPRR-error-main}. 

\paragraph{Proof of Eq.~\eqref{Def:SPPRR-error-main}.} From the scheme of the SPPRR algorithm, we have $\su_{0, t-1}^s = \su_{t-1}^s$ and $\bar{\su}_{t-1}^s = \PCal_{\Lambda \times \Gamma}(\su_{t-1}^s - \eta F_{\sigma_{t-1}^s}(\bar{\su}_{t-1}^s))$. Thus, we have
\begin{equation*}
\|\su_{0, t-1}^s - \bar{\su}_{t-1}^s\| \ \leq \ \|\su_{t-1}^s - (\su_{t-1}^s - \eta F_{\sigma_{t-1}^s}(\bar{\su}_{t-1}^s))\| \ \leq \ \eta G \ \leq \ \frac{\epsilon}{4G}.
\end{equation*}
Since $0 < \eta < \frac{1}{2(\ell+\kappa+1)}$ and the operator $F_i$ is $(\ell+\kappa+1)$-Lipschitz, we have
\begin{equation*}
\|\PCal_{\Lambda \times \Gamma}(\su_{t-1}^s - \eta F_{\sigma_{t-1}^s}(\su)) - \PCal_{\Lambda \times \Gamma}(\su_{t-1}^s - \eta F_{\sigma_{t-1}^s}(\sv))\| \ \leq \ \eta\|F_{\sigma_{t-1}^s}(\su) - F_{\sigma_{t-1}^s}(\sv)\| \ \leq \ \frac{1}{2}\|\su - \sv\|. 
\end{equation*} 
This implies that the operator $T(\cdot) = \PCal_{\Lambda \times \Gamma}(\su_{t-1}^s - \eta F_{\sigma_{t-1}^s}(\cdot))$ is a $\frac{1}{2}$-contraction. Then, the fixed-point iteration scheme implies that 
\begin{equation*}
\|\su_{M, t-1}^s - \bar{\su}_{t-1}^s\| \ \leq \ \frac{\|\su_{0, t-1}^s - \bar{\su}_{t-1}^s\|}{2^M} \ \leq \ \frac{\epsilon}{2^{M+2}G}. 
\end{equation*}
By the definition of $M$, we have $2^M \geq 10nS$. This together with the definition of $S$ yields that 
\begin{equation*}
\|\su_{M, t-1}^s - \bar{\su}_{t-1}^s\| \ \leq \ \frac{\epsilon}{40GnS} \ \leq \ \frac{D_\su}{10(nS)^{3/2}},    
\end{equation*}
which implies Eq.~\eqref{Def:SPPRR-error-main}. 

\section{Additional Experimental Results}\label{sec:appendix-exp}
In this section, we first provide details on our experimental setup and more experimental results on SEVR and SPPRR on synthetic and LIBSVM datasets. We then present a performance comparison of ERM, DRSL with $f$-divergences, and DRSL with Wasserstein metrics on  the standard datasets from LIBSVM Binary\footnote{\url{https://www.csie.ntu.edu.tw/~cjlin/libsvmtools/datasets/binary.html}.}~\citep{Chang-2011-Libsvm}.

\subsection{Datasets Setup}
For the synthetic data experiments, we first generate $\beta^\star \in \br^d$ where $\beta^\star \sim \NCal(0, I_d)$. We then generate inputs $\widehat{\x}_i \sim \NCal(0, I_d)$. 
The label $\widehat{y}_i$ is defined as $\widehat{y}_i = \text{sign}(\langle \widehat{\x}_i, \beta^{\star} \rangle + {\epsilon}_i)$, where ${\epsilon}_i \sim  \NCal(0, 0.2 \cdot I_d)$. We consider $n \in \{5\times 10^3, 1 \times 10^5, 5 \times 10^4\}$ and $d=100$ for the synthetic datasets. For real data, we use the standard datasets from LIBSVM Binary~\citep{Chang-2011-Libsvm}.

\subsection{Parameter Setup} 
We set the parameter $\{\kappa, \delta\}$ for distributionally robust logistic regression as $\kappa = 1$, $\delta = 0.1$ for our experiments. For SSG, SGDA, and ExtraSGDA, we use a diminishing step size: $\eta_{t} \propto 1/t^{1/2}$ following the guidelines of their theoretical results. We use a constant step size for SEVR and SPPRR following our theoretical results. We compare the performance of different methods in terms of the number of data passes. The default mini-batch size for SEVR is $B=32$, and the default number of fixed-point iterations for SPPRR is $M=2$.

\subsection{Convergence Results of SEVR and SPPRR} 
(1). In Figure~\ref{fig:exp-comparison-with-min-baseline-appendix}, We compare our proposed algorithms with existing algorithms for solving convex reformulations of WDRSL on two synthetic datasets ($n=5,000$ and $n=10,000$) and the w8a dataset from LIBSVM. (2). In Figure~\ref{fig:exp-comparison-with-minmax-baseline-appendix}, We compare our proposed algorithms with existing gradient-based algorithms for min-max optimization  on two synthetic datasets ($n=5,000$ and $n=10,000$) and the w8a dataset from LIBSVM. (3). In Figure~\ref{fig:exp-comparison-batchsize-fix-point-iteration-M-appendix}, We study the effect of batch size in SEVR and the effect of number of fixed-point iterations in SPPRR  on the synthetic dataset ($n=10,000$) and the w8a dataset from LIBSVM.. 

\begin{figure*}[!ht]
\centering
\includegraphics[width=0.99\textwidth]{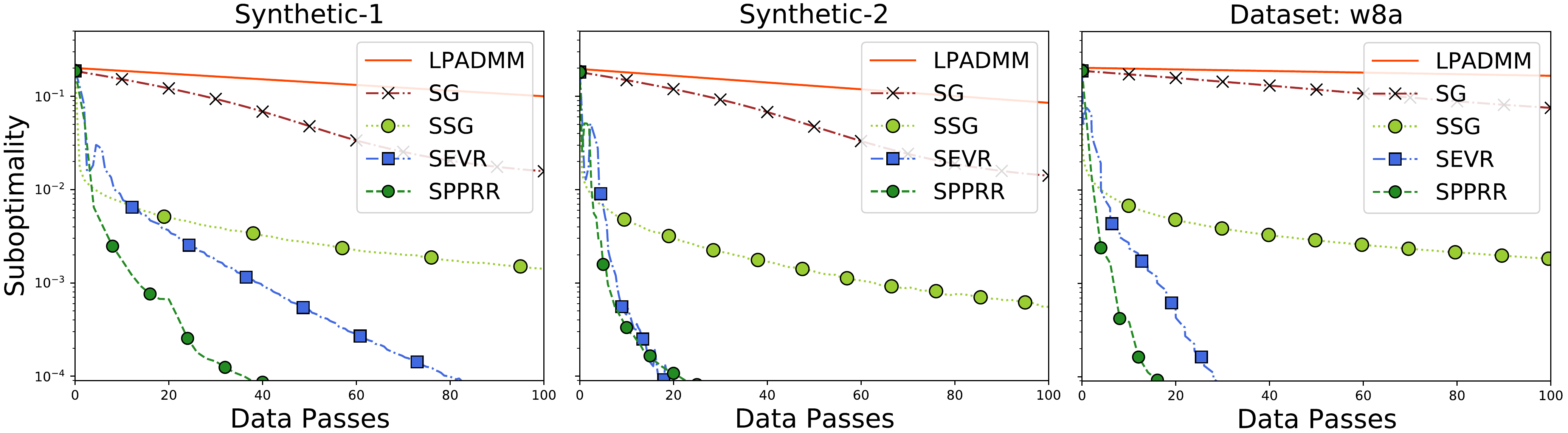}
\caption{\footnotesize{Comparison of SEVR and SPPRR with (minimization) baseline methods for solving distributionally robust logistic regression on synthetic datasets and the w8a dataset from LIBSVM. The horizontal axis represents the number of data passes, and the vertical axis represents the suboptimality $(f(\lambda^{(t)}, \beta^{(t)}) - f^{\star})$. We remark that LPADMM takes multiple data passes in every iteration as it needs to solve a nonlinear constrained problem in each iteration, which accounts for its poor performance.}}\label{fig:exp-comparison-with-min-baseline-appendix}
\end{figure*}

\begin{figure*}[!ht]
\centering
\includegraphics[width=0.99\textwidth]{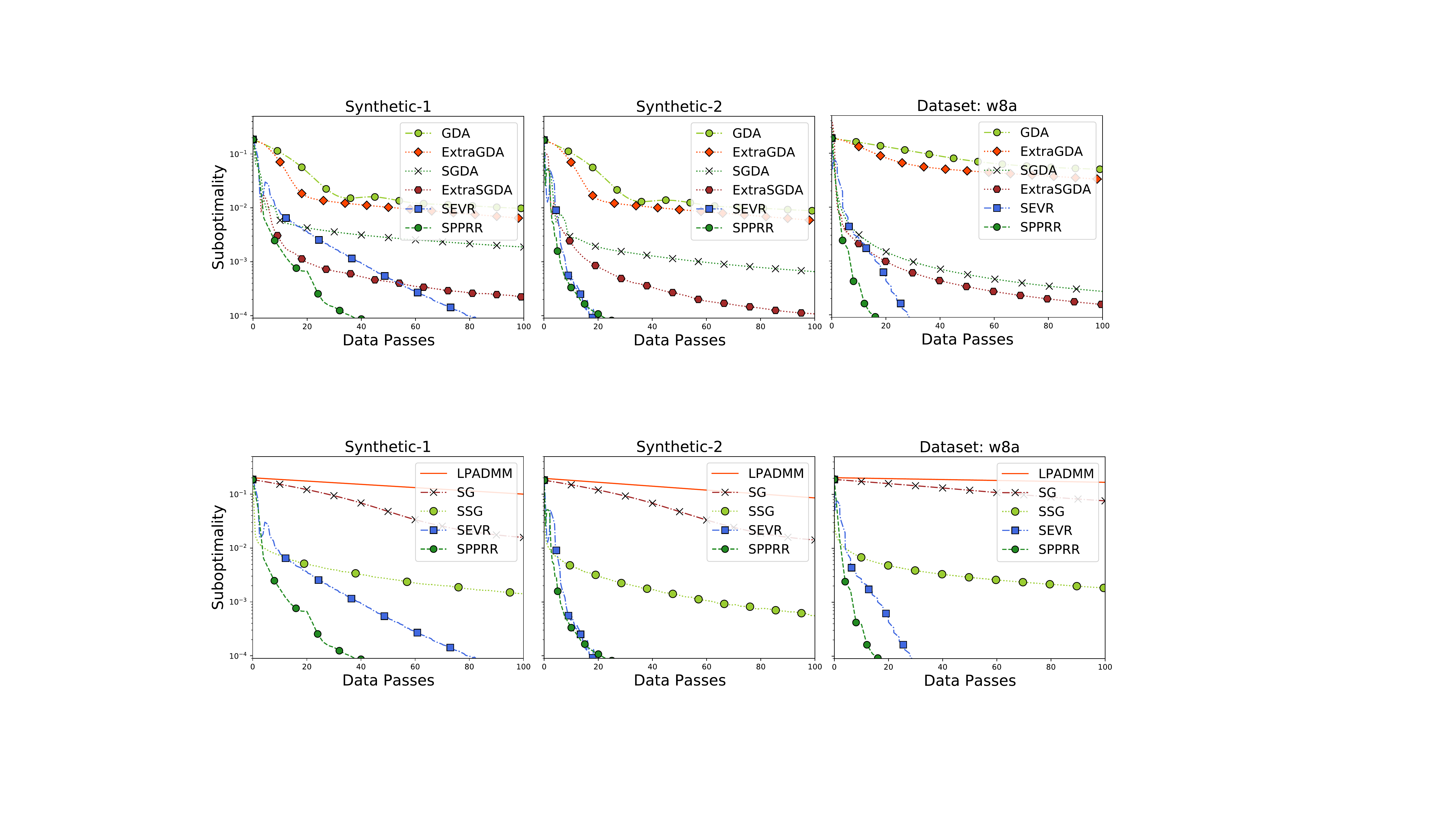}
\caption{\footnotesize{Comparison of SEVR and SPPRR with (minmax) baseline methods for solving distributionally robust logistic regression on synthetic datasets and the w8a dataset from LIBSVM. The horizontal axis represents the number of data passes, and the vertical axis represents the suboptimality $(f(\lambda^{(t)}, \beta^{(t)}) - f^{\star})$.}}
\label{fig:exp-comparison-with-minmax-baseline-appendix}
\end{figure*}

\begin{figure*}[ht!]
\centering
\subfigure[Batch size for SEVR ]{\includegraphics[width=0.75\textwidth]{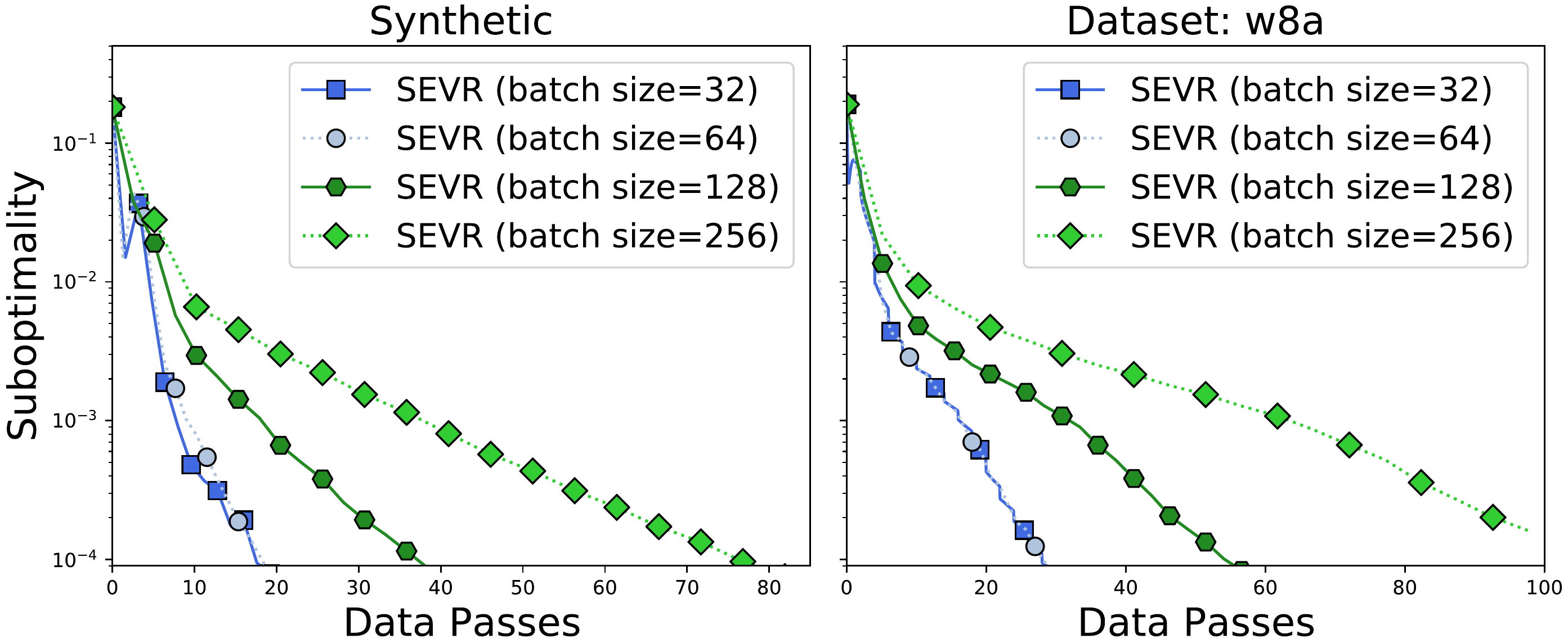}}
\subfigure[Number of fixed-point iteration for SPPRR]{\includegraphics[width=0.75\textwidth]{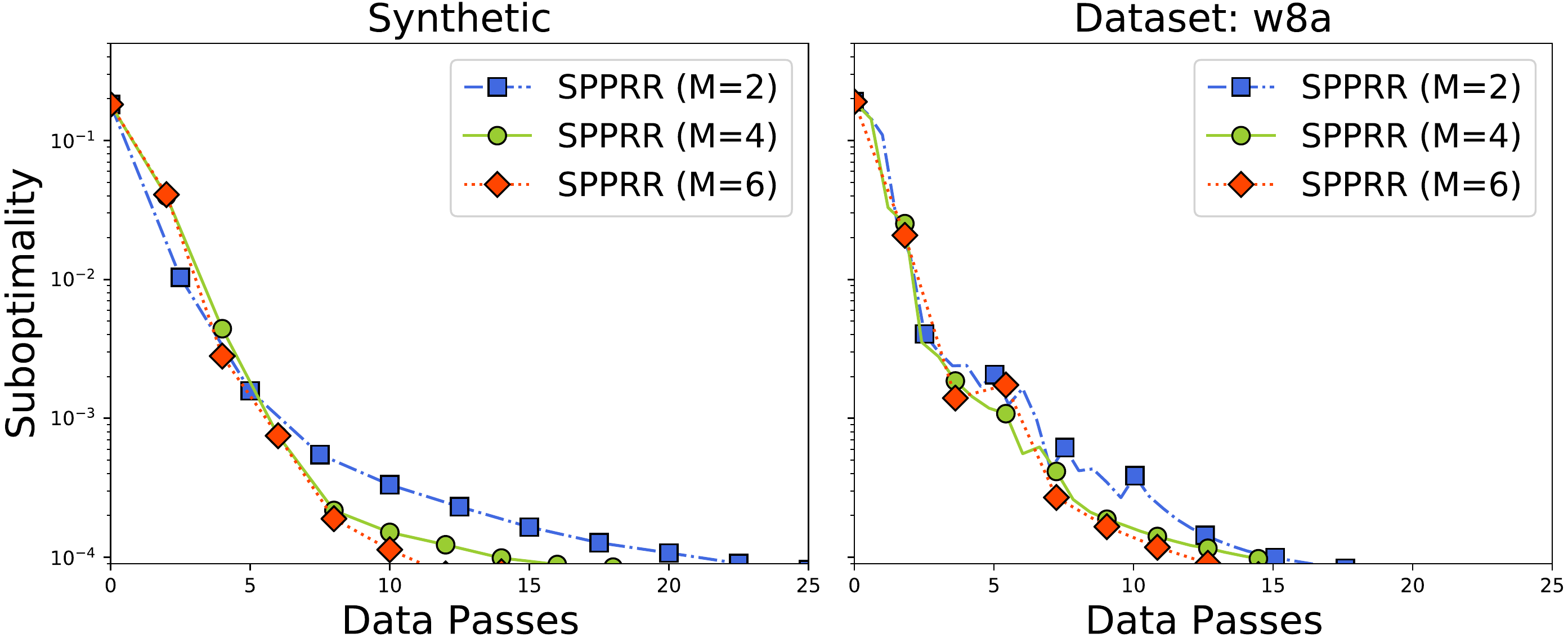}}
\caption{(\textbf{a})\,Comparing SEVR across batch sizes $B \in \{32, 64, 128, 256\}$ for solving WDRSL with logistic losses. (\textbf{b})\,Comparing SPPRR with a different number of fixed-point iterations $M \in \{2, 4, 6\}$ for solving WDRSL with logistic losses.}\label{fig:exp-comparison-batchsize-fix-point-iteration-M-appendix}
\end{figure*}

\newpage
\subsection{Performance Comparison under Perturbations}\label{sec:robust-loss-compare-appendix}
We compare the performance of three approaches in terms of robust loss during the test-time, including (1). standard ERM with logloss loss; (2). DRSL with $f$-divergences using logloss loss~\cite{Namkoong-2017-Variance}; (3). DRSL with Wasserstein distance  using logloss loss studied in this paper. We consider two types of robust loss for evaluation, including the worst-case expected loss under Wasserstein distance and robust loss under $f$-divergence. We first define the logloss function as $\ell_{\beta}(\x, y) = \log(1 + \exp(-{y}\langle {\x}, \beta\rangle))$. We evaluate the performance on test dataset $\{\x_i, y_i\}_{i=1}^{n}$. 
Then the robust loss with Wasserstein distance is defined as
\begin{equation}\label{eq:appendix-robust-loss-WDRSL}
R^{W}(\beta, \widehat{\PP}_n, \delta) = \sup_{\PP \in \BB_\delta(\widehat{\PP}_n)}  \EE^\PP\left[\ell_{\beta}(\x, y)\right],
\end{equation}
where $\widehat{\PP}_n = (1/n)\sum_{i=1}^n \delta_{(\widehat{\x}_i, \widehat{y}_i)}$, $\BB_\delta(\widehat{\PP}_n)=\{\PP \mid \WCal(\PP, \widehat{\PP}_n) \leq \delta\}$, and $\WCal(\mu, \nu) = \inf_{\pi \in \Pi(\mu, \nu)} \int_{\ZCal \times \ZCal} c(\z, \z') \pi(d\z, d\z')$ with the metric $c(\z, \z')=\|\x-\x'\| +  \kappa|y-y'|$. 

The robust loss under $f$-divergence is defined as  
\begin{equation}\label{eq:appendix-robust-loss-f-div-DRSL}
R^{f}(\beta, \widehat{Q}_n, \rho) = \sup_{Q \in D_{\phi}({Q}\parallel\widehat{Q}_n)\leq \rho/n }  \EE^{Q}\left[\ell_{\beta}(\x, y)\right],
\end{equation}
where $\widehat{Q}_n$ denotes the empirical distribution of sample $\{\x_1, \dots, \x_n\}$ and $Q$ has the same support as $\widehat{Q}_n$. We consider the $\chi^{2}$-divergence studied in \cite{Namkoong-2017-Variance}, where $D_{\phi}(P\parallel Q)$ $D_{\phi}(P\parallel Q) = \int_{\XCal} \phi(dP/dQ)dQ$ and $\phi(t) = (1/2)\cdot(t-1)^{2}$. 

\paragraph{Parameter setup.} We use similar synthetic datasets which are introduced in Section~\ref{sec:experiment}: we first generate $\beta^\star \in \br^d$ where $\beta^\star \sim \NCal(0, I_d)$. We then generate training/test inputs and labels as follows: $\widehat{\x}_i \sim \NCal(0, I_d)$, $\widehat{y}_i = \text{sign}(\langle \widehat{\x}_i, \beta^{\star} \rangle + {\epsilon}_i)$, where ${\epsilon}_i \sim  \NCal(0, 0.2 \cdot I_d)$, $i=1, \dots, n_{\text{train}}$, ${\x}_j \sim \NCal(0, I_d)$, ${y}_j = \text{sign}(\langle \widehat{\x}_i, \beta^{\star} \rangle)$,  $j=1, \dots, n_{\text{test}}$. We set $n_{\text{train}}\in\{1,000, 5,000\}$ and $n_{\text{test}}=10,000$.

For standard ERM, we optimize $\beta$ using standard logistic regression. We set $\rho=10$ for $f$-divergence DSRL and set  $\delta=0.1, \kappa=1.0$ for DSRL with Wasserstein distance. We evaluate the robust losses of three models by varying parameters $\delta$ and $\rho$, and the results are summarized in Figure~\ref{fig:exp-comparison-robust-loss-f-appendix} and Figure~\ref{fig:exp-comparison-robust-loss-W-appendix}. 

\begin{figure*}[!ht]
\centering
\subfigure[$n=1,000$]{\includegraphics[width=0.45\textwidth]{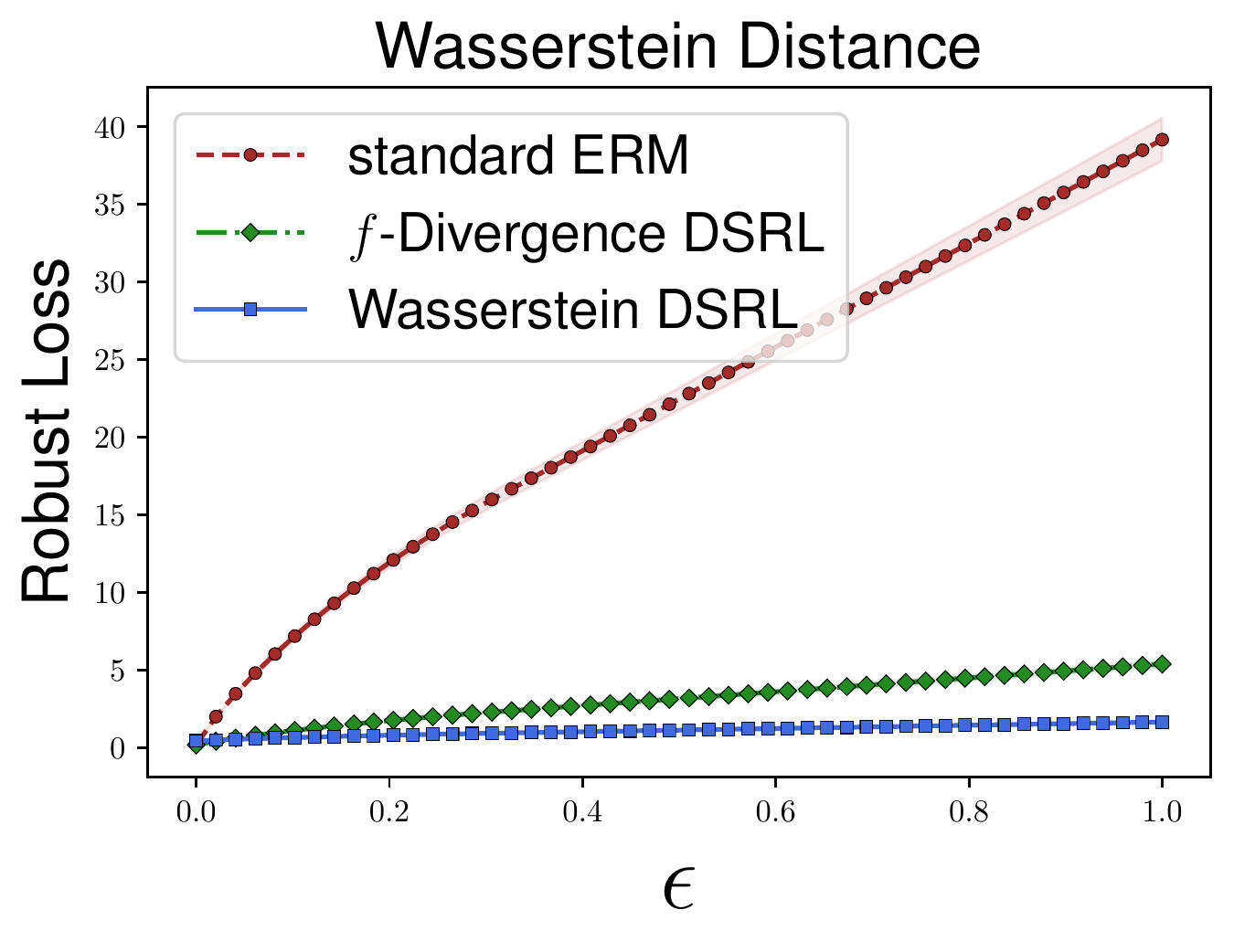}
\includegraphics[width=0.45\textwidth]{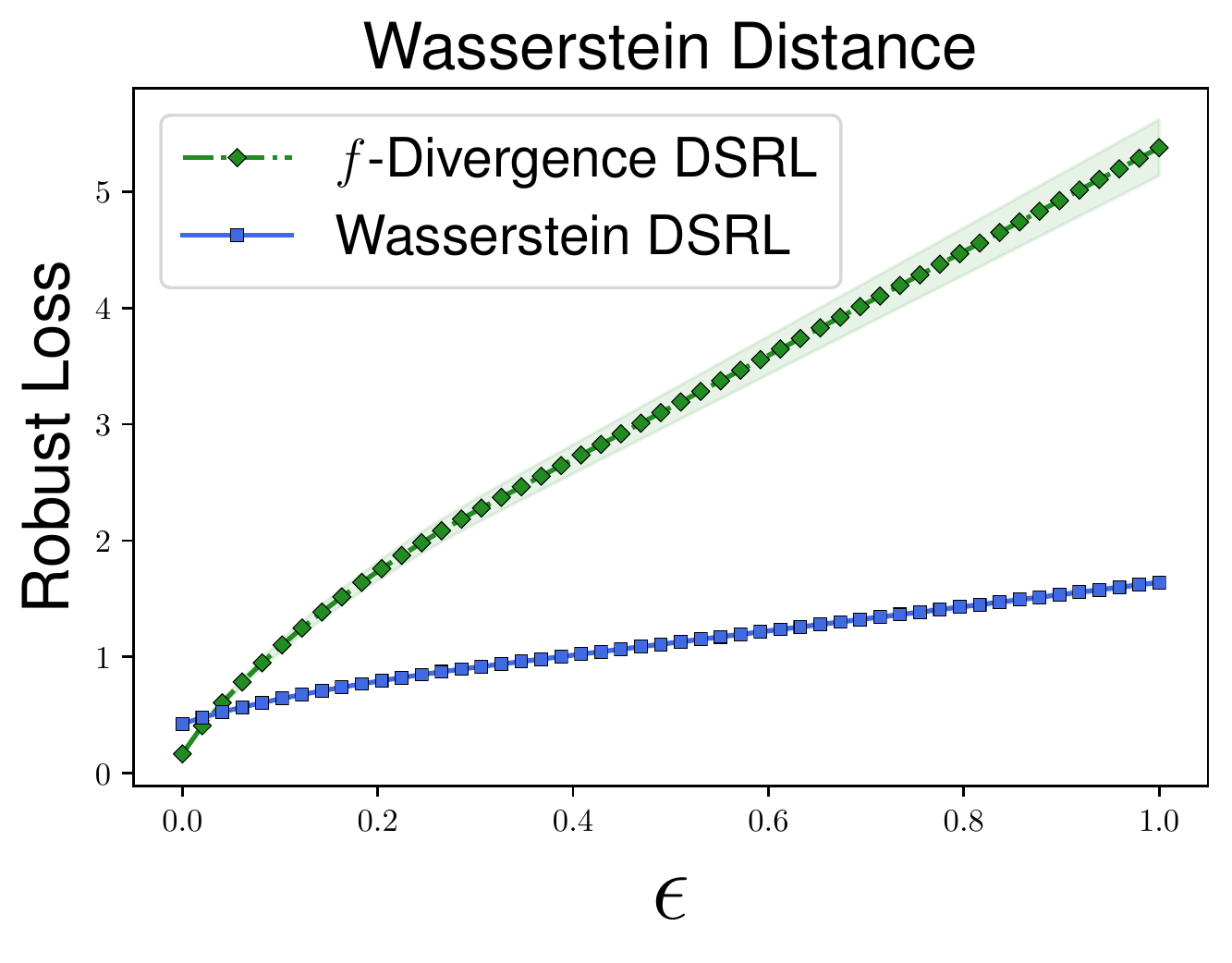}}
\subfigure[$n=2,000$]{\includegraphics[width=0.45\textwidth]{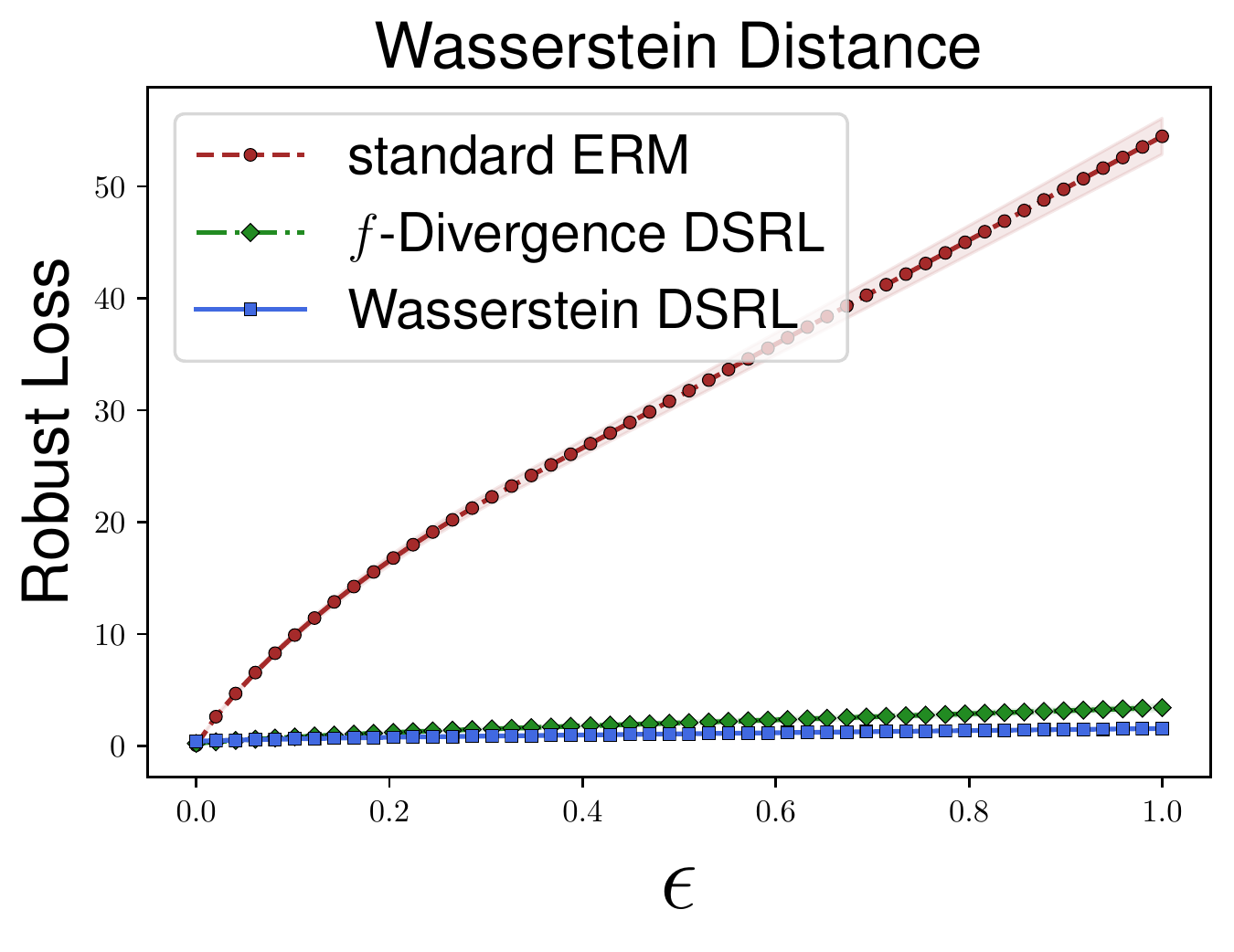}
\includegraphics[width=0.45\textwidth]{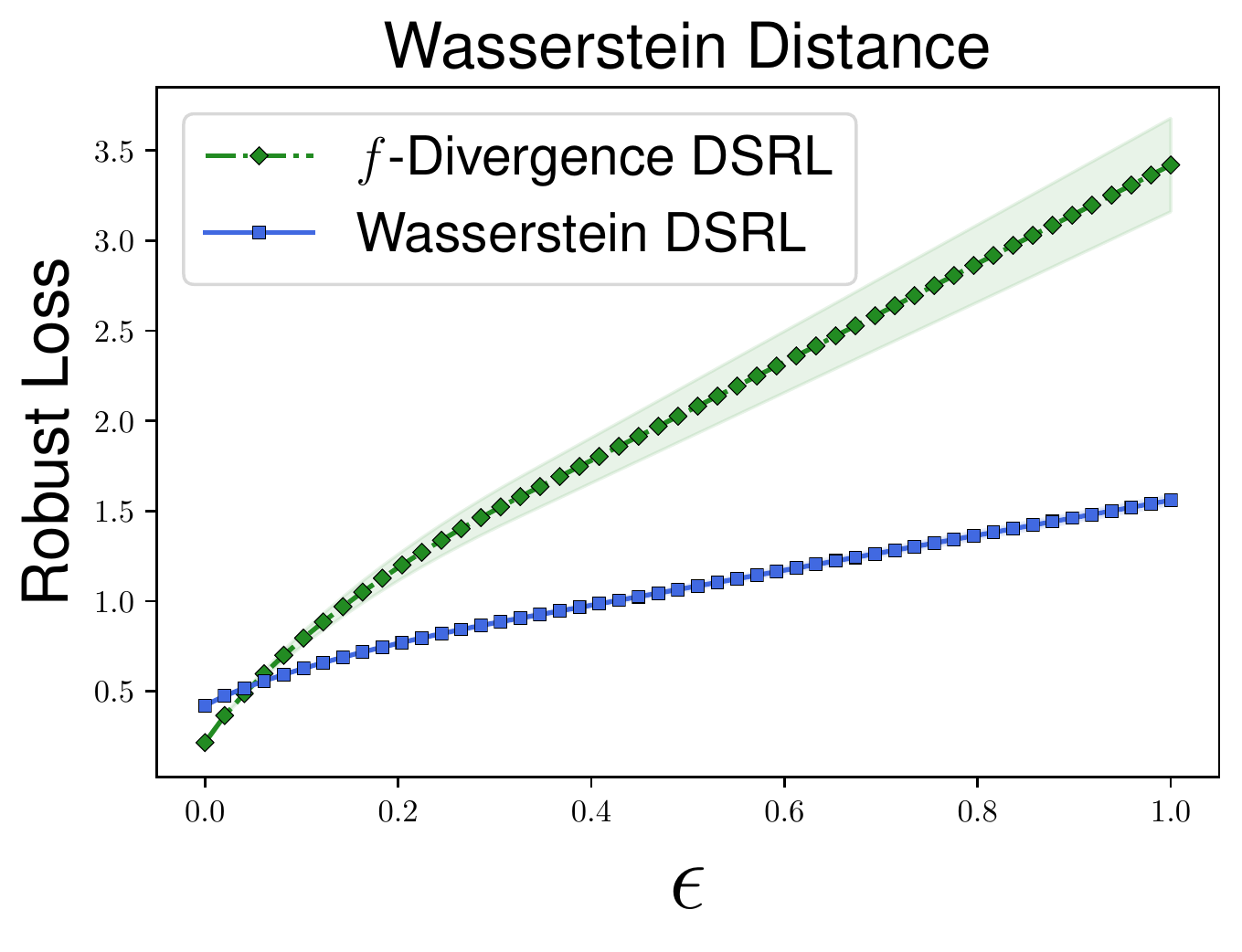}}
\caption{Comparison of the robustness of the solution to attacks in a Wasserstein ball of radius $\epsilon$, where $\epsilon$ is defined in Eq.~\eqref{eq:appendix-robust-loss-WDRSL}. (\textbf{a})\, $n_{\text{train}}=1,000$. (\textbf{b})\, $n_{\text{train}}=2,000$.}\label{fig:exp-comparison-robust-loss-f-appendix}
\end{figure*}

\begin{figure*}[ht!]
\centering
\subfigure[$n=1,000$]{\includegraphics[width=0.45\textwidth]{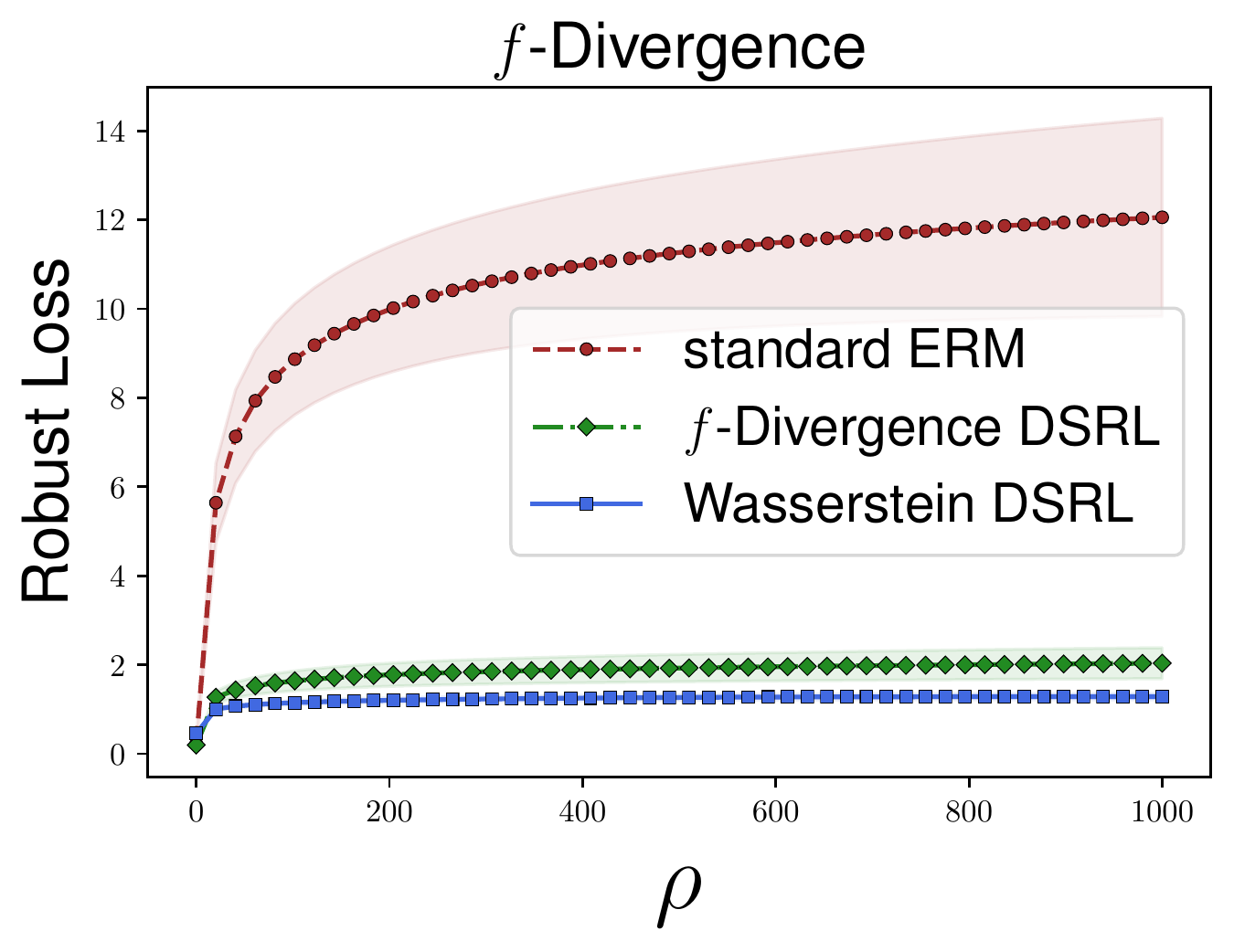}
\includegraphics[width=0.45\textwidth]{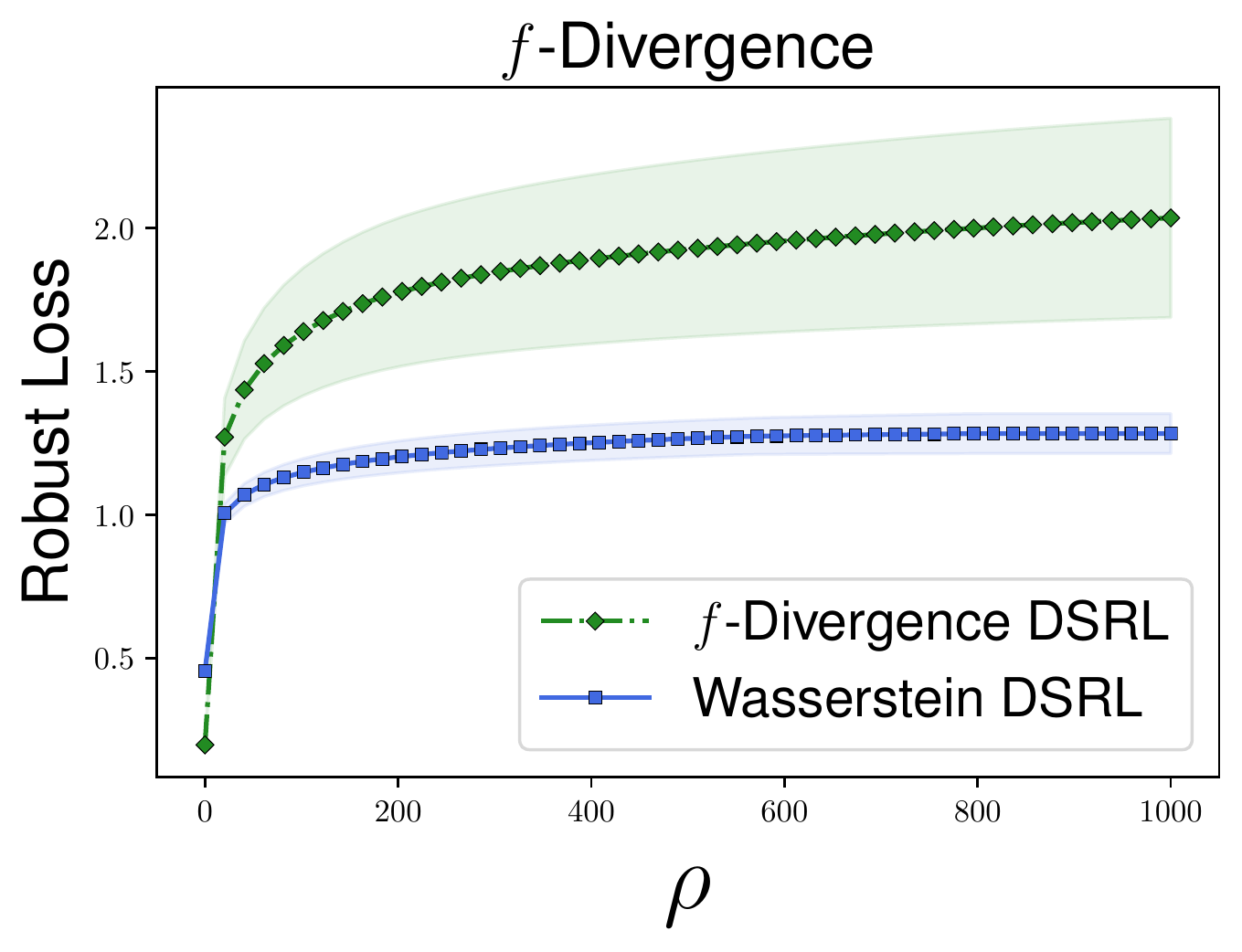}}
\subfigure[$n=2,000$]{\includegraphics[width=0.45\textwidth]{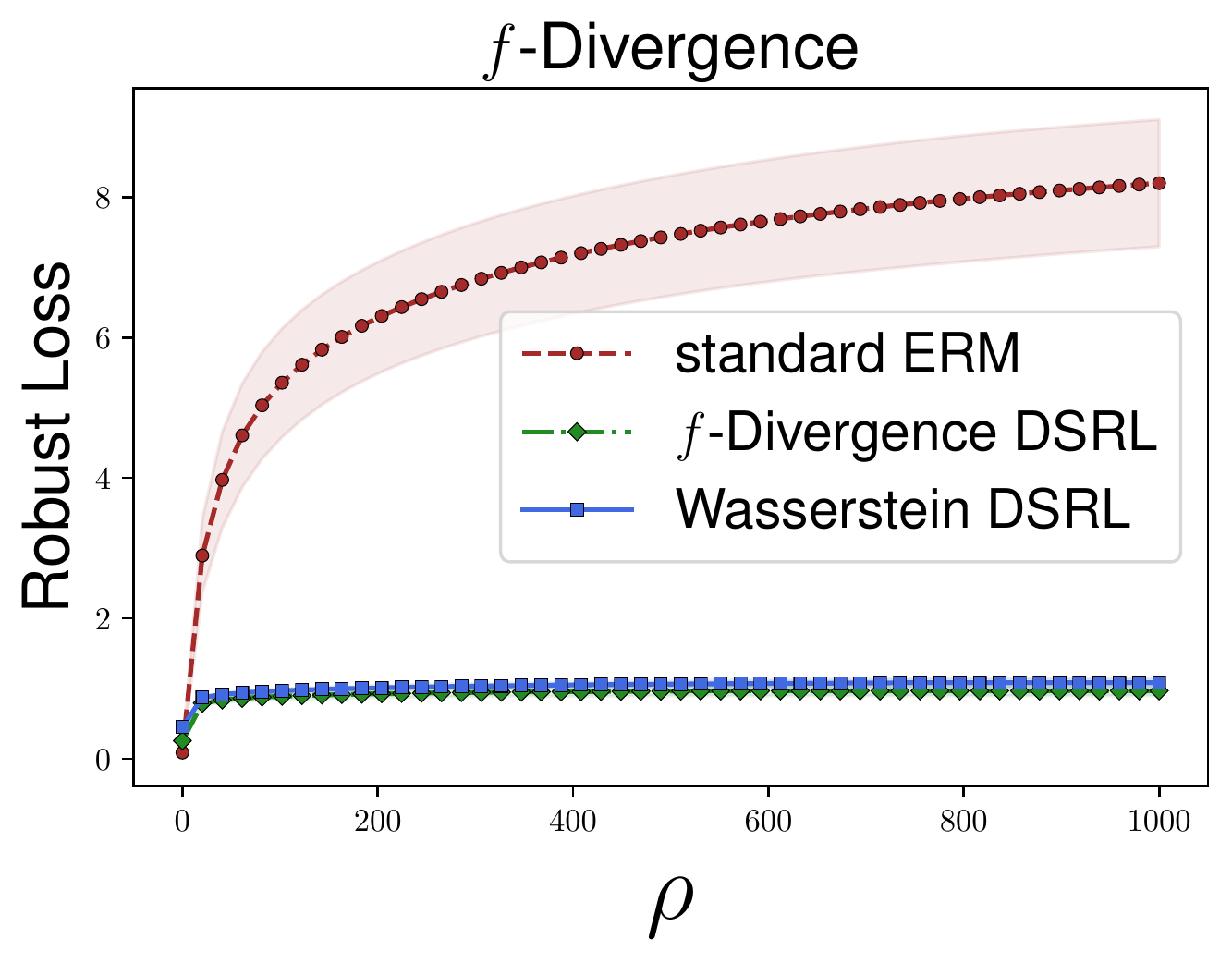}
\includegraphics[width=0.45\textwidth]{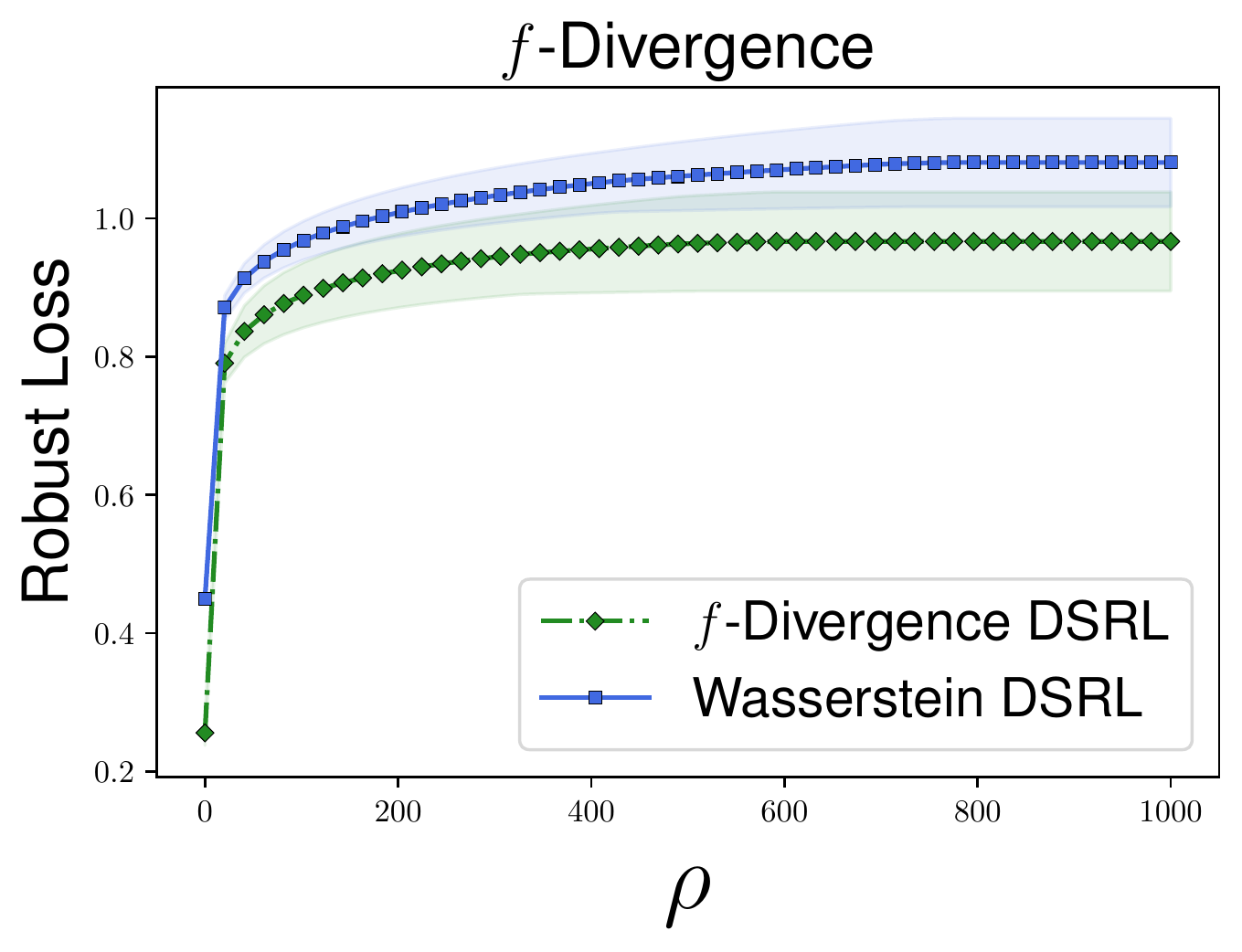}}
\caption{Comparison of the robustness of the solution to attacks in $f$-divergence, where $\rho$ (defined in Eq.~\eqref{eq:appendix-robust-loss-f-div-DRSL}) is the parameter to define the uncertainty set $D_{\phi}({P}\parallel\mathbf{1}_n/n)\leq \rho/n$. (\textbf{a})\, $n_{\text{train}}=1,000$. (\textbf{b})\, $n_{\text{train}}=2,000$.}\label{fig:exp-comparison-robust-loss-W-appendix}
\end{figure*}

For both robust loss with Wasserstein distance and $f$-divergences, WDSRL and DSRL with  $f$-divergences achieve significant better performance compared with ERM. For the Wasserstein distance robust loss, we can find that WDSRL performs much better than DSRL with  $f$-divergences when perturbation radius $\epsilon$ is large. As shown in Figure~\ref{fig:exp-comparison-robust-loss-W-appendix},  we observe that when the number of training data points is small (i.e., $n=1,000$),  WDSRL achieves slightly better performance than DSRL with  $f$-divergences, and DSRL with  $f$-divergences performs slightly better than WDSRL when training size is large (i.e., $n=2,000$).

\end{document}


%

%

\onecolumn
\aistatstitle{Instructions for Paper Submissions to AISTATS 2022: \\
Supplementary Materials}

\section{FORMATTING INSTRUCTIONS}

To prepare a supplementary pdf file, we ask the authors to use \texttt{aistats2022.sty} as a style file and to follow the same formatting instructions as in the main paper.
The only difference is that the supplementary material must be in a \emph{single-column} format.
You can use \texttt{supplement.tex} in our starter pack as a starting point, or append the supplementary content to the main paper and split the final PDF into two separate files.

Note that reviewers are under no obligation to examine your supplementary material.

\section{MISSING PROOFS}

The supplementary materials may contain detailed proofs of the results that are missing in the main paper.

\subsection{Proof of Lemma 3}

\textit{In this section, we present the detailed proof of Lemma 3 and then [ ... ]}

\section{ADDITIONAL EXPERIMENTS}

If you have additional experimental results, you may include them in the supplementary materials.

\subsection{The Effect of Regularization Parameter}

\textit{Our algorithm depends on the regularization parameter $\lambda$. Figure 1 below illustrates the effect of this parameter on the performance of our algorithm. As we can see, [ ... ]}

\vfill